\documentclass{article} % For LaTeX2e

\usepackage{caption}
\usepackage{subcaption}
\usepackage{jmlr2e}
\usepackage{url}
\usepackage{drago}
\usepackage[on]{editing}
\usepackage{courier}

\title{Causal Algorithmic Recourse: Foundations and Methods}

\author{
    \!\!Drago Ple\v{c}ko \\
    \addr{
        Department of Statistics \& Data Science\\
        UCLA\\
        Los Angeles, CA 90095, USA \\
        \texttt{drago@stat.ucla.edu}
    }
    \AND
    Collin Wang \\ Elias Bareinboim \\
    \addr{
        Department of Computer Science\\
        Columbia University\\
        New York, NY 10027, USA \\
        \texttt{\{clw2180, eb3304\}@columbia.edu}
    }
}

\usepackage{selectp}
\begin{document}
% \treport{25}{May}{2026}
\maketitle

\begin{abstract}
The trustworthiness of AI decision-making systems is increasingly important. A key feature of such systems is the ability to provide recommendations for how an individual may reverse a negative decision, a problem known as algorithmic recourse. Existing approaches treat recourse outcomes as counterfactuals of a fixed unit, ignoring that real-world recourse involves repeated decisions on the same individual under possibly different latent conditions. We develop a causal framework that models recourse as a process over pre- and post-intervention outcomes, allowing for partial stability and resampling of latent variables. We introduce post-recourse stability conditions that enable reasoning about recourse from observational data alone, and develop a copula-based algorithm for inferring the effects of recourse under these conditions. 
For settings where paired observations of the same individual before and after intervention are available (called \emph{recourse data}), we develop methods for inferring copula parameters and performing goodness-of-fit testing. When the copula model is rejected, we provide a distribution-free algorithm for learning recourse effects directly from recourse data. We demonstrate the value of the proposed methods on real and semi-synthetic datasets.
\end{abstract}

\section{Introduction}
Decision-making systems based on machine learning (ML) are being increasingly deployed in real-world settings with far-reaching implications to individuals and society more broadly. This includes hiring decisions, university admissions, law enforcement, credit lending, health care access, and finance \citep{khandani2010consumer,mahoney2007method,brennan2009evaluating}. With an ever larger number of decisions once made by humans now delegated to automated systems, increasing attention has been given to making AI systems trustworthy -- explainable, robust, and fair.

A hallmark feature expected from trustworthy AI systems is the ability to provide recommendations for individuals who wish to reverse a negative decision obtained from such a system. This challenge has been studied in the literature under the rubric of \textit{algorithmic recourse}. For example, an individual who obtained a negative decision when applying for a bank loan may be able to successfully reapply after increasing his/her savings \citep{caglayan2022does}; similarly, an individual who was declined surgical treatment due to increased risk of complications may wish to adjust his/her behavior and dietary habits and seek treatment again in hope of a positive decision \citep{arbous2001mortality}; further examples of such recourse settings are numerous. For automated systems that make decisions in ways not necessarily understandable by humans, providing recourse recommendations to individuals presents an important methodological challenge.

In this paper, we will develop a causal approach to handling questions of algorithmic recourse. To intuitively ground some of the key developments in the remainder of the paper, we begin by discussing a simple example:\phantom{texttexttexttexttexttexttexttexttexttexttexttexttexttexttexttexttext}\vspace{-0.15in}
\begin{wrapfigure}{r}{0.4\textwidth}
    \centering
    \vspace{0.1in}
    \begin{tikzpicture}[SCM,scale = 1.1]
        \pgfsetarrows{latex-latex};
            \newcommand{\xshift}{2}
            \newcommand{\yshift}{1.9}

            \node(ut1) at (1.5 * \xshift, 0.85 * \yshift)[label=above:{\footnotesize ability}, point, fill=gray!50];
            \node(ut2) at (2 * \xshift, 0.85 * \yshift)[label={[yshift=1.5pt]above:{\footnotesize stress}}, point, fill=gray!50];
            \node(ut3) at (2.5 * \xshift, 0.85 * \yshift)[label=above:{\footnotesize sleep}, point, fill=gray!50];
            \draw ($(ut1.west) + (-0.1,0.15)$) rectangle ($(ut3.east) + (0.1,-0.15)$) node[right, xshift=0.1cm, yshift=0.1cm] {$U_T$};
            
            \node(x) at (1 * \xshift, 0 * \yshift)[label=below:{$X$}, point];
            \node(t) at (2 * \xshift, 0 * \yshift)[label=below:{$T$}, point];
            \node(y) at (3 * \xshift, 0 * \yshift)[label=below:{$Y$}, point];
            
            \path (x) edge (t);
            \path (t) edge (y);
            \path (ut1) edge (t);
            \path (ut2) edge (t);
            \path (ut3) edge (t);
        \end{tikzpicture}
        \caption{Causal diagram for Ex.~\ref{ex:intro}.}
         \vspace{-0.3in}
        \label{fig:intro-ex}
\end{wrapfigure}
\begin{example}[Exam Repetition] \label{ex:intro}
    Students at a university are taking a mandatory exam. For preparation, students are allowed to attend office hours held by the teaching assistants (variable $X$, where $x_0$ represents that the student attended office hours, and $x_1$ if she/he did not). The test scores of students are denoted by $T$, a numeric value in the reals $\mathbbm{R}$. The outcome $Y$ of whether a student passed the exam is a simple threshold operation, $Y = \mathbbm{1}(T \geq t)$, where $t$ is fixed by the university every year. 

    A graphical description of how variables affect each other in this setting is shown in Fig.~\ref{fig:intro-ex}. In particular, latent (unobserved) variables that affect the student's test score $T$ are drawn in gray, including the student's ability, amount of stress in the last 2 weeks, and amount of sleep in the last 2 weeks.

    Consider now a student who did not attend office hours ($X = x_0$) and obtained a test score of $t' < t$, thus failing the exam. The student is told to re-take the exam after two weeks, and is encouraged to attend office hours, i.e., set $X = x_1$ instead of $x_0$ in the coming week.

    Let $u_T$ denote the latent noise variables that determined the test outcome $T$ for the student, together with $x_0$. In the first instance, we observed the student's test score $T$, which can be written as a function of the $X = x_0$ and the latent $u_T$, i.e., $T \gets f_T(x_0, u_T)$, and $T$ is realized as $t' = f_T(x_0, u_T)$. After two weeks, the student followed the recommendation of attending office hours, and now $X = x_1$. However, the latent noise variables that consist of ability, stress, and sleep levels may be different for the second attempt at taking the exam. Formally, the latent $u_T$ for the first attempt may differ from $u^*_T$ for the second attempt. Thus, in the second attempt, the evaluation of $f_T$ will be $f_T(x_1, u^*_T)$, and the student will attain a new score $t^*$ possibly different from $t'$. Importantly, the latent $u^*_T$ may be related to $u_T$, since these two latent variables relate to the same individual.
\end{example}
Three key observations ensue from the above example:
\begin{enumerate}[label=(\arabic*)]
    \item In recourse settings, we may be able to observe two (or possibly more) outcomes, typically pre- and post-intervention. These outcomes, however, are for the \textit{same individual}, and thus the noise variables $u_T$ and $u_T^*$ affecting them may share information,
    \item The pre- and post-intervention outcomes, written as $f_T(x_0, u_T)$, and $f_T(x_1, u^*_T)$, may therefore be related. For instance, one could reasonably expect that some part of the latent $u_T$ remains the same (e.g., student's ability),
    \item At the same time, however, other parts of the latent $u_T$ may change. For instance, the student's sleep and stress levels may possibly differ between the two exam attempts.
\end{enumerate}
Therefore, when observing pre- and post-recourse outcomes (realizations of $f_T(x_0, u_T)$, $f_T(x_1, u^*_T)$), we need to take into account that $u_T$ and $u^*_T$ may share information but are not necessarily identical. In other words, the process of repeating actions in the real world adds inherent structural uncertainty, and the latent variables $u_T$ may be perturbed to $u^*_T$. The ability to jointly observe outcomes of the same variable for a single individual will be a key feature of recourse data and motivates the methodological developments described in the sequel.

\subsection{Relationship to Previous Literature}
We now describe how the approach we present relates to previous works on algorithmic recourse. In particular, we explain the relation to (i) non-causal approaches to recourse; (ii) existing causal approaches; and (iii) the literature on combining observational and experimental data.

\paragraph{Contrastive Explanations.}
The approach of contrastive, or counterfactual, explanations seeks to find an alternative set of covariate values, close to the true ones, that would have resulted in a different decision for the individual \citep{wachter2017counterfactual}. Various notions of distance between features and the cost of manipulating them have been explored in the literature, in an attempt to measure how difficult it may be for an individual to change their attributes \citep[Sec.~3.1]{karimi2022survey}. If there is no causal structure among the covariates, and each feature may be manipulated by the individual, such explanations may also lead to recommendations on how the individual may change their outcome \citep{wachter2017counterfactual, joshi2019towards, sharma2020certifai, ustun2019actionable}. A commonly used assumption is the \textit{independently manipulable features} (IMF) assumption, which basically rules out any causal effects between the covariates used for prediction.
However, methods that ignore the fact that actions taken by the individual may affect other decision-related characteristics may lead to suboptimal or infeasible recommendations \citep{barocas2020hidden, venkatasubramanian2020philosophical}. We discuss an explicit example of this in Sec.~\ref{sec:IMF}.

\paragraph{Causal Algorithmic Recourse.}
Causal recourse methods that explicitly take into account the underlying causal model of the environment have been proposed \citep{mahajan2019preserving, karimi2021interventions, kugelgen2022fairrecourse} in order to improve upon the possibly rigid IMF assumption. However, some of these methods require the full specification of the generating model of reality, in terms of a structural causal model (SCM, \cite{pearl:2k}), and their assumptions are therefore too stringent for most practical applications \citep{bareinboim2020on}. 
The work most related to ours is that of \citet{karimi2020probabilistic}, which assumes the availability of a causal diagram and data to perform inference, corresponding to the standard setting in the causality literature, and also the setting considered in this paper.
The authors offer two solutions. They either make additive noise assumptions that render the joint distribution over counterfactual outcomes identifiable, or they consider an alternative method that avoids considering the joint, at the expense of recourse recommendations not being covariate-specific. 
Importantly, the formulation of recourse offered in this work considers the recourse outcomes to be \textit{exact counterfactuals}, and does not consider the implications of the unit's pre- and post-recourse latent variables possibly being different, as described in Ex.~\ref{ex:intro}. Therefore, this prior work can be seen as a special case of the framework we propose, in which the pre- and post-recourse units are identical. 

\paragraph{Combining Observational and Experimental Data.} The final line of work related to ours is that of identifying interventional \citep{lee2020general} or counterfactual \citep{correa2021nested} distributions from combinations of experimental and observational data. In the setting of recourse, however, there is a distinctive feature of observational and experimental samples being related, as they correspond to the same individual, as discussed in Ex.~\ref{ex:intro}. This makes the setting unique, and any off-the-shelf inference ideas that just combine observational and experimental data will not be sufficient. In fact, our work is the first to formalize an inference problem of this kind, and the first to consider the concept of learning from \textit{recourse data}.

After demonstrating how our work fits into the broader context, we can list the specific contributions found in this paper:

\begin{enumerate}[label=(\arabic*)]
    \item We develop a formulation of algorithmic recourse based on structural causal models that takes into account the inherent variability in the latent variables described in Ex.~\ref{ex:intro} (Def.~\ref{def:new-recourse}). We define the so-called post-recourse stability (PRS) conditions that guarantee that, for a fixed set of causal parents, distributions over latent variables pre- and post-recourse coincide. We prove that recourse effects may be inferred from observational data under PRS (Thm.~\ref{thm:prs-implies-msc}),
    
    \item We develop an algorithm (Alg.~\ref{algo:frank-obs}) for non-parametrically inferring effects of recourse actions under PRS, using observational data and the assumption that the joint distribution over factual and recourse outcomes is modeled by Frank's copula \citep{frank1979simultaneous}. 
    %Since the parameter of the copula cannot be inferred from observational data, our procedure is a sensitivity-type of analysis, with the sensitivity parameter being Kendall's $\tau$ correlation coefficient \citep{kendall1948rank}. 
    
    \item We introduce a novel causal notion of recourse data (Def.~\ref{def:recourse-data}) in which we have access to an observational sample and an interventional sample for the same individual. We define a new class of learning problems based on recourse data (Def.~\ref{def:recourse-learning}). We develop an algorithm (Alg.~\ref{algo:frank-recourse}) for inferring the effects of recourse actions under PRS, using observational and recourse data. Furthermore, we propose a goodness-of-fit hypothesis test that can be performed to falsify the assumption that Frank's copula is an appropriate modeling choice.
    
    \item We develop an algorithm (Alg.~\ref{algo:copula-free-learning}) for inferring effects of recourse actions from observational and recourse data for the case when PRS does not hold, and provide theoretical guarantees for the algorithm in the linear case (Thm.~\ref{thm:linear-recourse}).
\end{enumerate}
In Fig.~\ref{fig:recourse-overview}, we provide a visual summary of the causal recourse framework introduced in this paper.
In the following section, we cover some important preliminary notions for our discussion. Readers familiar with the language of structural causality may wish to go straight to Sec.~\ref{sec:new-recourse}.
\begin{figure}
    \centering
    \scalebox{0.7}{
    \begin{tikzpicture}[
    node distance=1.2cm and 1.5cm,
    box/.style={rectangle, draw, minimum height=1.8cm, minimum width=3cm, align=center, fill=gray!20},
    engine/.style={rounded rectangle, draw, fill=black, text=white, minimum width=3.5cm, align=center},
    decision/.style={rounded rectangle, draw, fill=brown!20, text=black, minimum width=3.5cm, align=center},
    fail/.style={rectangle, draw, fill=red, text=white, minimum height=2cm, minimum width=3.5cm, align=center},
    arrow/.style={-latex}
]

\node[engine] (recourse) {Recourse Engine};
\node[engine, above of = recourse, fill=blue!40,yshift=0.25cm] (input) {\{Data $\mathcal{D}$, Recourse $do(R=r)$\}};
\node[decision, below of=recourse] (rec-data) {Recourse data?}; 
\node[box, below left = of recourse, xshift=-1cm] (obsOnly) {Observational\\ Only};
\node[box, below right = of recourse, xshift=1cm] (obsRecourse) {Observational\\ + Recourse};
\node[box, below left = of obsOnly, xshift=1cm, fill=green!60!black, text=white] (copula1) {Copula-Based Effects\\ of Interventions \\ $do(R = r)$};
\node[decision, below of=obsOnly, yshift=-0.5cm] (msc1) {Post-Recourse\\ Stability?}; 
\node[box, below right = of obsOnly, fill=red, text=white, xshift=-1cm] (fail) {FAIL};
\node[box, below left = of obsRecourse, fill=green!60!black, text=white, xshift=1cm] (copula2) {Copula-Based Effects \\ of Interventions \\ $do(R = r)$};
\node[decision, below of=obsRecourse, yshift=-0.5cm] (msc2) {Post-Recourse\\ Stability?}; 
\node[box, below right = of obsRecourse, fill=green!60!black, text=white, xshift=-1cm] (free) {Copula-Free\\ Inference};
\node[box, below = 7.25cm of recourse, fill=yellow] (search) {Searching over\\ min-cost max-gain\\ interventions};

\node[draw, rounded corners, anchor=south west, yshift=0.05cm, fill=gray!20] at (obsOnly.north west) {Sec.~\ref{sec:msc-obs}};
\node[draw, rounded corners, anchor=south east, yshift=0.05cm, fill=gray!20] at (obsRecourse.north east) {Sec.~\ref{sec:msc-rec}, \ref{sec:mic-rec}};

% add labels
\node[draw, rounded corners, anchor=south west, yshift=0.05cm, fill=green!80!black] at (copula1.north west) {Alg.~\ref{algo:frank-obs}};
\node[draw, rounded corners, anchor=south west, yshift=0.05cm, fill=green!80!black] at (copula2.north west) {Alg.~\ref{algo:frank-recourse}};
\node[draw, rounded corners, anchor=south east, yshift=0.05cm, fill=green!80!black] at (free.north east) {Alg.~\ref{algo:copula-free-learning}};

\draw[arrow] (input) -- (recourse);
\draw[arrow] (recourse) -- node[above left] {no} (obsOnly);
\draw[arrow] (recourse) -- node[above right] {yes} (obsRecourse);
\draw[arrow] (obsOnly) -- node[above left] {yes} (copula1);
\draw[arrow] (obsOnly) -- node[above right] {no} (fail);
\draw[arrow] (obsRecourse) -- node[above left] {yes} (copula2);
\draw[arrow] (obsRecourse) -- node[above right] {no} (free);
\draw[arrow] (free) -- (search);
\draw[arrow] (copula1) -- (search);
\draw[arrow] (copula2) -- (search);

\end{tikzpicture} 
    }
    \caption{Overview of the framework proposed in this paper.}
    \label{fig:recourse-overview}
\end{figure}
\section{Preliminaries}
We use the language of structural causal models (SCMs) as our basic semantical framework \citep{pearl:2k}. A structural causal model (SCM) is defined as:
\begin{definition}[Structural Causal Model (SCM) \citep{pearl:2k}] \label{def:SCM}
	A structural causal model $\mathcal{M}$ is a 4-tuple $\langle V, U, \mathcal{F}, P(u)\rangle$, where
  \begin{enumerate}
    \item $U$ is a set of exogenous variables, also called background variables, that are determined by factors outside the model;
    \item $V = \lbrace V_1, ..., V_n \rbrace$ is a set of endogenous (observed) variables, that are determined by variables in the model (i.e. by the variables in $U \cup V$);
    \item $\mathcal{F} = \lbrace f_{V_1}, ..., f_{V_n} \rbrace$ is the set of structural functions determining $V$, $v_i \gets f_{V_i}(\pa(v_i), u_i)$, where $\pa(V_i) \subseteq V \setminus V_i$ and $U_i \subseteq U$ are the functional arguments of $f_{V_i}$;
    \item $P(u)$ is a distribution over the exogenous variables $U$.
  \end{enumerate}
\end{definition}
\noindent The assignment mechanisms $\mathcal{F}$ determine how each of the observed variables $V_i$ attains its value, based on other observed variables and the latent variables $U$. Together with the probability distribution $P(u)$ over the exogenous variables $U$, it specifies the entire behavior of the underlying phenomenon. In particular, the SCM also specifies the \textit{observational distribution} of the underlying phenomenon, defined through:
\begin{definition}[Observational Distribution \citep{bareinboim2020on}] \label{def:obs-dist}
An SCM $\mathcal{M}$ that is a 4-tuple $\langle V, U, \mathcal{F}, P(u) \rangle$ induces a joint probability distribution $P(V)$ such that for each $Y \subseteq V$,
\begin{align}
    P^{\mathcal{M}}(y) = \sum_{u} \mathbb{1}\Big(Y(u) = y \Big) P(u),
\end{align} 
where $Y(u)$ is the solution for $Y$ after evaluating $\mathcal{F}$ with $U = u$.
\end{definition}
\noindent A further important notion building on the concept of the SCM is that of a submodel, which is defined next:
\begin{definition}[Submodel \citep{pearl:2k}] \label{def:submodel}
    Let $\mathcal{M}$ be a structural causal model, $X$ a set of variables in $V$, and $x$ a particular value of $X$. A submodel $\mathcal{M}_{x}$ (of $\mathcal{M}$) is a 4-tuple:
    \begin{equation}
        \mathcal{M}_{x} = \langle V, U, \mathcal{F}_{x}, P(u)\rangle
    \end{equation}
    where 
    \begin{equation}
        \mathcal{F}_{x} = \lbrace f_i : V_i \notin X \rbrace \cup \lbrace X \gets x\rbrace,
    \end{equation}
    and all other components are preserved from $\mathcal{M}$. 
\end{definition}
\noindent Building on submodels, we introduce next the
notion of a potential outcome:
\begin{definition}[Potential Outcome / Response \citep{rubin1974estimating, pearl:2k}]\label{def:potentialresponse}
    Let $X$ and $Y$ be two sets of variables in $V$ and $u \in \mathcal{U}$ be a unit. The potential outcome/response $Y_x(u)$ is defined as the solution for $Y$ of the set of equations $\mathcal{F}_x$ evaluated with $U=u$. That is, $Y_x(u)$ denotes the solution of $Y$ in the submodel $\mathcal{M}_x$ of $\mathcal{M}$.
\end{definition}
\noindent In words, $Y_x(u)$ is the value variable $Y$ would take if (possibly contrary to observed facts) $X$ is set to $x$, for a specific unit $u$. In Ex.~\ref{ex:intro}, one could think of the potential outcome of the test score $T$ subject to setting $X = 1$, which would be written $T_{X=1}(u)$. We further define how counterfactual distributions over various possible potential outcomes are computed:
\begin{definition}[Counterfactual Distributions \citep{bareinboim2020on}] \label{def:ctf-dist}
    Consider an SCM $\mathcal{M} = \langle V, U, \mathcal{F}, P(u) \rangle$, and let $Y_1, \dots, Y_k \subset V$, and $X_1, \dots, X_k \subset V$ be subsets of the observables, and let $x_1, \dots, x_k$ be specific values of $X_i$s. Denote by $(Y_i)_{x_i}$ the potential response of variables $Y_i$ when setting $X_i = x_i$. The SCM $\mathcal{M}$ induces a family of joint distributions over counterfactual events $(Y_1)_{x_1}, \dots, (Y_k)_{x_k}$:
    \begin{align} \label{eq:L3def}
        P^{\mathcal{M}}((y_1)_{x_1}, \dots, (y_k)_{x_k}) = \sum_{u} \mathbb{1}\Big( \bigwedge_{i=1}^k (Y_i)_{x_i}(u) = y_i \Big) P(u).
    \end{align}
\end{definition}
\noindent The l.h.s. in Eq.~\ref{eq:L3def} contains variables with different subscripts, which syntactically represent different potential responses (Def.~\ref{def:potentialresponse}), or counterfactual worlds. 
%In words, the equation can be interpreted as follows:
Finally, there is one more prerequisite notion for our discussion. The mechanisms $\mathcal{F}$ and the distribution over the exogenous variables $P(u)$ are almost never observed. However, to perform causal inference, we need a way of encoding assumptions about the underlying SCM. A common way of doing so is through an object called a causal diagram, which is defined next: 
\begin{definition}[Causal Diagram \citep{pearl:2k, bareinboim2020on}] \label{def:diagram}
	Let an SCM $\mathcal{M}$ be a 4-tuple $\langle V, U, \mathcal{F}, P(u)\rangle$. A graph $\mathcal{G}$ is said to be a \textit{causal diagram} (of $\mathcal{M}$) if:
	\begin{enumerate}[label=(\arabic*)]
		\item there is a vertex for every endogenous variable $V_i \in V$,
		\item there is an edge $V_i \to V_j$ if $V_i$ appears as an argument of $f_j \in \mathcal{F}$,
		\item there is a bidirected edge $V_i\dashleftarrow\dasharrow V_j$ if the corresponding $U_i, U_j \subset U$ are correlated or the corresponding functions $f_i, f_j$ share some $U_{ij} \in U$ as an argument.
	\end{enumerate}
\end{definition}
We call $\pa(V_i)$ the set of parents of $V_i$,  while the sets of children $\ch(V_i)$, ancestors $\an(V_i)$, and descendants $\de(V_i)$ are defined analogously.

\paragraph{Prior Recourse Definition.} Recourse actions will generally be indicated by $do(R = r)$, and one may be interested in the potential responses of the outcome under the recourse action for specific units, labeled $\widehat{Y}_{R=r}(u)$. When clear from context, we drop the $R=r$ notation and write only $r$ instead. A previous formulation of algorithmic recourse, proposed in \citep{karimi2020probabilistic}, is given as follows:
\begin{definition}[Recourse Problem \citep{karimi2020probabilistic}] \label{def:recourse-old} Let $R \subset V$ be a subset of the observables on which recourse may be performed. Let $r$ be a fixed value of $R$, and let $t(r)$ be a function of $R = r$. Then, the optimal recourse action for an individual with $V = v$ is given by:
    \begin{align}
        \argmin_{R, r} &\quad  \text{cost(R = r)} \label{eq:cost-term}\\
        \text{subject to}&\quad P(\widehat{Y}_{R=r} = 1 \mid v) \geq t(r). \label{eq:guarantee-term}
    \end{align}
\end{definition}
\noindent The intuition behind the above formulation is simple. The goal is to minimize the cost the individual incurrs from performing recourse (Eq.~\ref{eq:cost-term}) while ensuring that the probability of success after recourse is large enough (Eq.~\ref{eq:guarantee-term}). 

\subsection{Independently Manipulable Features} \label{sec:IMF}
In this section, we first discuss the independently manipulable features (IMF) assumption, frequently used in the literature on algorithmic recourse and counterfactual explanations \citep{wachter2017counterfactual, ustun2019actionable, joshi2019towards}. In particular, the IMF assumption can be represented as a causal diagram, shown in Fig.~\ref{fig:imf-assumed}, where the dots ($\cdots$\!) represent variables $V_2, \dots, V_{k-1}$ between $V_1$ and $V_k$. Each variable $V_1, \dots, V_k$ has an edge to $\widehat{Y}$, but no other edges exist. 
\begin{figure}
    \centering
    \begin{subfigure}{0.24\textwidth}
        \centering
        \begin{tikzpicture}[SCM,scale = 1]
        \pgfsetarrows{latex-latex};
            \newcommand{\xshift}{1.5}
            \newcommand{\yshift}{1.4}
            \node(v1) at (1 * \xshift, 0.5 * \yshift)[label=above:{$V_1$}, point];
            \node(vdots) at (2 * \xshift, 0.5 * \yshift){$\dots$};
            \node(vk) at (3 * \xshift, 0.5 * \yshift)[label=above:{$V_k$}, point];
            \node(yhat) at (2 * \xshift, -0.5 * \yshift)[label=below:{$\widehat{Y}$}, point];
            
            \path (v1) edge (yhat);
            \path (vdots) edge (yhat);
            \path (vk) edge (yhat);
        \end{tikzpicture}
        \caption{IMF in general.}
        \label{fig:imf-assumed}
    \end{subfigure}
    \hfill
    \begin{subfigure}{0.24\textwidth}
    \centering
    \begin{tikzpicture}[SCM,scale = 1]
        \pgfsetarrows{latex-latex};
            \newcommand{\xshift}{1.5}
            \newcommand{\yshift}{1.4}
            \node(v1) at (1 * \xshift, 0.5 * \yshift)[label=above:{$V_1$}, point];
            \node(vk) at (3 * \xshift, 0.5 * \yshift)[label=above:{$V_2$}, point];
            \node(yhat) at (2 * \xshift, -0.5 * \yshift)[label=below:{$\widehat{Y}$}, point];
            
            \path (v1) edge (yhat);
            \path (vk) edge (yhat);
        \end{tikzpicture}
        \caption{IMF in Ex.~\ref{ex:imf-failure}.}
        \label{fig:imf-example}
    \end{subfigure}
    \hfill
    \begin{subfigure}{0.24\textwidth}
        \centering
        \begin{tikzpicture}[SCM,scale = 1]
        \pgfsetarrows{latex-latex};
            \newcommand{\xshift}{1.5}
            \newcommand{\yshift}{1.4}
            \node(v1) at (1 * \xshift, 0.5 * \yshift)[label=above:{$V_1$}, point];
            \node(vk) at (3 * \xshift, 0.5 * \yshift)[label=above:{$V_2$}, point];
            \node(yhat) at (2 * \xshift, -0.5 * \yshift)[label=below:{$\widehat{Y}$}, point];
            
            \path (v1) edge (vk);
            \path (v1) edge (yhat);
            \path (vk) edge (yhat);
        \end{tikzpicture}
        \caption{Diagram of Ex.~\ref{ex:imf-failure}.}
        \label{fig:imf-fails}
    \end{subfigure}
    \hfill
    \begin{subfigure}{0.24\textwidth}
        \centering
        \begin{tikzpicture}[SCM,scale = 1]
        \pgfsetarrows{latex-latex};
            \newcommand{\xshift}{1}
            \newcommand{\yshift}{1}
            \node(v1) at (1 * \xshift, 0.5 * \yshift)[label=above:{$V_1$}, point];
            \node(v2) at (3 * \xshift, 0.5 * \yshift)[label=above:{$V_2$}, point];
            \node(v3) at (1 * \xshift, -0.5 * \yshift)[label=below:{$V_3$}, point];
            \node(v4) at (3 * \xshift, -0.5 * \yshift)[label=below:{$V_4$}, point];
            \node(yhat) at (2 * \xshift, -1 * \yshift)[label=below:{$\widehat{Y}$}, point];
            
            \path (v1) edge (v3);
            \path (v2) edge (v4);
            \path (v1) edge (yhat);
            \path (v2) edge (yhat);
            \path (v3) edge (yhat);
            \path (v4) edge (yhat);
    \end{tikzpicture}
    \caption{Diagram of Ex.~\ref{ex:ctf-vs-int}.}
    \end{subfigure}
    \caption{Independently manipulable feature (IMF) assumption for the general case (a) and for Ex.~\ref{ex:imf-failure} (b); (c) show the true causal diagram for Ex.~\ref{ex:imf-failure}; (d) causal diagram for Ex.~\ref{ex:ctf-vs-int}.}
    \label{fig:imf-graphs}
\end{figure}
This assumption, often made to simplify inference of recourse recommendations, can have serious implications even in the presence of simple causal structure among the features, and may lead to wrong conclusions as witnessed by the following example:
\begin{example}[IMF Assumption Failure] \label{ex:imf-failure}
    Consider the following SCM $\mathcal{M}$:
    \begin{align}
        V_1 &\gets N(0, 1) \\
        V_2 &\gets \alpha V_1 + N(0, 1) \\
        \widehat{Y} &\gets \mathbb{1} ( V_1 - V_2 \geq 1)
    \end{align}
    where $V_1$ represents the income level, $V_2$ total expenditure, and $\widehat{Y}$ whether a person is granted a cash loan. The causal diagram induced by $\mathcal{M}$ is shown in Fig.~\ref{fig:imf-fails}.
    In words, the total expenditure $V_2$ is a linear function of income $V_1$, and if $\alpha < 1$ the individual (on average) spends less than he/she earns, and spends more otherwise. The loan decision $\widehat Y$ is strictly based on whether the individual's income is larger than their expenditure by a fixed amount of $1$.
    Now, suppose we have an individual with features $(v_1, v_2) = (1, 0.5)$ who has his/her loan declined.

    Suppose that $V_1$ is the only manipulable feature in this scenario, and suppose that a data analyst invokes the IMF assumption, which in this case does not hold due to a causal influence $V_1 \to V_2$. By knowing the mechanism of $\widehat{Y}$ (usually known to the system designer), they would reason as follows. To increase their creditworthiness, the person needs to increase their difference in income vs. expenditure, from the current $v_1 - v_2 = 0.5$ to $1$. Under IMF, a sufficient intervention would be to add $0.5$ to $V_1$, i.e., do the recourse action $v_1 = 0.5 \to v_1 = 1$. 

    The IMF assumption would entirely disregard the fact that increased income may also be linked with increased expenditure, as described by the $\alpha$ coefficient; the IMF assumption implicitly assumes that $\alpha = 0$. However, in the SCM $\mathcal{M}$, the minimal necessary intervention when taking into account the causal structure $V_1 \to V_2$ would be equal to
    \begin{align}
        \argmin_{\delta} v_1 + \delta - (v_2 - \alpha \delta) \geq 1, \text{ for }v_1 = 1, v_2 = 0.5,
    \end{align}
    with the solution $\delta_{min} = \frac{1}{2(1-\alpha)}$ for $\alpha < 1$ and no solution for $\alpha \geq 1$, yielding an entirely different insight from what was obtained based on the IMF assumption.
\end{example}
This example highlights that, even in the basic linear setting, it can be rather important to consider the causal structure among variables. Therefore, the IMF assumption, commonly invoked in the literature, may have significant drawbacks.

\subsection{Hardness of Recourse -- Joint Counterfactual Distributions}
For the discussion in this section, suppose for simplicity that the set of covariates under recourse $R$ is the set of all root nodes in the causal diagram $\mathcal{G}$, and that the decision mechanism $f_{\widehat{y}}$ of $\widehat{Y}$ is available to the data analyst. Let $\de(R)$ denote the descendants of $R$, and let $R = r_0, \de(R) = \de$ denote the covariate values we observed naturally for an individual, while $R=r$ denotes the recourse values. Then, the constraint term in Eq.~\ref{eq:guarantee-term} can be expanded as follows:
\begin{align}
    P(\widehat{Y}_{R=r} = 1 \mid V=v) &= \sum_{\de^*} P(\widehat{Y}_{R=r, \de(R)=de^*} = 1, \de(R)_{R=r}=\de^* \mid R = r_0, \de(R) = \de) \\
    &= \sum_{\de^*} \mathbb{1}(f_{\widehat{y}}(r, \de^*)=1) P(\de(R)_{R=r}=\de^* \mid R = r_0, \de(R) = \de), \label{eq:guarantee-term-expansion}
\end{align}
where $\de^*$ ranges over all possible values of $\de(R)$ that may be obtained after implementing the recourse $R = r$. Note that the expression in Eq.~\ref{eq:guarantee-term-expansion} depends on the joint counterfactual event $\{\de(R) = \de, \de(R)_{R=r}=\de^*\}$. The induced distribution is known to be challenging to evaluate without strong additional assumptions, \citep{tian2000probabilities}, making the problem in Def.~\ref{def:recourse-old} difficult to solve. This is illustrated by the following example:
\begin{example}[Non-Identifiability of Recourse]
    Consider SCMs $\mathcal{M}^{(0)}$, $\mathcal{M}^{(1)}$, defined as follows:
    \begin{align}
        X &\gets U_x \\
        W &\gets X + (-1)^{X \cdot i} U_w \\
        \widehat{Y} &\gets \mathbb{1} ( W \geq 1) \\
        U_x &\sim \text{Bern}(0.5), U_w \sim N(0,1).
    \end{align}
    The two SCMs are equivalent apart from the $W$ mechanism $f_w$. Furthermore, the SCMs generate identical observational and interventional distributions. The causal diagram induced by both SCMs is the chain graph $X \xrightarrow{} W \xrightarrow{} \widehat Y$. 
    
    In the context of recourse, consider an individual with $X(u) = 0, W(u) = 0.5$ corresponding to $(U_x = 0, U_w = 0.5)$. In $\mathcal{M}^{(0)}$ the recourse action $do(X = 1)$ would yield $W_{X = 1} = 1.5$ and $\widehat{Y}_{X=1} = 1$ (i.e., successful recourse), whereas in $\mathcal{M}^{(1)}$ it would yield $W_{X = 1} = 0.5$ and $\widehat{Y}_{X=1} = 0$ (unsuccessful recourse). However, no amount of observational or interventional data would allow these two cases to be distinguished.
\end{example}
The above example provides a negative result for causal recourse. Interestingly, this result is obtained despite the fact that the assignment mechanism of the predictor $\widehat Y$, written $f_{\widehat Y}$ is known to the data analyst\footnote{In this sense, we can see how the setting of algorithmic recourse may differ from the standard literature in causal effect identification, in which none of the mechanisms $\mathcal{F}$ of an SCM are known.}. The remainder of this paper deals with inferring recourse actions while acknowledging this issue. We begin by showing that counterfactual outcomes are not necessarily the appropriate notion for recourse in practice.

\subsection{Counterfactual Reasoning Is Necessary}
The formulation of recourse in Def.~\ref{def:recourse-old} requires reasoning about counterfactual quantities, namely queries of the form
\begin{align} \label{eq:recourse-L3-query}
    P(\widehat{y}_{R = r } \mid V = v). 
\end{align}
Generally, counterfactual queries that belong to the so-called third layer of Pearl's Causal Hierarchy\footnote{We remind the reader that the Pearl's Causal Hierarchy consists of the (1) \textit{observational}, (2) \textit{interventional}, and (3) \textit{counterfactual} layers \citep{bareinboim2020on}.} 
(PCH, for short) are more difficult to infer compared to interventional queries from the second layer of the PCH \citep{bareinboim2020on}. Therefore, it may be tempting to search for a different formulation of causal recourse, based purely on interventional reasoning, such as computing:
\begin{align} \label{eq:recourse-L2-query}
    P(\widehat y \mid do(R = r), V\setminus R = v\setminus r).
\end{align}
However, the situation is more involved, and some type of counterfactual reasoning is needed. As the following example illustrates, the two quantities in Eq.~\ref{eq:recourse-L3-query} and Eq.~\ref{eq:recourse-L2-query} can differ in practice:
\begin{example}[Counterfactual vs. Interventional Reasoning for Recourse] \label{ex:ctf-vs-int}
    Consider the following SCM $\mathcal{M}$:
    % \allowdisplaybreaks
    \begin{align}
        V_1 &\gets U_1 \\
        V_2 &\gets U_2 \label{eq:l2-vs-l3-v2}\\
        V_3 &\gets V_1 \wedge U_3 \label{eq:l2-vs-l3-v3}\\
        V_4 &\gets V_2 \vee U_4 \label{eq:l2-vs-l3-v4} \\
        \widehat{Y} &\gets \mathbb{1}(V_1 + V_2 + V_3 + V_4 \geq 2.5), \\
        \nonumber \\
        U_1 &, U_2, U_3, U_4 \sim \text{Bernoulli}(0.5).
    \end{align}
    Now, consider a person with $(v_1, v_2, v_3, v_4) = (0, 0, 0, 0)$ implementing the recourse recommendation $do(V_1 = 1, V_4 = 1)$. The quantities of interest may be evaluated as:
    \begin{align} \label{eq:rec-ex-L2-query}
        P(\widehat y \mid do(v_1 = 1, v_4 = 1), v_2 = 0, v_3 = 0) &= 0 \\
        P(\widehat y_{v_1 = 1, v_4 = 1} \mid v_1 = 0, v_2 = 0, v_3 = 0, v_4 = 0) &= \frac{1}{2} . \label{eq:rec-ex-L3-query} 
    \end{align}
    For obtaining the quantity in Eq.~\ref{eq:rec-ex-L2-query} we proceed as follows. First, we replace the mechanisms of $V_1, V_4$ with fixed values $V_1 \gets 1, V_4 \gets 1$. 
    Then, in the submodel $\mathcal{M}_{V_1=1,V_4=1}$, we look at the implication of conditioning on $V_2 = 0, V_3 = 0$. First, in $\mathcal{M}_{V_1=1,V_4=1}$ observing that $V_3 = 0$ implies $U_3 = 0$ (since $V_1 = 1$ in this submodel and using Eq.~\ref{eq:l2-vs-l3-v3}). Observing that $V_2 = 0$ implies hat $U_2 = 0$, based on Eq.~\ref{eq:l2-vs-l3-v2}. Thus, we know that $U_3 = U_2 = 0$ and this is sufficient to compute the $\widehat{Y}$ in the submodel $\mathcal{M}_{V_1=1,V_4=1}$, yielding $\widehat{Y} = 0$ with probability 1.

    For the quantity in Eq.~\ref{eq:rec-ex-L3-query} we follow a different procedure. We first update the latent variables according to available evidence. Since we observe $V_1 = 0, V_2 = 0$, it follows that $U_1 = 0, U_2 = 0$. Then, since $V_1 = 0, V_3 = 0$, we cannot infer $U_3$ based on Eq.~\ref{eq:l2-vs-l3-v3}, but conclude that $U_3$ equals $1$ with probability $1/2$. Finally, using mechanism in Eq.~\ref{eq:l2-vs-l3-v4} and evidence $V_2 = 0, V_4 = 0$, we can conclude that $U_4 = 0$. This gives us the probability distribution of $U_1, U_2, U_3, U_4$ given the evidence $V_1 = V_2 = V_3 = V_4 = 0$. To compute the quantity in Eq.~\ref{eq:rec-ex-L3-query}, we replace the mechanisms of $V_1, V_4$ with $V_1 \gets 1, V_4 \gets 1$ and compute $\widehat{Y}$ based on the updated distribution of the $U$ variables. This yields the $\frac{1}{2}$ result in Eq.~\ref{eq:rec-ex-L3-query}, due to the fact that $P(U_3 \mid v_1 = 0, v_2 = 0, v_3 = 0) = \frac{1}{2}$. 
\end{example}
\noindent The above example illustrates how counterfactual reasoning differs from interventional reasoning, and may yield very different results in practice. In fact, queries of the form in Eq.~\ref{eq:rec-ex-L2-query} will generally not allow one to reason about effects of recourse actions. Queries as in Eq.~\ref{eq:rec-ex-L3-query} are closer to what is needed for recourse inference, but still do not capture the problem complexity fully (as discussed in Ex.~\ref{ex:intro}). This motivates the developments in the next section.
\section{New Model of Causal Recourse} \label{sec:new-recourse}
In this section, we discuss why the potential outcome $\widehat{Y}_{R=r}(u)$ may not be the target quantity of interest for solving recourse problems. We start by examining the implicit causal assumptions needed for recourse inference, first illustrated by an example:
\begin{example}[Exam Repetition continued -- Unsuccessful Recourse]
    Consider the exam repetition setting from Ex.~\ref{ex:intro}, and consider a student who did not attend office hours ($X = x_0$), obtained a test score $T = t$, and failed the exam ($Y = 0$). Subsequently, the student was given a recourse recommendation to attend office hours (action $do(X =x_1)$). However, after two weeks, the student comes for the exam repetition without having attended office hours, i.e., $X = x_0$ still (recourse was unsuccessful). However, could it happen that the student obtains a different test score $T=t'$ with $t' \neq t$ on the second attempt, and obtains a passing grade? 
\end{example}
The example illustrates that repeated instantiations of a variable over time, even in the absence of recourse, may be subject to uncertainty. While the assignment mechanisms $\mathcal{F}$ of the SCM (in this case $f_T$) are commonly assumed to be stable across time, the latent $U_t=u_t$ is unlikely to remain identical. In other words, there may be other sources of variation within the latent $u_t$ that are different pre- and post-recourse. In Ex.~\ref{ex:intro}, we discussed the obtained amount of sleep and the stress level in the past two weeks as possible latent variables that may change between different exam attempts. Denote the post-recourse test score by $T_{X=x_0}(u_w^*)$ (given that recourse was unsuccessful, and $X = x_0$ after recourse). In fact, since the student's ability is also affecting the test score, there may be some relation between the latent $U_t = u_t$ pre-recourse and the latent $U_t^* = u_t^*$ post-recourse. That is, $U_t^* \notci U_t$ since some parts of the latent variable are invariant across time. Therefore, based on this observation, a slightly different approach may be necessary to take into account this variability between the latent variables pre- and post-recourse. Instead of assuming that the unit $U = u$ is fixed, we may assume that
\begin{align}
    u = (u^{(f)}, u^{(v)}),
\end{align}
where the first part $u^{(f)}$ represents the fixed, immutable (or intrinsic) latent information on the unit, such as the individual's biology, genetics, or ability. The second part $u^{(v)}$ represents circumstantial information, such as the amount of sleep or stress in the past week. Crucially, in modeling, one needs to account for the fact that this second, circumstantial part $u^{(v)}_{t}$ may be resampled after the implementation of the recourse action. 

\subsection{Causal Recourse Building Blocks}
To represent these dynamics in a more fine-grained way, we introduce a new building block called recourse SCM:
\begin{definition}[Recourse SCM] \label{def:recourse-scm}
    Let $\mathcal{M} = \langle \mathcal{F}, P(u)\rangle$ be an SCM over variables $V, U$. For $R \subset V$ let $R = r$ denote the value of a recourse intervention. A recourse SCM $\recscm$ is a tuple
    \begin{align}
        \langle \mathcal{F}, P(u), R=r, P(u^* \mid u) \rangle,
    \end{align}
    where $P(u^* \mid u)$ is the distribution over the units after the recourse action $do(R = r)$. A recourse SCM is said to be Markovian if $\langle \mathcal{F}, P(u) \rangle$ is Markovian and
    \begin{align}
        P(u^* \mid u) = \prod_{i=1}^{n} P(u^*_i \mid u_i).
    \end{align}
\end{definition}
Based on the definition of a recourse SCM, we can also define notions of a recourse distribution and recourse sampling, analogous to the definitions of observational and counterfactual distributions (Defs.~\ref{def:obs-dist} and \ref{def:ctf-dist}):
\begin{definition}[Recourse Distribution]
    Let $\recscm$ be a recourse SCM, and let $V, V^*$ denote variables pre- and post-recourse, respectively. The recourse distribution is then given by
    \begin{align}
        P_{\recscm}(v, v^*) = \sum_u \mathbb{1}(V(u)=v, V^*(u^*)=v^*)P(u) P(u^* \mid u). 
    \end{align}
\end{definition}
The definition is now illustrated through an example:
\begin{example}[Exam Repetition continued] \label{ex:recourse-scm}
    Consider the exam repetition setting from Ex.~\ref{ex:intro}, and suppose it is described by the following SCM $\mathcal{M}$:
    \begin{align}
        X &\gets U_X \\
        T &\gets X + 2U_1 - U_2 + U_3 \label{eq:f-T-mech}\\
        Y &\gets \mathbb{1} (T \geq 1) \\
        \nonumber \\
        U_X &\sim \text{Bernoulli(0.5)},\; U_1, U_2, U_3 \sim N(0, 1),
    \end{align}
    corresponding to the causal diagram in Fig.~\ref{fig:intro-ex}, with $U_1, U_2, U_3$ corresponding to latent ability, stress, and sleep levels. Office hours attendance ($X$) is a Bernoulli$(0.5)$ random variable, meaning that half of the students attend office hours. The $f_T$ mechanism says that students who attend office hours perform better, with the performance influenced positively by ability ($U_1$) and sleep ($U_3$), and influenced negatively by stress levels ($U_2$). For an individual who did not attend office hours, $X(u) = x_0$, obtained a test score $T(u) = \frac{1}{2}$, and failed the exam $Y(u) = 0$, we consider an intervention $do(X=1)$ (attending office hours), and two alternative distributions over the post-recourse noise variables $U^*$ \\
    
\begin{minipage}{.4\linewidth}
\vspace{-0.15in}
\begin{empheq}[left={P^{(a)}(u^* \mid u) = \empheqlbrace}]{align} % { = }{align*}
  U^*_1 &= U_1  \label{eq:exam-rep-pu-star-1}\\
  U^*_2 &\sim N(0, 1) \\
  U^*_3 &\sim N(0, 1),\label{eq:exam-rep-pu-star-3}
\end{empheq}
\end{minipage}%
\hfill \text{ vs. } \hfill
\begin{minipage}{.45\linewidth}
\vspace{-0.175in}
\begin{empheq}[left={P^{(b)}(u^* \mid u) = \empheqlbrace}]{align}
  U^*_1 &= U_1  \\
  U^*_2 &\sim N(1, 1) \\
  U^*_3 &\sim N(-1, 1).
\end{empheq} 
\end{minipage}

\vspace{0.1in}
\noindent The distribution $P(u^* \mid u)$ determines how the noise variables are sampled post-recourse, possibly conditional on the initial noise variables $U = u$. In particular, for both $P^{(a)}, P^{(b)}$ assume that the latent ability of the student $(U_1)$ remains invariant post-recourse, written $U_1^* = U_1$. However, the two distributions differ according to the distribution over the $U^*_2, U^*_3$ variables. In $P^{(a)}$, post-recourse stress and sleep levels $U_2^*, U_3^*$ follow a $N(0, 1)$ distribution, the same distribution as $U_2, U_3$ variables pre-recourse. In $P^{(b)}$, however, post-recourse stress and sleep levels follow different distributions, $N(1, 1)$ and $N(-1, 1)$, meaning that post-recourse students have higher degrees of stress and lower levels of sleep. 
\end{example}
We now discuss different possible ways to think about recourse inference. In the oracle case, if the true recourse SCM $\recscm$ were available to us, we would be able to compute the recourse outcome $\widehat{Y}_{R=r}(u^*)$ exactly. For each variable under recourse, labeled $R_i^*$, we have $R^*_i \gets r_i$ (due to recourse), whereas for each variable $V_i^*$ not under recourse, one evaluates
\begin{align} \label{eq:rec-mechanism}
    V_i^* (u_i^*) \gets f_{V_i} (\Pa_i^*, u_i^*).
\end{align}
Once all the post-recourse variables $V^*$ are computed (made possible by the exclusive knowledge of the latent recourse variables $U_i^*$), we can simply evaluate the recourse outcome using the classifier $\widehat{Y}$. This approach describes how the ground truth outcome $\widehat{Y}_{R=r}(u^*)$ (which we are interested in) may be recovered from the recourse SCM. We place this outcome in the first column of Fig.~\ref{fig:recourse-inference-targets}, which provides a mental map of possible recourse inference targets.
Inferring outcomes deterministically is rarely possible, however, and we thus need to resort to probabilistic reasoning. 
A slightly weaker inference target than obtaining $\widehat{Y}_{R=r}(u^*)$ would be to condition on the unit $U=u$, that is, the pre-recourse latent variables. Since $U=u$ (pre-recourse), and $U^* = u^*$ (post-recourse) share information, conditioning on $U=u$ would provide useful information about the random outcome $\widehat{Y}_{R=r}(U^*)$. This approach (corresponding to the second column in Fig.~\ref{fig:recourse-inference-targets}) would still require the knowledge of the SCM (since we need to know the values of the latent $U=u$), which is almost never available to us. Therefore, we need to use a slightly weaker probabilistic target, and instead of conditioning on $U=u$, we make use of the available information on each individual (given in terms of the observed vector of covariates $V = v$) and conditioning on it instead (column three in Fig.~\ref{fig:recourse-inference-targets}). 
\begin{figure}
    \centering
    \scalebox{0.95}{
    \begin{tikzpicture}

% Define styles
\tikzstyle{every node}=[font=\normalsize, draw=black, minimum width=2.5cm]
\tikzstyle{label}=[font=\scriptsize,align=center]
\tikzset{labnode/.style={font=\normalsize, align=center, draw = black, fill = gray!20, rotate=90, minimum width=1.8cm}}

% Horizontal line with points and labels
\newcommand{\xstr}{1.2}
\draw[thick] (1.5 * \xstr,0) -- (12 * \xstr,0);
\draw[fill] (1.5 * \xstr,0) circle [radius=0.1] node[below, yshift=-0.3cm, align=center,draw=black] {$u^*$-specific \\ (ground truth)};
\draw[-] (4.5 * \xstr,0.3) - ++(0,-0.6) node[below,align=center,draw=black] {$u$-specific\\ (oracle)};
\draw[-] (7.5 * \xstr,0.3) -- ++(0,-0.6) node[below,draw=black,yshift=-0.25cm] {$v$-specific};
\draw[-] (10.5 * \xstr,0.3) -- ++(0,-0.6) node[below,draw=black,yshift=-0.25cm] {no conditioning};

% Labels above the points
\node at (1.5 * \xstr, 1.2) {\(\widehat{Y}_{R=r}(u^*)\)};
\node at (4.5 * \xstr, 1.2) {\(P(\widehat{Y}_{R=r}=1 \mid U = u)\)};
\node at (7.5 * \xstr, 1.2) {\(P(\widehat{Y}_{R=r}=1 \mid V = v)\)};
\node at (10.5 * \xstr, 1.2) {\(P(\widehat{Y}_{R=r}=1)\)};

% Labels below the line
\node[labnode] at (-1, 1.2) {Inferential \\ Target};
\node[labnode] at (-1, -2) {Markov \\ factors};

% Markov factors below the points
\node at (1.5 * \xstr, -2) {\(V_i^* \leftarrow f_{V_i}(\Pa_i^*, u_i^*)\)};
\node at (4.5 * \xstr, -2) {\(V_i^* \mid \Pa_i^*, U_i\)};
\node at (7.5 * \xstr, -2) {\(V_i^* \mid \Pa_i^*, \Pa_i, V_i\)};
\node at (10.5 * \xstr, -2) {\(V_i^* \mid \Pa_i^*\)};

\end{tikzpicture}
    }
    \caption{Visualization of different possible inference goals for recourse.}
    \label{fig:recourse-inference-targets}
\end{figure}
Finally, a baseline approach we mention is to drop the information on the individual pre-recourse, and simply estimate $P(\widehat{Y}_{R=r}=1)$ without conditioning on any information (Fig.~\ref{fig:recourse-inference-targets}, column four). In this way, however, we may drop useful information. In order perform recourse inference, in this manuscript we opt for conditioning on the observed covariate pre-recourse $V =v$, yielding the following problem definition:
\begin{definition}[Causal Recourse Under Resampling] \label{def:new-recourse}
    Let $\mathcal{M}^{P(u^*)}_{R=r}$ be a recourse SCM, and let $\text{cost}(R=r, V = v)$ be the cost of implementing a recourse action $R = r$ for an individual with attributes $V=v$. The problem of finding the optimal recourse action is given by:
    \begin{align}
        \argmin_{R, r} &\quad  \text{cost(R = r, V = v)} \label{eq:new-cost-term}\\
        \text{subject to}&\quad P_{\mathcal{M}^{P(u^*)}_{R=r}}(\widehat{Y}_{R=r} = 1 \mid V=v) \geq t(r),
        \label{eq:new-guarantee-term}
    \end{align}
    where $t(r)$ is a threshold that may vary according to the recourse action $R=r$, and the probability $P_{\mathcal{M}^{P(u^*)}_{R=r}}$ is computed
    based on the recourse SCM $\mathcal{M}^{P(u^*)}_{R=r}$, or data generated from $\mathcal{M}^{P(u^*)}_{R=r}$.
\end{definition}
The emphasis in the above definition is that (i) there may be a type of \textit{distribution change} in the sampling of units $P(u^* \mid u)$ compared to the initial distribution $P(u)$; and (ii) the evaluation of the probability of success in Eq.~\ref{eq:new-guarantee-term} should be considered with respect to the joint distribution over $\mathcal{M}^{P(u^*)}_{R=r}$ and not just the initial $\mathcal{M}$. In particular, the observed $V=v$ are sampled from $\mathcal{M}$ but evaluating $\widehat{Y}_{R=r}(u^*)$ post-recourse requires information beyond just $\mathcal{M}$ since it depends on the post-recourse distribution over the units $P(u^* \mid u)$; and finally (iii) the probability of success in Eq.~\ref{eq:new-guarantee-term} is conditioned on $V = v$, that is, all the observed information. Ideally, in the oracle case, we would want to compute $\widehat{Y}_{R=r}(u^*)$  (or the slightly weaker counterpart  $P_{\mathcal{M}^{P(u^*)}_{R=r}}(\widehat{Y}_{R=r} = 1 \mid U=u)$). The latter approach, even though arguably more informative, would amount to conditioning on information that is never observed, and thus does not provide us with a practicable formulation of recourse. Therefore, conditioning on $V = v$ is a relaxation that still leverages all the available information on the specific individual while defining a possibly feasible problem. We next illustrate the challenges in evaluating the optimization problem in Def.~\ref{def:new-recourse} through our running example:
\begin{example}[Exam Repetition continued]
    Consider the recourse SCM $\mathcal{M}^{P(u^*)}_{R=r}$ in Ex.~\ref{ex:recourse-scm} with a post-recourse distribution $P^{(a)}$ in Eq.~\ref{eq:exam-rep-pu-star-1}-\ref{eq:exam-rep-pu-star-3}. Consider an individual with the latent variables values $(u_X, u_1, u_2, u_3) = (0, 1, 1, -1)$. The individual attains the value $T(u) = 0$ and thus fails the exam on the first attempt. We are now interested in computing the probability of success after a recourse action $do(X=1)$. Based on $\mathcal{M}^{P(u^*)}_{R=r}$, we can see that
    \begin{align}
        T_{X=1}(U^*) = 1 + 2U_1^* - U_2^* + U_3^*.
    \end{align}
    Based on $P^{(a)}$, conditioning on $u_1 = 1$ would imply that $u_1^* = 1$, and $U_2^*, U_3^* \sim N(0, 1)$, i.e.,
    \begin{align} \label{eq:u-specific-pu}
         T_{X=1}(U^*) \mid U=u \; \overset{d}{=} \; 1 + 1 - U_2^* + U_3^* \sim N(2, 2),
    \end{align}
    which implies $P_{\mathcal{M}^{P(u^*)}_{X=1}}(\widehat{Y}_{R=r} = 1 \mid U=u) =P(T_{X=1}(U^*) > 1 \mid U=u) = 1-\Phi(-\frac{\sqrt 2}{2}) \approx 0.76$, where $\Phi$ is the cumulative distribution of $N(0, 1)$.
    
    However, as discussed above, conditioning on $U = u$ is not feasible since this information is almost never available to the data analyst. Thus, we condition on the observed values $X = 0, T = 0$. Based on the SCM, we can infer that
    \begin{align} \label{eq:gauss-slice}
        X(u) = 0, \; T(u) = 0 \implies 2U_1 - U_2 + U_3 = 0.
    \end{align}
    Let $U_T = (U_1, U_2, U_3)$ and $U^*_T = (U_1^*, U_2^*, U_3^*)$. Let the vector $a = (2, -1, 1)$ correspond to the coefficients of $U_1, U_2, U_3$ in the $f_T$ mechanism in Eq.~\ref{eq:f-T-mech}. Based on $P(u), P(u^* \mid u)$, we know that
    \begin{align}
        \begin{pmatrix}
            a^TU_T \\ a^TU^*_T
        \end{pmatrix} \sim N \left ( 
        \begin{pmatrix}
            0 \\ 0
        \end{pmatrix},
        \begin{pmatrix}
            6 & 4 \\
            4 & 6
        \end{pmatrix}
        \right ),
    \end{align}
    since $2U_1 - U_2 + U_3 \sim N(0, 6)$ and Cov$(2U_1 - U_2 + U_3, 2U^*_1 - U^*_2 + U^*_3) = 4$. Therefore, by properties of bivariate Gaussian variables, Eq.~\ref{eq:gauss-slice} implies:
    \begin{align} \label{eq:v-specific-pu}
        2U^*_1 - U^*_2 + U^*_3 \mid 2U_1 - U_2 + U_3 = 0 \sim N(0, \frac{10}{3}).
    \end{align}
    Thus, $P_{\mathcal{M}^{P(u^*)}_{X=1}}(\widehat{Y}_{R=r} = 1 \mid V=v) = 1 - \Phi(0) = \frac{1}{2}$ based on Eq.~\ref{eq:v-specific-pu}. The example highlights the distinction between a unit-level recourse probability (conditional on the unobserved $U = u$, Eq.~\ref{eq:u-specific-pu}) vs. a covariate-level recourse probability (conditional on the observed $V=v$, Eq.~\ref{eq:v-specific-pu}).
\end{example}
In the sequel, we discuss different approaches for inference of recourse probabilities in practice.

\subsection{Markovian Factorization of the Recourse Distribution} \label{sec:markov-recourse}
In this section, we discuss how the recourse problem in Eqs.~\ref{eq:new-cost-term}-\ref{eq:new-guarantee-term} may be broken down. In this paper, we work with Markovian models, meaning that the latent noise variables $U_i$ are independent (i.e., there is no hidden confounding). Clearly, solving the recourse optimization problem requires the evaluation of
\begin{align} \label{eq:post-rec-yhat}
    P_{\mathcal{M}^{P(u^*)}_{R=r}}(\widehat{Y}_{R=r} = 1 \mid V=v)
\end{align}
appearing in the condition in Eq.~\ref{eq:new-guarantee-term}. Here, $\widehat{Y}$ is a classifier, which takes the post-recourse variables $V^*$ as an input. Therefore, evaluating the expression in Eq.~\ref{eq:post-rec-yhat} requires some way of inferring the joint distribution over $P(v, v^*)$. In Fig.~\ref{fig:recourse-network}, we provide a graphical representation based on which the inference approach described in this manuscript can be understood. In Fig.~\ref{fig:recourse-network}, the observed variables $V_1,\dots, V_n$ appear, together with the recourse variables $V_1^*, \dots, V_n^*$. For each $V_i$, there is a variable part of the latent $U_i$, labeled $U_1^{(v)}$, and a fixed part of $U_i$, labeled $U_i^{(f)}$. The key idea is that the fixed part $U_i^{(f)}$ is assumed to be the same pre- and post-recourse, while the variable part $U_1^{(v)}$ may change. Therefore, the corresponding recourse variable $V_i^*$ is influenced by $U_1^{(f)}$, and also $U_1^{(v)*}$, with the latter being possibly different than $U_1^{(v)}$. Importantly, based on Fig.~\ref{fig:recourse-network}, we can also see that
\begin{align}
    V_i^* \ci V\setminus \{V_i, \Pa_i\} \mid V_i, \Pa_i, \Pa_i^*.
\end{align}
In words, once we condition on $V_i$ and its set of pre- and post-recourse parents $\Pa_i, \Pa_i^*$, the post-recourse $V_i^*$ is independent of all other pre-recourse variables $V\setminus \{V_i, \Pa_i\}$. This means that the conditional distribution $P(v^* \mid v)$ can be factorized as:
\begin{align} \label{eq:recourse-markov-fact}
    P(v^* \mid v) = \prod_{V_i \notin R} P(v_i^* \mid pa_i^*, v_i, pa_i) \times \prod_{V_i \in R} \mathbbm{1}(V_i = r_{V_i}),
\end{align}
where $r_{V_i}$ is the value of $V_i$ in the vector of recourse values $r$. We refer to the terms $P(v_i^* \mid pa_i^*, v_i, pa_i)$ as Markov factors, corresponding to the distribution of $V_i^*$ in the joint distribution $P(v^* \mid v)$ (see bottom row of Fig.~\ref{fig:recourse-inference-targets}). 
\begin{figure}[t]
    \centering
    \scalebox{1}{
    \begin{tikzpicture}[SCM,scale = 1]
        \pgfsetarrows{latex-latex};
            \newcommand{\xshift}{1.3}
            \newcommand{\yshift}{1.6}
            \newcommand{\mygray}{gray!50}

            % observed variables
            
            \node(v1) at (0 * \xshift, 1 * \yshift)[label={below left}:{$V_1$}, point];
            \node(vi) at (0 * \xshift, 0 * \yshift)[label={below left}:{$V_i$}, point];
            \node(vn) at (0 * \xshift, -1 * \yshift)[label={below left}:{$V_n$}, point];

            \node(u1) at (-1 * \xshift, 1.4 * \yshift)[label=left:{$U^{(v)}_1$}, point, fill=\mygray];
            \node(ui) at (-1 * \xshift, 0.4 * \yshift)[label=left:{$U^{(v)}_i$}, point, fill=\mygray];
            \node(un) at (-1 * \xshift, -0.6 * \yshift)[label=left:{$U^{(v)}_n$}, point, fill=\mygray];
            
            % recourse variables
            \node(v1s) at (2 * \xshift, 1 * \yshift)[label={below right}:{$V_1^*$}, point];
            \node(vis) at (2 * \xshift, 0 * \yshift)[label={below right}:{$V_i^*$}, point];
            \node(vns) at (2 * \xshift, -1 * \yshift)[label={below right}:{$V_n^*$}, point];

            \node(u1s) at (3 * \xshift, 1.4 * \yshift)[label=right:{$U^{(v)*}_1$}, point, fill=\mygray];
            \node(uis) at (3 * \xshift, 0.4 * \yshift)[label=right:{$U^{(v)*}_i$}, point, fill=\mygray];
            \node(uns) at (3 * \xshift, -0.6 * \yshift)[label=right:{$U^{(v)*}_n$}, point, fill=\mygray];

            \node(u1f) at (1 * \xshift, 1.4 * \yshift)[label=above:{$U^{(f)}_1$}, point, fill=\mygray];
            \node(uif) at (1 * \xshift, 0.4 * \yshift)[label=above:{$U^{(f)}_i$}, point, fill=\mygray];
            \node(unf) at (1 * \xshift, -0.6 * \yshift)[label=above:{$U^{(f)}_n$}, point, fill=\mygray];

            % arrows
            \path (u1) edge (v1);
            \path (ui) edge (vi);
            \path (un) edge (vn);

            \path (u1f) edge (v1);
            \path (uif) edge (vi);
            \path (unf) edge (vn);

            \path (u1s) edge (v1s);
            \path (uis) edge (vis);
            \path (uns) edge (vns);

            \path (u1f) edge (v1s);
            \path (uif) edge (vis);
            \path (unf) edge (vns);

            \path (v1) edge (vi);
            \path (vi) edge (vn);
            
            \path (v1s) edge (vis);
            \path (vis) edge (vns);

            % dots
            %\node(v1tovi) at ($(v1)!0.4!(vi) + (0.25, 0)$) {$\vdots$};
            \node at ($(v1)!0.5!(vi) + (-0.25, 0)$) {$\vdots$};
            \node at ($(vi)!0.5!(vn) + (-0.25, 0)$) {$\vdots$};

            \node at ($(v1s)!0.5!(vis) + (0.25, 0)$) {$\vdots$};
            \node at ($(vis)!0.5!(vns) + (0.25, 0)$) {$\vdots$};

            % connect the dots
            \path (v1) edge[bend left = 30] node[pos=0.4, right] {$\vdots$} (vi);
            \path (vi) edge[bend left = 30] node[pos=0.4, right] {$\vdots$} (vn);

            \path (v1s) edge[bend right = 30] node[pos=0.4, left] {$\vdots$} (vis);
            \path (vis) edge[bend right = 30] node[pos=0.4, left] {$\vdots$} (vns);
    \end{tikzpicture}
    }
    \caption{Informal causal diagram over observational and recourse variables. Vertical dots indicate arbitrary arrows and variables between the observed ($V$) and recourse variables ($V^*$).}
    \label{fig:recourse-network}
\end{figure}
We remark that similar reasoning as above could be used if we considered conditioning on $U_i=u_i$ instead of the pair $(V_i, \Pa_i)$. Note that
\begin{align}
    V_i^* \ci U \setminus U_i \mid U_i, \Pa_i^*,
\end{align}
which then implies the factorization
\begin{align}
    P(v^* \mid u) = \prod_{V_i \notin R} P(v_i^* \mid pa_i^*, u_i) \times \prod_{V_i \in R} \mathbbm{1}(V_i = r_{V_i}). 
\end{align}
However, in this paper, we are generally interested in conditioning on $V=v$, although we discuss some alternative approaches later on.

Various inferential challenges arise when considering the factorization in Eq.~\ref{eq:recourse-markov-fact}. Firstly, one may ask whether the conditional distribution $P(v_i^* \mid pa_i^*, v_i, pa_i)$ can be recovered in the case when there is no post-recourse data, i.e., $V^*$ is not observed. Secondly, one may ask how to infer this conditional distribution if some (limited) amount of recourse data is available.
The remainder of the section deals with these inferential challenges, and is organized as follows (see Fig.~\ref{fig:recourse-overview} for a schematic overview). 
First, in Sec.~\ref{sec:PRS} we formally introduce post-recourse stability (PRS), which tells us whether the distribution over the latent variables pre- and post-recourse is the same (intuitively, in Fig.~\ref{fig:recourse-network}, whether $U_i^{(v)}$ and $U_i^{(v)*}$ are distributionally the same). Assuming we have access to observational data (pre-recourse), the ability to infer the effects of recourse actions depends on PRS. In Sec.~\ref{sec:msc-obs}, we show how to perform recourse inference from observational data under PRS, based on a copula model. In Sec.~\ref{sec:msc-rec}, we introduce the concept of \textit{recourse data} -- assuming that data is available on individuals after they have performed recourse actions (that is, we have both pre-recourse samples $V$ and post-recourse samples $V^*$ for the same set of individuals). Then, we discuss how to perform recourse inference from combinations of observational and recourse data under PRS. Finally, in Sec.~\ref{sec:mic-rec}, we demonstrate that recourse inference is possible from combinations of observational and recourse data even if the PRS does not hold.

\subsection{Post-Recourse Stability (PRS)} \label{sec:PRS}
We begin the discussion by stating a set of conditions under which reasoning about recourse will be possible from observational data alone:
\begin{definition}[Post-Recourse Stability] \label{def:post-recourse-stability}
    Let $\mathcal{M} = \langle\mathcal{F}, P(u)\rangle$ be the SCM pre-recourse, and $\mathcal{M}^* = \langle\mathcal{F}_{R=r}, P^*(u)\rangle$ post-recourse. Suppose the unit $u = (u^{(f)}, u^{(v)})$ is partitioned into fixed and variable exogenous features, that is, only $u^{(v)}$ is resampled after recourse. Then, post-recourse stability (PRS) is defined as:
    \begin{enumerate}[label=(\alph*)]
        %\item $\mathcal{F}^*_{-R} = \mathcal{F}_{-R}$,
        \item the exogenous variables of non-descendant variables ($\nde(R)$) of the recourse action $R=r$ remain unchanged 
        \begin{align} \label{eq:u-nde-invariant}
            U^*_{\nde(R)} = U_{\nde(R)},
        \end{align}
        \item for all $V_i$ descendants of $R$, we have that
        \begin{align}
            %\forall\; V_i \in \de(R) 
            U^{*(f)}_{i} &= U^{(f)}_{i}, \text{ and } \label{eq:u-1-invariant}\\ 
            P(U^{*(v)}_{i} = u^{(v)}_{i} \mid U^{*(f)}_{i} = u^{(f)}_{i}) &= P(U^{(v)}_{i} = u^{(v)}_{i} \mid U^{(f)}_{i} = u^{(f)}_{i}). \label{eq:u-2-stable}
        \end{align}
        % or written together as
        % \begin{align}
        %     P(U^*_i = u^*_i \mid U_i = u_i) = \mathbbm{1}(u^{*(f)}_i = u^{(f)}_i) P(U^{(v)}_{i} = u^{*(v)}_i \mid U^{(f)}_i = u_i^{(f)}). 
        % \end{align}
        %\item $P(u_2 \mid u_1) = P^*(u_2 \mid u_1)$.
    \end{enumerate}
    % \begin{align}
    %     \text{(a) } \mathcal{F}^*_{-R} = \mathcal{F}_{-R}, \quad
    %     \text{(b) } P(u^{(2)} \mid u^{(1)}) = P^*(u^{(2)} \mid u^{(1)}).
    % \end{align}
\end{definition}
Despite the non-trivial notation, the two conditions are quite intuitive. 
%Firstly, we assume that the causal mechanisms $\mathcal{F}_{-R}$ of the non-intervened variables remain unchanged. 
Firstly, Eq.~\ref{eq:u-nde-invariant} states that for all non-descendants of the recourse action $do(R=r)$, the latent variables remain unchanged. This is natural because these variables are not intervened upon, and neither are any of their ancestors (hence, we expect them to remain invariant).
Secondly, Eq.~\ref{eq:u-1-invariant} states that the fixed part of the latent $u_i$, labeled $u_i^{(f)}$, remains invariant, while Eq.~\ref{eq:u-2-stable} states that the sampling of the variable (coincidental) part $u^{(v)}$ of the unit $u$ is the same as in the original SCM $\mathcal{M}$, given the fixed part of the unit $u^{(f)}$. 
This condition precludes certain scenarios, for example, settings in which applicants perform better on repeated tries of a standardized test; in this case, the $u^{(v)}$ distribution post-recourse may change, even though the $u^{(f)}$ (intrinsic ability) would not. 
\begin{example}[Exam Repetition -- Post-Recourse Stability]
    Going back to the recourse SCM from Ex.~\ref{ex:recourse-scm}, consider the two distributions over post-recourse latent variables, $P^{(a)}$ and $P^{(b)}$. For both distributions, the latent $U_1$ remains unchanged pre- and post-recourse. For $P^{(a)}$, the post-recourse latent variables $U^*_2, U^*_3$ are distributed as $N(0, 1)$, same as pre-recourse. However, this is not true for $P^{(b)}$. The distribution of $U^*_2$ is $N(1, 1)$ post-recourse, whereas it was $N(0, 1)$ pre-recourse. 
\end{example}
As shown next, settings where PRS holds are interesting since reasoning about recourse from observational data is feasible (the theorem's proof is given in Appendix~\ref{appendix:theorem-proofs}):
\begin{theorem}[Post-Recourse Stability $\implies$ Margin Stability] \label{thm:prs-implies-msc}
    Let $U \sim P(u)$ with $P(u)$ the distribution over units before recourse, and $U^* \sim P^*(u)$ with $P^*(u)$ the distribution after recourse. Post-recourse stability (Def.~\ref{def:post-recourse-stability}) implies margin stability, written as
    \begin{align}
        f_{V_i}(\pa_i, U_i) \overset{d}{=} f_{V_i}(\pa_i, U_i^*) \quad \forall\; i, \pa_i \text{ fixed}. \label{eq:margin-stability}
    \end{align}
\end{theorem}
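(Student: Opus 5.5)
The plan is to show directly that, for every index $i$, the marginal law of $U_i^*$ coincides with that of $U_i$. Once this holds, for any fixed parent value $\pa_i$ the random variables $f_{V_i}(\pa_i, U_i)$ and $f_{V_i}(\pa_i, U_i^*)$ are the images of identically distributed arguments under the same measurable map $f_{V_i}(\pa_i, \cdot)$. Hence they are equal in distribution, which is exactly the margin stability in Eq.~\ref{eq:margin-stability}. The mechanisms $\mathcal{F}$ are assumed unchanged for non-intervened variables, so $f_{V_i}$ is the same function pre- and post-recourse. For $V_i \in R$ there is nothing to prove beyond noting that the mechanism is replaced by a constant in both evaluations; equivalently, we restrict attention to $V_i \notin R$, which is the relevant case for the Markov factors in Eq.~\ref{eq:recourse-markov-fact}.

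The argument then splits into two cases according to Def.~\ref{def:post-recourse-stability}.

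\emph{Case (a), $V_i \in \nde(R)$.} Eq.~\ref{eq:u-nde-invariant} gives $U_i^* = U_i$ almost surely. Equality in distribution is immediate; in fact the two outcomes are equal pointwise.

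\emph{Case (b), $V_i \in \de(R)$.} Write $U_i = (U_i^{(f)}, U_i^{(v)})$ and compute the joint law of $U_i^*$ by conditioning on the fixed part:
\begin{align*}
P(U_i^{*(f)} = u^{(f)}, U_i^{*(v)} = u^{(v)}) = P(U_i^{*(v)} = u^{(v)} \mid U_i^{*(f)} = u^{(f)})\, P(U_i^{*(f)} = u^{(f)}).
\end{align*}
By Eq.~\ref{eq:u-1-invariant}, $U_i^{*(f)} = U_i^{(f)}$, so the second factor equals $P(U_i^{(f)} = u^{(f)})$. By Eq.~\ref{eq:u-2-stable}, the first factor equals $P(U_i^{(v)} = u^{(v)} \mid U_i^{(f)} = u^{(f)})$. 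Multiplying the two factors back together recovers $P(U_i = u_i)$, so $U_i^* \overset{d}{=} U_i$.

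The main subtlety is interpretive rather than computational. One has to read the conditional in Eq.~\ref{eq:u-2-stable} as holding for $P$-almost every $u^{(f)}$. For continuous latents, the sums should be replaced by integrals against regular conditional distributions, and the same disintegration argument goes through. One must also check that "$\pa_i$ fixed" means the parents are held at a constant value rather than passed as random post-recourse parents. With constant parents, any dependence between $\Pa_i^*$ and $U_i^*$ is irrelevant, and no independence assumption beyond the Markovian structure is needed.
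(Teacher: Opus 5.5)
Your proof is correct and is essentially the paper's argument: you show $U_i^* \overset{d}{=} U_i$ by factoring the law of $U_i^*$ into the marginal of the fixed part (Eq.~\ref{eq:u-1-invariant}) and the conditional of the variable part (Eq.~\ref{eq:u-2-stable}), then push it through $f_{V_i}(\pa_i,\cdot)$. The explicit non-descendant case and the measure-theoretic remarks are harmless additions the paper leaves implicit, and you need not set $V_i \in R$ aside, since the statement concerns the original $f_{V_i}$ and your two cases already cover every node.
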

The importance of this result stems from the following. Suppose there is a unit $u$ with $V(u) = v$ implementing a recourse action $R = r$. Let $V^*_i, \,\Pa^*_i$ denote a variable and its parents post-recourse, respectively. Based on Eq.~\ref{eq:margin-stability} we know that, given the parents under the action $do(R = r)$, denoted by $\Pa^*_i(u^*) = \pa^*_i$, the distribution of $V^*_i$ after recourse is the same as the observational distribution of $V_i$ given $\Pa_i = \pa^*_i$, i.e.,  
\begin{align}
    V^*_i \mid \Pa^*_i = \pa^*_i \overset{d}{=} V_i \mid \Pa_i = \pa^*_i. \label{eq:stability-imp}
\end{align}
The l.h.s.\;of Eq.~\ref{eq:stability-imp} is the marginal target distribution after recourse, and the r.h.s.\;is available from observational data. Thus, we know the marginal distribution of $V^*_i$ conditional on parents $\Pa^*_i$.
Nonetheless, note that we have still not leveraged the fact that the values of $V_i,\, \Pa_i$ before recourse were observed, which may further improve our inference.
\begin{figure}
    \centering
    \scalebox{0.7}{
    \begin{tikzpicture}[
    %node distance=2cm,
    box/.style={rectangle, draw, minimum width=3cm, minimum height=1cm, align=center, rounded corners},
    title/.style={rectangle, draw, fill=blue!30, minimum width=4cm, minimum height=1cm, align=center, rounded corners},
    decision/.style={rectangle, draw, fill=gray!20, minimum width=3cm, minimum height=1cm, align=center, rounded corners}
]
    % Define the Gaussian function
    \newcommand\gaussian[2]{exp(-(#1)^2/(2*#2)) / (sqrt(2*pi*#2))}

    % Pre-Recourse box
    \node[box] at (0, 4.5) {Pre-Recourse};

    % Draw the Gaussian plots
    \begin{axis}[
        name=pre-rec,
        at={(-3cm,0)},
        no markers,
        domain=-7:7,
        samples=100,
        xmin=-7,xmax=7.5,
        ymin=0,ymax=0.18,
        axis lines*=left,
        xlabel={$T_{X=1}(U)$},
        ylabel={Probability density},
        xtick=\empty,
        ytick=\empty,
        extra x ticks={1},
        extra x tick labels={1},
        extra x tick style={grid style={draw=none}},
        enlargelimits=false,
        clip=false,
        axis on top,
        grid = major,
        y post scale=0.6,
        x post scale=0.8,
        axis lines = left
    ]
    \addplot [thick,cyan!50!black, fill=cyan!95!black, -] {\gaussian{x-1}{6}} \closedcycle;
    \end{axis}

    \node[box] at (11.75, 7) {Post-Recourse};

    \node[decision] at (6, 1.8) {Margin\\Stability?};

    % Draw the Gaussian plots
    \begin{axis}[
        name=post-msc,
        at={(9cm,2.5cm)},
        no markers,
        domain=-7:7,
        samples=100,
        xmin=-7,xmax=7.5,
        ymin=0,ymax=0.18,
        axis lines*=left,
        xlabel={$T_{X=1}(U^*)$},
        ylabel={Probability density},
        xtick=\empty,
        ytick=\empty,
        extra x ticks={1},
        extra x tick labels={1},
        extra x tick style={grid style={draw=none}},
        enlargelimits=false,
        clip=false,
        axis on top,
        grid = major,
        y post scale=0.6,
        x post scale=0.8,
        axis lines = left
    ]
    \addplot [thick,cyan!50!black, fill=cyan!95!black, -] {\gaussian{x-1}{6}} \closedcycle;
    \end{axis}

    \begin{axis}[
        name=post-msc-false,
        at={(9cm,-2.5cm)},
        no markers,
        domain=-7:7,
        samples=100,
        xmin=-7,xmax=7.5,
        ymin=0,ymax=0.18,
        axis lines*=left,
        xlabel={$T_{X=1}(U^*)$},
        ylabel={Probability density},
        xtick=\empty,
        extra x ticks={-1},
        extra x tick labels={-1},
        extra x tick style={grid style={draw=none}},
        ytick=\empty,
        enlargelimits=false,
        clip=false,
        axis on top,
        grid = major,
        y post scale=0.6,
        x post scale=0.8,
        axis lines = left
    ]
    \addplot [thick, fill=red!60!white, domain=-7:7, -] {\gaussian{x+1}{6}} \closedcycle;
    %\addplot [thick,cyan!50!black, fill=red!95!black, -] {\gaussian{x+1}{6}};
    \end{axis}

    \draw[->, -Latex, shorten >=1cm] (pre-rec.east) -- (post-msc.west) node[midway, above left] {yes};
    \draw[->, -Latex, shorten >=1cm] (pre-rec.east) -- (post-msc-false.west)node[midway, below left] {no};
\end{tikzpicture}
    }
    \caption{Schematic representation of margin stability conditions (Thm.~\ref{thm:prs-implies-msc}) related to Ex.~\ref{ex:msc-implications}. On the left we have the marginal distribution pre-recourse, whereas on the right we have the possible post-recourse distributions. If MSC hold, then the post-recourse distribution will be equal to the pre-recourse one (in blue). However, if MSC do not hold, then the post-recourse distribution may differ (in red).}
    \label{fig:msc-schema}
\end{figure}
\begin{example}[Exam Repetition -- MSC Implication] \label{ex:msc-implications}
    Consider the recourse SCM from Ex.~\ref{ex:recourse-scm} and the two post-recourse distributions $P^{(a)}, P^{(b)}$. Suppose now we are interested in the distribution of test scores after recourse, assuming all units of the population implemented the recourse action $do(X = 1)$. Then, using the $f_T$ mechanism in Eq.~\ref{eq:f-T-mech}, we have that for $P^{(a)}$ 
    \begin{align}
        T_{X=1}(U_T^*) \sim N(1, 6)
    \end{align}
    since $X = 1$ and $2U^*_1 - U^*_2 + U^*_3 \sim N(0, 6)$ according to $P^{(a)}$. Crucially, note that the pre-recourse distribution of test scores equals
    \begin{align}
        T_{X=1}(U_T) \sim N(1, 6),
    \end{align}
    that is, the marginal distributions pre- and post-recourse are equal. This is a key implication of post-recourse stability.

    On the other hand, for $P^{(b)}$, we have that $2U^*_1 - U^*_2 + U^*_3 \sim N(-2, 6)$, and therefore
    \begin{align}
        T_{X=1}(U_T^*) \sim N(-1, 6)
    \end{align}
    In this case, the post-recourse marginal differs from the pre-recourse marginal. A schematic representation is shown in Fig.~\ref{fig:msc-schema}.
\end{example}
The key observation is that, in absence of data on individuals implementing recourse actions, and without PRS, the marginal distribution after recourse is not recoverable. Therefore, in absence of PRS and recourse data, inference on recourse actions would generally not be feasible. Under PRS, at least the marginal distribution following from recourse action can be identified. We will leverage this insight in the sequel. 

\paragraph{Connection to Unobservability of Potential Outcomes.} Before continuing, we draw an important connection to the inherent unobservability of joint potential outcomes $\widehat{Y}(u), \widehat{Y}_{R=r}(u)$. Previously, we discussed how a unit $u$ can be decomposed into its fixed part $u^{(f)}$ and variable part $u^{(v)}$ that is resampled. For cases in which the proportion of the variance explained by $u^{(v)}$ tends to $0$, that is, the unit's value is almost entirely fixed, we expect that the counterfactual outcome $\widehat{Y}_{R=r}(u)$ is equal to the post-recourse outcome $\widehat{Y}_{R=r}(u^*)$ obtained by the unit $u$ after implementing recourse. Therefore, in such a setting, if the post-recourse sample $\widehat{Y}_{R=r}(u^*)$ is available, it would allow us to effectively observe the counterfactual $\widehat{Y}_{R=r}(u)$. The opposite extreme would be when the proportion of variance explained by $u^{(f)}$ tends to $0$, in which case the unit is considered as entirely resampled, i.e., $U \ci U^*$ in the previous notation.

\subsection{Non-Parametric Inference Under Post-Recourse Stability -- Observational Data} \label{sec:msc-obs}
In Sec.~\ref{sec:markov-recourse} we discussed several different ways of incorporating the pre-recourse information of the individual when computing recourse probabilities (recall Fig.~\ref{fig:recourse-inference-targets}). As argued, when inferring the distribution of the post-recourse variable $V_i^*$, in the oracle case, we would consider the post-recourse latent variables $U_i^*$, or their pre-recourse counterparts $U_i$. In practice, however, we would have to resort to conditioning on the pair $(V_i, \Pa_i)$ according to the factorization in Eq.~\ref{eq:recourse-markov-fact}. 
However, there is another route to inferring recourse probabilities. Importantly, we know that the latent $U_i$ shares information with the $U_i^*$, written $U_i \notci U_i^*$. In this section, we discuss a relaxed approach for doing so, in which instead of conditioning on $U_i$ itself, which is unobserved, we condition on an important proxy of $U_i$, namely the quantile of $V_i = v_i$ in the distribution $P(V_i \mid \Pa_i = \pa_i)$:
\begin{definition}[Distribution Quantile]
    Let $f_{V_i}$ be a mechanism of the SCM, and let $U_i$ be the latent variable associated with $V_i$. We define the quantile of $V_i=v_i$ given parents $\pa_i$ as the
    \begin{align}
        Q(v_i \mid \pa_i) = P(V_i \leq v_i \mid \pa_i).
    \end{align}
\end{definition}
The quantile $q_i$ results from a many-to-one mapping of latents $u_i \mapsto q_i$. 
Nonetheless, the quantile $q_i$ is a function of and contains information on the latent $u_i$, i.e., $q_i = q_i(u_i)$. Similarly for the post-recourse quantile, $q^*_i = q^*_i(u^*_i)$. 
Since in general $U_i \notci U^*_i$, it also implies that $Q_i \notci Q^*_i$, meaning that the quantiles pre- and post-recourse share information. 
To leverage this fact for inference, we introduce a model for the coupling of pre-recourse quantiles $q_i$ and post-recourse quantiles $q_i^*$.

\paragraph{Applying Sklar's Theorem.} To model the coupling of quantiles we apply Sklar's theorem \citep{sklar1959fonctions}. The theorem states that a bivariate cumulative distribution function $H(x_1, x_2)$ can be expressed in terms of its marginals $F_1, F_2$ and a copula $C$, i.e.,
\begin{align}
    H(x_1, x_2) = C(F_1(x_1), F_2(x_2)).
\end{align}
The copula $C$ represents the coupling of the quantiles of the marginal distributions $F_1, F_2$ that defines the joint distribution and is a commonly used modeling tool in statistics, finance, econometrics, and a number of other domains \citep{jaworski2010copula}. By an application of Sklar's theorem, for any pair of distributions
\begin{align}
    V_i \mid \Pa_i = \pa_i \text{ and } V^*_i \mid \Pa^*_i = \pa^*_i,
\end{align}
there exists a copula that relates their quantiles. 

\paragraph{Why Quantiles?} In the above construction, quantiles play a key role as a connection point with the latent variables $U$. Quantiles can be thought of as the best possible non-parametric proxy for the unobserved unit $u$, since the quantiles are (i) independent of the value of the parents $\Pa_i = \pa_i$; (ii) independent of the mechanism $f_{V_i}$ of $V_i$. However, the many-to-one mapping of a unit $Q_i: u_i \mapsto q_i$ can be understood as a type of quotient operation, with all units $\{u_i: Q_i(u_i) = q_i \}$ considered as congruent. In other words, no amount of data will allow an analyst to distinguish among the units in the set $\{u_i: Q_i(u_i) = q_i \}$ regardless of the level of sophistication of the inference method. However, units corresponding to different values of $q_i, q'_i$ may be distinguished, and this fact may allow us to better infer recourse probabilities.

\begin{example}[Exam Repetition -- Congruence of Units]
    Consider the recourse SCM in Ex.~\ref{ex:recourse-scm} with a post-recourse distribution $P^{(a)}$ and an individual with the latent variables values $(u_X, u_1, u_2, u_3) = (0, 1, 1, -1)$. For this individual, we have that
    \begin{align}
        X(u) = 0, T(u) = 0. 
    \end{align}
    The marginal distribution of $T \mid X = 0$ is $N(0, 6)$. Therefore, the unit $(0, 1, 1, -1)$ corresponds to the 50\% quantile (i.e., the median). Importantly, any other unit satisfying
    \begin{align} \label{eq:exam-congruence}
        2u_1 - u_2 + u_3 = 0
    \end{align}
    also corresponds to the 50\% quantile. Thus, the 50\% quantile corresponds to the subspace given by $\{(u_1, u_2, u_3) \in \mathbbm{R}^3: 2u_1 - u_2 + u_3 = 0\}$.
\end{example}
The above example illustrates that quantiles are mixing different kinds of units. However, these units may share some similarity. In the sequel, the Kendall's $\tau$ parameter of the copula will represent a measure of how similar the units within a fixed quantile are.

\paragraph{Inference Based on Frank's Copula.} In this paper, we use Frank's copula \citep{frank1979simultaneous} parameterized by the Kendall's $\tau$ correlation coefficient \citep{kendall1948rank} for the coupling of quantiles of $V_i \mid \Pa_i = \pa_i$ and $V^*_i \mid \Pa^*_i = \pa^*_i$. The joint distributions over the quantiles for values of $\tau \in \{ -0.5, 0, 0.5, 1\}$ are shown in Fig.~\ref{fig:frank-tau}.
\begin{figure}
    \centering
    \includegraphics[width=1\textwidth]{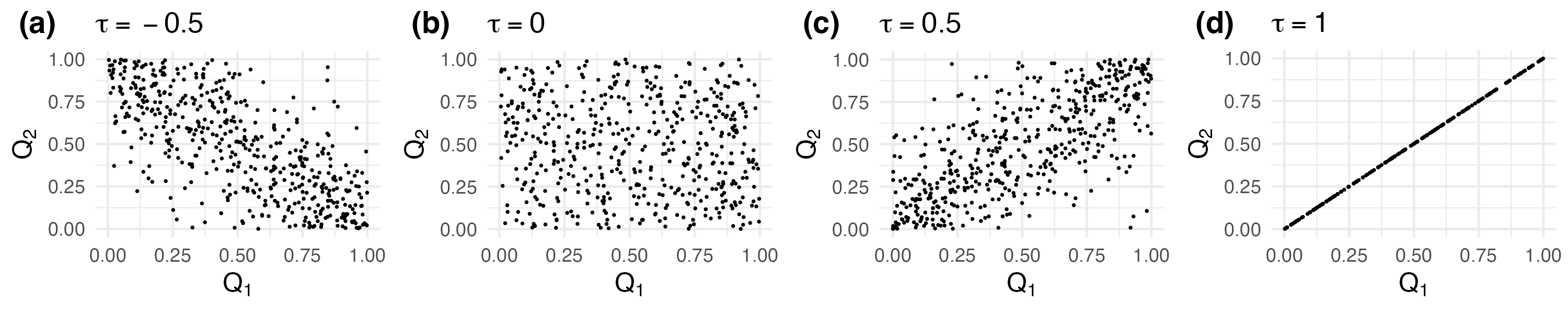}
    \caption{Frank's copula coupling of quantiles for different values of $\tau \in \{-0.5, 0, 0.5, 1\}$.}
    \label{fig:frank-tau}
\end{figure}
As can be seen from the figure, the larger the $\tau$, the more closely related the quantiles are. For $\tau = 1$, the quantiles pre- and post-recourse are identical. Values of $\tau \in (0, 1)$ still have the property that the expectation of $\ex [Q^*_i \mid Q_i = q_i] = q_i$, but the spread around the expected value increases as $\tau$ approaches $0$. For $\tau = 0$, the quantiles are independent, representing the so-called independence copula. In this case, given the value of $\Pa_i = \pa_i$ pre-recourse, knowing the value of $V_i = v_i$ does not tell us anything additional about the post-recourse value $V_i^* = v_i^*$ (in other words, only the pre-recourse parent set $\Pa_i$ possibly contains information about $V_i^*$). For $\tau < 0$, the quantile coupling is reversed, so that $\ex [Q^*_i \mid Q_i = q_i] = 1 - q_i$.

In Alg.~\ref{algo:frank-obs} we describe the procedure for estimating the effect of a recourse action under post-recourse stability based on observational data. The $\tau$ parameter of the copula is an input to the algorithm. We remark that at each node $V_i$, the $\tau$ parameter may depend on the parents $\Pa_i = \pa_i$, i.e., $\tau = \tau(\pa_i)$. However, for simplicity, we assume $\tau$ is constant for each node $V_i$ and each value of the parents $\Pa_i = \pa_i$. The $\tau$ parameter is an input since it cannot be inferred purely from observational data. Therefore, Alg.~\ref{algo:frank-obs} can be run with varying levels of $\tau$, and we can inspect how this affects the post-recourse distribution through a sensitivity-type analysis with $\tau$ acting as the sensitivity parameter (demonstrated empirically in Sec.~\ref{sec:experiments}). 

\begin{algorithm}[t]
    \caption{Copula Inference from Observational Data}
     \begin{algorithmic}[1]
        \Statex \textbullet~\textbf{Inputs:} Causal Diagram $\mathcal{G}$, Observational Data $\mathcal{D}$, Kendall's $\tau$ Parameter of the Frank copula, Recourse Action $R = r$, Individual $V = v$, Number of Monte Carlo samples $N$.
        \For{$V_i \in V$ in topological order}
            \State perform quantile regression $V_i \sim \Pa_i$ to learn the quantile function $Q(v_i \mid \pa_i)$ using $\mathcal{D}$
        \EndFor
        \For{$\de(R) \setminus R \in V$ in topological order}
            \State infer the quantile $q_i$ of $v_i \mid \pa_i$ from $Q(v_i \mid \pa_i)$, labeled $q_i$
            \State draw $N$ samples from the conditional Frank's copula $C_{\tau}(\cdot \mid Q_1 = q_i)$ labeled $\{q_i^{(k)}\}_{k=1:n}$ 
            \State infer the $N$ values of $v^{(k)*}_i$ as
            \begin{align}
                v_i^{(k)*} \gets Q^{-1}(q_i^{(k)} \mid \pa_i^{(k)*})
            \end{align}
            where parents $\pa^{(k)*}$ are either known or possibly obtained in the previous steps.
        \EndFor
        \Statex \textbullet~\textbf{Output:} $N$ Monte Carlo values $\{v^{(k)*}\}_{k=1:n}$ under recourse $R = r$ for individual $V=v$.
     \end{algorithmic}
     \label{algo:frank-obs}
\end{algorithm}
The algorithm first iterates through all nodes in the graph and learns the conditional distribution of $V_i \mid \Pa_i$. Then, for each descendant node of the recourse action $R = r$ that is not subject to intervention itself, we first infer the quantile $q_i$ of the observation $V_i = v_i$ for the individual. Then, based on the Frank's copula $C_\tau$, we sample the $N$ Monte Carlo quantiles conditional on $q_i$ and under the recourse action, labeled $\{q_i^{(k)}\}_{k=1:n}$. We then obtain the Monte Carlo values for $v^{(k)*}_i$ based on the quantiles $q_i^{(k)}$ and the quantile function $Q(\cdot \mid \pa^{(k)*})$ of the distribution $V_i \mid \pa^{(k)*}_i$. The main drawback of the procedure is the inherent lack of identifiability of $\tau$ from observational data. However, if we have the resulting data from implementing recourse decisions in practice, $\tau$ may in fact be inferred. Moreover, we can also test whether the Frank copula model holds for our data, as discussed in the sequel.

\subsection{Non-Parametric Inference Under Post-Recourse Stability -- Recourse Data} \label{sec:msc-rec}
To describe the problem of learning from recourse data, we first provide a formal definition of recourse data:
\begin{definition}[Recourse Data] \label{def:recourse-data}
    Let $\mathcal{M}^{P(u^*)}_{R=r} = \langle \mathcal{F}, P(u), R=r, P(u^* \mid u) \rangle$ be a recourse SCM. Samples are drawn from $\mathcal{M}^{P(u^*)}_{R=r}$ as follows:
    \begin{enumerate}[label=(S\arabic*)]
        \item Draw $U = u$ according to $P(u)$, \label{step:sample-u}
        \item Evaluate $V(u) = v$ according to $\mathcal{F}$, \label{step:eval-F}
        \item If $\widehat{Y}(u) = 0$, then \label{step:recourse-sample}
            \subitem (S3a) Draw $U^* = u^*$ according to $P(u^* \mid u)$,
            \subitem (S3b) Evaluate $V^*(u^*)$ based on mechanisms $\mathcal{F}_{R \gets r}$.
    \end{enumerate}
    Let $P_{R=r}^*(V)$ be the described post-recourse distribution, and let $V_{R=r}^*$ be the random variable describing the post-recourse covariates.
\end{definition}
The definition of recourse data captures several key characteristics of the setting. 
Firstly, in \ref{step:sample-u}-\ref{step:eval-F}, the pre-recourse observational sample is obtained. After this, in \ref{step:recourse-sample}, a post-recourse sample is drawn based on $P(u^* \mid u)$ and $\mathcal{F}_{R\gets r}$, but this happens only for \textit{individuals with $\widehat{Y}(u) = 0$}. In Def.~\ref{def:recourse-data}, we suppose that a recourse sample is drawn for all individuals with $\widehat Y (u) = 0$, while in practice only a subset of these individuals may implement recourse actions (suppose $S = 1$ is an indicator of whether recourse is implemented). Cases where the probability of implementing recourse, written $P(S = 1 \mid U = u)$ depends only on the observed variables, 
\begin{align}
    P(S = 1 \mid U = u) = P(S = 1 \mid V(u))
\end{align}
can also be handled using the methods described in this paper, assuming that we have
\begin{align}
    \delta < P(S = 1 \mid V(u)) < 1 - \delta
\end{align}
for all $u$ with  $\widehat Y(u) = 0$ (in this case, a simple reweighing of the recourse data would be, in infinite samples, equivalent to obtaining recourse data on every individual with $\widehat Y (u) = 0$). More complex cases, where the probability of implementing recourse depends on $U$ or even $U^*$ are not considered, and are left for future work.

While the full recourse SCM $\mathcal{M}^{P(u^*)}_{R=r}$ would, in principle, allow us to generate post-recourse samples for any unit $u$, in practice, no post-recourse samples will be available for units with $\widehat{Y}(u) = 1$ since those with a positive outcome will not implement recourse actions. Therefore, any recourse data collected in practice will always be conditional on $\widehat Y = 0$, representing a kind of selection bias \citep{hernan2004structural,bareinboim2012controlling, bareinboim2022recovering}. 

\begin{example}[Exam Repetitions -- Recourse Data]
Consider the recourse SCM from Ex.~\ref{ex:recourse-scm} and the distribution over post-recourse latent variables $P^{(a)}$. Then, consider all units with $X(u) = 0$. Based on the $f_T$ mechanism in Eq.~\ref{eq:f-T-mech}, only the units with $2u_1 - u_2 + u_3 < 1$ will have $T(u) < 1$ and would, therefore, possibly be interested in implementing recourse. For those with $X(u) = 1$, only units with $2u_1 - u_2 + u_3 < 0$ would have $T(u) < 1$ and thus may implement recourse. In conclusion, recourse data would be available for
\begin{align}
    u \in \mathcal{U} \text{ s.t. } \{ u_X = 0,  2u_1 - u_2 + u_3 < 1 \} \vee \{ u_X = 1,  2u_1 - u_2 + u_3 < 0 \}
\end{align}
\end{example}
Based on the definition of recourse data, we can now define the concept of recourse learning:
\begin{algorithm}[t]
    \caption{Copula Inference from Observational and Recourse Data}
     \begin{algorithmic}[1]
        \Statex \textbullet~\textbf{Inputs:} Causal Diagram $\mathcal{G}$, Observational Data $\mathcal{D}$, Recourse $do(R = r)$, Recourse Data $\mathcal{D}^*$.
        \For{$V_i \in \{\de(R) \setminus R\}$ in topological order}
            \State learn quantile function of $v_i \mid \pa_i$, labeled $Q(v_i \mid \pa_i)$, using $\mathcal{D}$
            \State infer pre-recourse quantile $\hat{q}^{(k)}_i$ of $v^{(k)}_i \mid \pa_i$ from $Q(v_i \mid \pa_i)$ $\forall k$ appearing in both $\mathcal{D}, \mathcal{D}^*$
            \State infer post-recourse quantile $\hat{q}^{(k)*}_i$ of $v^{(k)*}_i \mid \pa^{(k)*}_i$ from $Q(v^*_i \mid \pa^*_i)$
            \State compute $\hat{\tau}_i$ as maximizer of conditional copula likelihood,
            \begin{align}
                \hat{\tau}_i = \argmax_{\tau} \log f_{\tau}(\{\hat{q}^{(k)*}_i\}^{n_{\text{rec}}}_{k=1} \mid \{\hat{q}^{(k)}_i\}^{n_{\text{rec}}}_{k=1}).
            \end{align}
            \State compute baseline Cramer-von Mises statistic $S_{i,n}$ as in Eq.~\ref{eq:cvm-baseline}
            \For{$m = 1, \ldots, M$} \Comment{Bootstrap loop}
                \State draw $\tilde{q}^{(k,m)*}_i \sim \text{Frank}(\hat\tau_i) \mid \hat{q}^{(k)}_i$ for all $k$ \Comment{Simulate from $H_0$}
                \State set $\tilde{v}^{(k,m)*}_i = Q^{-1}(\tilde{q}^{(k,m)*}_i \mid \pa^{(k)*}_i)$ \Comment{Map quantile to $V_i$ value}
                \State refit $Q^{(m)}$ on a bootstrapped dataset $\mathcal{D}^{(m)}$
                \State re-estimate $\hat{q}^{(k,m)}_i, \hat{q}^{(k,m)*}_i$ from $Q^{(m)}$ applied to $v^{(k)}_i, \tilde{v}^{(k,m)*}_i$, respectively
                \State compute $\hat\tau^{(m)}_i$ from pairs $(\hat{q}^{(k, m)}_i, \hat{q}^{(k,m)*}_i)$
                \State compute $S^{(m)}_{i,n}$ as the Cramer-von Mises statistic for the $m$-th bootstrap sample (Eq.~\ref{eq:cvm-boot})
            \EndFor
            \State \label{line:p-val-def} compute the p-value $p_i = \frac{1}{M} \sum_{m=1}^M \mathbb{1}(S_{i, n} < S_{i, n}^{(m)})$
        \EndFor
        \Statex \textbullet~\textbf{Output:} $\tau$ estimates $\hat{\tau}_i$, p-values $p_i$
     \end{algorithmic}
     \label{algo:frank-recourse}
\end{algorithm}
\begin{definition}[Recourse Learning] \label{def:recourse-learning}
    Consider an SCM $\mathcal{M}$ and its observational distribution $P(V)$. Further, consider a collection of recourse distributions $\{  P_{R=r}^*(V)\}_{R=r \in \mathcal{R}}$ as described in Def.~\ref{def:recourse-data}. The task of learning from recourse data is to recover the conditional distribution
    \begin{align}
        V_{R=r}^* \mid V = v
    \end{align}
    based on samples from $P(V)$ and $\{P_{R'=r'}^*(V)\}_{R'=r' \in \mathcal{R}}$.
\end{definition}
The task of learning from recourse data represents a natural setting in algorithmic recourse. Consider an institution interested in implementing a recourse policy. At first, they may attempt to infer what would happen to individuals under recourse based on observational data from $P(V)$ only. Then, they may start issuing recourse recommendations $\{R = r\} \in \mathcal{R}$, and recording samples of the variables after recourse was implemented, i.e., from $P_{R=r}^*(V)$. These additional samples can help a great deal in inferring the effects of recourse actions, since they allow one to verify whether post-recourse stability holds and whether Frank's copula model holds.    

\paragraph{Selection Bias Based on $\widehat Y = 0$.}  As indicated in the step \ref{step:recourse-sample} of recourse data sampling in Def.~\ref{def:recourse-data}, a post-recourse sample will only be available for units $u$ for whom $\widehat{Y}(u) = 0$
%\footnote{In this manuscript, we for simplicity assume that individuals who choose to perform recourse are selected completely at random -- that is, we do not model the possibility that willingness to perform recourse may be influenced by endogenous variables $V$ or exogenous $U$.}
. Therefore, the recourse data we have access to will always be subject to \textit{selection bias} \citep{hernan2004structural, bareinboim2012controlling, bareinboim2022recovering} based on the outcome $\widehat{Y}$. The key consequence of this is that pre-recourse quantiles for individuals who have post-recourse data \textit{will not follow a $\mathrm{Unif}[0,1]$ distribution}. By definition, if for a variable $V_i = v_i$, we computed its quantile conditional on the parents $\Pa_i = \pa_i$, and we pooled all the obtained quantiles across the population, the resulting distribution of quantiles would be Unif$[0,1]$. However, when we focus on the quantiles of those for whom ultimately $\widehat{Y} = 0$, the resulting distribution of quantiles need not equal Unif$[0,1]$. Care needs to be taken to account for this key feature of recourse data. Copula models are usually intended for distributional coupling where marginal distributions are Unif$[0,1]$, which is not the case for our setting.

\paragraph{Inferring $\tau$ from Recourse Data.} A procedure for inference from observational and recourse data is described in Alg.~\ref{algo:frank-recourse}. This algorithm is intended for cases when a large amount of observational together with a small amount of recourse data is available. First, based on observational data, the quantile function $Q(v_i \mid \pa_i)$ is learned. Then, for individuals with recourse data, pre and post-recourse quantiles for the sample $v_i, v^*_i$ are inferred from the quantile functions $Q(\cdot \mid \pa_i), Q(\cdot \mid \pa^*_i)$. Based on the coupling of quantiles (since pre and post-recourse samples are paired) we can perform conditional maximum likelihood estimation to infer the most likely $\tau$ for the generated data, such that the likelihood $\log f_{\tau}(q^*_i \mid q_i)$ is maximized (see Appendix \ref{appendix:cond-mle} for conditional likelihood expressions). 

\paragraph{Goodness-of-fit for Frank's copula.}
After estimating $\tau$ for each node $V_i$, we verify whether Frank's copula is an appropriate model for the data by performing a goodness-of-fit test. We adapt the approach of \citet{genest2009goodness} to account for estimation error in the quantile estimates $\hat{q}_i, \hat{q}^*_i$, which are obtained by fitting a quantile function $Q(v_i \mid \pa_i)$ (see Alg.~\ref{algo:frank-recourse}). Since the quantiles are estimated rather than computed from known marginals, the estimation error, if unaccounted for, may lead to a spurious rejection of the null hypothesis.
We first compute the empirical cumulative distribution function (ECDF) for the estimated quantile pairs $\hat{q}_i, \hat{q}^*_i$ pre- and post-recourse, defined as
\begin{align}
    C_{i, n} (u, v) = \frac{1}{n_{\text{rec}}} \sum_{k=1}^{n_{\text{rec}}} \mathbbm{1} ( u \leq \hat{q}^{(k)}_i \wedge v \leq \hat{q}^{(k)*}_i).
\end{align}
We estimate $\hat\tau_i$ by maximizing the conditional Frank copula likelihood, and denote by $C_{i, \hat\tau}$ the ECDF of the Frank$(\hat\tau_i)$ copula conditional on $\hat{q}^{(k)}_i$, approximated via Monte Carlo. The baseline Cramer-von Mises statistic is
\begin{align} \label{eq:cvm-baseline}
    S_{i,n} := \sum_{k=1}^{n_{\text{rec}}}  \left( C_{i, n} (\hat{q}^{(k)}_i, \hat{q}^{(k)*}_i) - C_{i, \hat\tau}(\hat{q}^{(k)}_i, \hat{q}^{(k)*}_i) \right)^2.
\end{align}
To construct the null distribution for the Cramer-von Mises statistic, we employ a bootstrap procedure. For each bootstrap repetition $m$, we: (i) draw post-recourse quantiles $\tilde{q}^{(k,m)*}_i$ from the fitted $C_{i, \hat\tau}(\cdot \mid \hat{q}^{(k)}_i)$ copula; (ii) map the quantiles using the inverse of the learned quantile function to obtain bootstrap samples $\tilde{v}^{(k,m)*}_i = Q^{-1}(\tilde{q}^{(k,m)*}_i \mid \pa^{(k)*}_i)$; (iii) refit the quantile function $Q^{(m)}$ on bootstrapped sample of the data $\mathcal{D}^{(m)}$; and (iv) re-estimate the pre- and post-recourse quantiles $\hat{q}^{(k,m)}_i, \hat{q}^{(k,m)*}_i$ using $Q^{(m)}$. The sampling in steps (i) and (iii) ensures that the bootstrap null distribution of the test statistic reflects both the estimation noise in the quantile function and the randomness of the post-recourse quantiles. 
Note that the bootstrap procedure conditions on the pre-recourse quantiles $\hat{q}^{(k)}_i$ throughout, which ensures that selection on $\widehat{Y} = 0$ does not affect inference (i.e., we do not assume uniform margins at any point). 
The Cramer-von Mises statistic for the $m$-th bootstrap sample is given by
\begin{align} \label{eq:cvm-boot}
    S^{(m)}_{i, n} := \sum_{k=1}^{n_{\text{rec}}}  \left( C^{(m)}_{i, n} (\hat{q}^{(k, m)}_i, \hat{q}^{(k,m)*}_i) - C_{i, \hat\tau^{(m)}}(\hat{q}^{(k, m)}_i, \hat{q}^{(k,m)*}_i) \right)^2,
\end{align}
where $\hat\tau^{(m)}_i$ is re-estimated from the bootstrap pairs $(\hat{q}^{(k, m)}_i, \hat{q}^{(k,m)*}_i)$. The p-value is the empirical quantile of $S_{i,n}$ within $\{S^{(m)}_{i,n}\}_{m=1}^M$, computed in Line~\ref{line:p-val-def}. The key question we discuss in the sequel is how to proceed when the null hypothesis of Frank's copula is rejected.

\subsection{Non-Parametric Inference Under Post-Recourse Instability -- Recourse Data} \label{sec:mic-rec}
The goodness-of-fit hypothesis test described in Alg.~\ref{algo:frank-recourse} may be rejected for several reasons\footnote{The test may be rejected if the causal diagram is misspecified and latent confounding exists (i.e., the true model is Semi-Markovian). However, we do not focus on this case, and assume a correctly specified diagram.}:
\begin{enumerate}[label=(\alph*)]
    \item \label{case:diff-copula} The quantile coupling cannot be represented by Frank's copula, 
    \item Post-recourse stability from Def.~\ref{def:post-recourse-stability} does not hold. \label{case:copula-free}
\end{enumerate}
For case \ref{case:diff-copula}, a different copula family may be more appropriate, or a non-parametric model could be introduced (such extensions would be variations of Algs.~\ref{algo:frank-obs} and \ref{algo:frank-recourse}). More interestingly, we now discuss what happens for case \ref{case:copula-free} when Post-recourse stability is violated.

\begin{algorithm}[t]
    \caption{Copula-Free Inference from Observational and Recourse Data}
     \begin{algorithmic}[1]
        \Statex \textbullet~\textbf{Inputs:} Causal Diagram $\mathcal{G}$, Observational Data $\mathcal{D}$, Recourse Action $do(R = r)$, Recourse Data $\mathcal{D}^*$, Number of Monte Carlo samples $N$.
        \For{$\de(R) \setminus R \in V$ in topological order}
            \State \label{line:copula-free-quantreg} regress $V^*_i \sim V_i + \Pa_i + \Pa^*_i$ to learn quantile function $Q(v^*_i \mid v_i, \pa_i, \pa^*_i)$ using $\mathcal{D}, \mathcal{D}^*$
            \State draw $N$ samples from Unif$[0,1]$, labeled $\{q^{(k)}\}_{k=1:n}$, and infer values of $v^{(k)*}_i$ as
            \begin{align}
                v_i^{(k)*} \gets Q^{-1}(q^{(k)} \mid v_i, \pa_i, \pa_i^{(k)*})
            \end{align}
            where values $v_i, \pa_i, \pa_i^{(k)*}$ are either known or obtained in the previous steps.
        \EndFor
        \Statex \textbullet~\textbf{Output:} $N$ Monte Carlo values $\{v^{(k)*}\}^N_{k=1}$ under recourse $R = r$ for individual $V = v$.
     \end{algorithmic}
     \label{algo:copula-free-learning}
\end{algorithm}
The procedure for this case is described in Alg.~\ref{algo:copula-free-learning}. It assumes a large amount of both observational and recourse data. Crucially, the quantile function $Q(v_i \mid \pa_i)$ learned in the observational data is no longer applicable to recourse data (since margin stability does not hold, see discussion around Ex.~\ref{ex:msc-implications}). Nonetheless, we may still be able to learn the correct quantile function based on \textit{recourse data}.
The key question here is whether observational samples could still be useful for inference even if margin stability does not hold. Even though margin stability may not be satisfied, it is still the case that the quantile of $V_i = v_i$ in the observational distribution $V_i \mid \Pa_i = \pa_i$ shares information with the quantile of $V^*_i = v_i^*$ in the interventional, post-recourse distribution $V_i^* \mid \Pa_i^* = \pa_i^*$. To leverage this connection, we learn the quantile of the recourse random variable $V_i^*$ conditional on (i) the observed parents $\pa_i$; (ii) the initial observed value $v_i$; (iii) the recourse parents $\pa^*_i$. The usage of $(v_i, \pa_i)$ in the quantile regression in Line~\ref{line:copula-free-quantreg} serves as conditioning on the quantile $q_i$ of $v_i \mid \pa_i$, since we know this quantile may be correlated with the post-recourse quantile $q_i^*$, and may thus improve inference. 
A theoretical basis for the regression in Line~\ref{line:copula-free-quantreg} is developed in the following theorem, in the case of linear models:
\begin{theorem}[EMSPE Reduction]\label{thm:linear-recourse}
    Let $Y$ denote a variable in a linear Gaussian SCM, and let $X$ denote $\pa(Y)$. Pre-recourse values are denoted by a subscript $0$, while post-recourse values have no subscript. The pre-recourse and post-recourse models can be written as
    \begin{equation}
        Y_0 \gets X_0\beta_0 + \eps_0, \qquad Y \gets X\beta + \eps,
    \end{equation}
    where $(X_{0,i}, X_i)$ are i.i.d.\ jointly Gaussian with $\ex[X_{0,i}X_{0,i}^\top] = \Sigma_0 \succ 0$ and $\ex[X_iX_i^\top] = \Sigma \succ 0$. Further, assume the noise terms $\eps_0, \eps$ are independent of covariates, and have a correlation $\rho$, so that $\eps = \rho\,\eps_0 + \sqrt{1-\rho^2}\,\beps$ with $\eps_0 \sim N(0, \sigma^2 I)$, $\beps \sim N(0, \sigma^2 I)$, and $\eps_0 \ci \beps$. 
    Let $\hbeta_c$ denote the two-stage OLS estimator obtained via:
    \begin{alignat}{2}
        \textup{Stage~I}:&\quad Y_0 &&\overset{\textup{OLS}}{\sim}\; X_0 \quad\text{to obtain } \hbeta_0 \text{ and }\heps_0 = Y_0 - X_0\hbeta_0, \\
        \textup{Stage~II}:&\quad Y &&\overset{\textup{OLS}}{\sim}\; \bX = (X \;\; \heps_0) \quad \text{to obtain} \hbeta_c,
    \end{alignat}
    Next, we label training data with a superscript $a$, and test data with $b$. Let the test prediction risk (conditional on training data) be defined as
    \begin{equation}
        R_{\mathrm{aug}} := \frac{1}{n_b}\,\ex_{\mathrm{test}}\bigl[\|Y^b - \bX^b\hbeta_c\|^2 \;\big|\; \dt\bigr],
    \end{equation}
    where $\dt = \{X_0^a, X^a, \eps_0^a, \beps^a\}$ denotes the training data.
    For the number of training samples $n_a \gg d$, $R_{\mathrm{aug}}$ satisfies
    \begin{equation}
        R_{\mathrm{aug}} \;=\; (1-\rho^2)\,\sigma^2 + \ordp{\sqrt{\frac{d}{n_a}}}.
    \end{equation}
    Since the risk of the standard single-stage $Y \overset{\textup{OLS}}{\sim} X$, written $R_{\mathrm{std}} := \frac{1}{n_b}\,\ex_{\mathrm{test}}\bigl[\|Y^b - X^b\hbeta \|^2 \;\big|\; \dt\bigr]$, equals $\sigma^2 + \ordp{\frac{d}{n_a}}$, we have $R_{\mathrm{aug}} < R_{\mathrm{std}}$ for $\rho \neq 0$ and $n_a$ sufficiently large.
\end{theorem}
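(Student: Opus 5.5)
The plan is to write the augmented regression as a linear model with an errors-in-variables perturbation and control that perturbation. The key identity is
\[
\eps^a = \rho\,\eps_0^a + \sqrt{1-\rho^2}\,\beps^a ,
\]
with $\heps_0 = \eps_0 - X_0(\hbeta_0-\beta_0) = (I-P_{X_0})\eps_0$. This gives
\[
Y = X\beta + \rho\heps_0 + \sqrt{1-\rho^2}\,\beps + \rho\,\Delta, \qquad \Delta := \eps_0-\heps_0 = P_{X_0}\eps_0 .
\]
So the oracle coefficient in Stage~II is $\gamma^\star=(\beta^\top,\rho)^\top$ with regressor $\bX=(X\;\heps_0)$. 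The residual noise is then $\sqrt{1-\rho^2}\,\beps$ plus the perturbation $\rho\Delta$. The perturbation is small: $\|\Delta\|^2=\eps_0^\top P_{X_0}\eps_0\sim\sigma^2\chi^2_d$, hence $\ordp{d}$. The same decomposition applies on the test set, where $\heps_0^b$ is formed using the training $\hbeta_0$, so that $\heps_0^b=\eps_0^b - X_0^b(\hbeta_0-\beta_0)$.

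\textbf{Step 1: Stage~II estimation error.} Write
\[
\hbeta_c-\gamma^\star=(\bX^{a\top}\bX^a)^{-1}\bX^{a\top}\bigl(\sqrt{1-\rho^2}\beps^a+\rho\Delta^a\bigr).
\]
The Gram matrix $\bX^{a\top}\bX^a/n_a$ converges to $\mathrm{diag}(\Sigma,\sigma^2)$ up to $\ordp{\sqrt{d/n_a}}$ by the LLN and a Gaussian covariance concentration bound. The columns $X^a$ and $\heps_0$ are asymptotically uncorrelated because $\eps_0$ is independent of the covariates.
\begin{itemize}
\item The $\beps^a$ term is independent of $\bX^a$, so it contributes $\ordp{\sqrt{d/n_a}}$ to the coefficient error in norm.
\item For the $\Delta^a$ term, $\|\bX^{a\top}\Delta^a\|\le\|\bX^a\|_{op}\|\Delta^a\|=\ordp{\sqrt{n_a}\sqrt d}$. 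Dividing by $n_a$ again gives $\ordp{\sqrt{d/n_a}}$.
\end{itemize}
Hence $\|\hbeta_c-\gamma^\star\|=\ordp{\sqrt{d/n_a}}$.

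\textbf{Step 2: test risk decomposition.} Write
\[
Y^b-\bX^b\hbeta_c=\sqrt{1-\rho^2}\beps^b+\rho(\eps_0^b-\heps_0^b)-\bX^b(\hbeta_c-\gamma^\star).
\]
Take the conditional expectation over the test set and divide by $n_b$:
\begin{itemize}
\item The first term gives $(1-\rho^2)\sigma^2$.
\item The second gives $\rho^2(\hbeta_0-\beta_0)^\top\Sigma_0(\hbeta_0-\beta_0)=\ordp{d/n_a}$.
\item The third gives a quadratic form in $\hbeta_c-\gamma^\star$ with the population Gram matrix, of order $\ordp{d/n_a}$.
\end{itemize}
The cross terms are where the stated $\ordp{\sqrt{d/n_a}}$ rate enters. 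The $\beps^b$ cross terms vanish in expectation. The cross term between the second and third terms is bounded by Cauchy--Schwarz by $\ordp{d/n_a}$. A residual correlation between $\heps_0^b$ and $X^b$ through $X_0^b$ yields a term linear in the estimation errors times $\ordp{1}$ quantities, which I will bound crudely by $\ordp{\sqrt{d/n_a}}$. This establishes $R_{\mathrm{aug}}=(1-\rho^2)\sigma^2+\ordp{\sqrt{d/n_a}}$.

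\textbf{Step 3: comparison and main obstacle.} For $R_{\mathrm{std}}$, the single-stage regression has noise $\eps$ with variance $\sigma^2$ independent of $X$. The standard OLS excess risk computation gives $R_{\mathrm{std}}=\sigma^2+\ordp{d/n_a}$. For $\rho\ne0$ the gap $\rho^2\sigma^2$ is a fixed positive constant and dominates the error terms once $n_a$ is large, which gives the comparison.

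I expect Step 1 to be the main obstacle. It requires handling the dependence of the generated regressor $\heps_0$ on $X_0^a$, and hence on $X^a$ through the joint Gaussianity of $(X_0,X)$, while keeping the $\Delta$ perturbation bound uniform. My first attempt will be to condition on $X_0^a$ and $X^a$, treat $P_{X_0}$ as a fixed rank-$d$ projection, and use Gaussian quadratic-form concentration for $\eps_0^\top P_{X_0}\eps_0$ and $\eps_0^\top(I-P_{X_0})\eps_0$.
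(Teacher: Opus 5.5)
Your proposal is correct and follows essentially the paper's route. It uses the same split of $Y^b-\bX^b\hbeta_c$ into three pieces: a Stage~II estimation-error term, a noise term giving $(1-\rho^2)\sigma^2+\rho^2\|\hbeta_0-\beta_0\|_{\Sigma_0}^2$, and a cross term. The estimation-error term is controlled, as in the paper's $\mathcal{K}\cdot\mathcal{F}$ bound, by Gram-matrix concentration together with the $\beps^a$ versus $P_{X_0}\eps_0^a$ decomposition of the residual noise.

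The only difference is in the cross term. The paper gets its $\ordp{\sqrt{d/n_a}}$ rate from a crude Cauchy--Schwarz bound $2\sqrt{T_1T_2}$ on the whole cross term. Your finer bookkeeping shows the $\beps^b$ cross terms vanish and the remaining one is $\ordp{d/n_a}$. It also shows there is no separate ``linear'' residual-correlation term: the $X^b$--$\heps_0^b$ correlation enters only through the test Gram matrix inside the quadratic form. So your argument actually yields the sharper $(1-\rho^2)\sigma^2+\ordp{d/n_a}$.
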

The proof is given in Appendix~\ref{appendix:linear-recourse}.
The theorem compares two approaches: (i) the standard approach, in which assuming access to recourse data, we predict a recourse variable based only on its recourse parents; and (ii) the augmented two-stage approach that leverages the fact that pre- and post-recourse noise terms are correlated (e.g., $U \notci U^*$, as discussed in Sec.~\ref{sec:new-recourse}), and computes pre-recourse residuals $\hat\eps$, and adds these residuals into the regression of the post-recourse outcome on its parents, in order to improve inference.
In terms of EMSPE on a test set, the standard approach has an irreducible loss of $\sigma^2$ and an excess loss of $\ordp{\frac{d}{n_a}}$. The two stage approach is different, however: the irreducible loss is reduced by a factor depending on the correlation of noise terms, to a level $(1-\rho^2)\sigma^2$, whereas the excess loss vanishes more slowly compared to the usual case, at the rate $\ordp{\sqrt{\frac{d}{n_a}}}$. Therefore, for large enough samples, the augmented approach outperforms the standard approach, and in this way Thm.~\ref{thm:linear-recourse} provides a basis for Alg.~\ref{algo:copula-free-learning}.
\section{Experiments} \label{sec:experiments}
In this section, we apply the procedures in Alg.~\ref{algo:frank-obs}, \ref{algo:frank-recourse}, and \ref{algo:copula-free-learning} to real and semi-synthetic data. Throughout, we use the Home Equity Line of Credit (HELOC) dataset \citep{heloc2016fico}.
The dataset contains anonymized information on $n=10459$ applicants who applied for a home equity line of credit. It includes features such as risk scores, income levels, loan-to-value ratios, and others that reflect the creditworthiness of the applicants. The goal is determine whether the applicant was deemed high (``Bad") or low (``Good") credit risk.
The experiments are accompanied with vignettes for Algs.~\href{https://dplecko.github.io/causal-recourse/vignettes/helocA.html}{\ref*{algo:frank-obs}}, \href{https://dplecko.github.io/causal-recourse/vignettes/helocB.html}{\ref*{algo:frank-recourse}}, \href{https://dplecko.github.io/causal-recourse/vignettes/helocC.html}{\ref*{algo:copula-free-learning}} available in our \href{https://github.com/dplecko/causal-recourse}{code repository}.

\begin{figure}
    \centering
    \begin{subfigure}{0.45\textwidth}
        \centering
        \begin{tikzpicture}[SCM,scale = 0.9]
        \pgfsetarrows{latex-latex};
            \newcommand{\xshift}{1.3}
            \node(a1) at (0 * \xshift, 0.5)[label=above:{$A_1$}, point];
            \node(a2) at (0 * \xshift, -0.5)[label=below:{$A_2$}, point];
            \node(b1) at (1 * \xshift, 0.5)[label=above:{$B_1$}, point];
            \node(b2) at (1 * \xshift, -0.5)[label=below:{$B_2$}, point];
            \node(c1) at (2 * \xshift, 1)[label=above:{$C_1$}, point];
            \node(c2) at (2 * \xshift, 0)[label=below:{$C_2$}, point];
            \node(c3) at (2 * \xshift, -1)[label=below:{$C_3$}, point];
            \node(d1) at (3 * \xshift, 0.5)[label=above:{$D_1$}, point];
            \node(d2) at (3 * \xshift, -0.5)[label=below:{$D_2$}, point];
            \begin{pgfonlayer}{background}
            \draw [fill=green!60, draw=none] ([shift={(-5pt,-14pt)}]c3.south west) rectangle ([shift={(5pt,-1.5pt)}]c3.south east);
            \draw [fill=green!60, draw=none] ([shift={(-5pt,-14pt)}]d2.south west) rectangle ([shift={(5pt,-1.5pt)}]d2.south east);
            \end{pgfonlayer}
            \node(e1) at (4 * \xshift, 0)[label=right:{$E_1$}, point];

            \node(empty) at (2 * \xshift, -2) {};

            \node[draw,rectangle,fit=(a1) (a2), yscale = 1.5, inner sep=0.5em] (RA) {};
            \node[draw,rectangle,fit=(b1) (b2), yscale = 1.5, inner sep=0.5em] (RB) {};
            \node[draw,rectangle,fit=(c1) (c3), yscale = 1.3, inner sep=0.5em] (RC) {};
            \node[draw,rectangle,fit=(b1) (b2), yscale = 1.5, inner sep=0.5em] (RB) {};
            \node[draw,rectangle,fit=(d1) (d2), yscale = 1.5, inner sep=0.5em] (RD) {};

            \draw[->] (RC.north) to[bend left=40] (e1);

            \draw[->] (RA) to (RB);
            \draw[->] (RB) to (RC);
            \draw[->] (RC) to (RD);
            \draw[->] (RD) to (e1);
            \draw[->] (RA.north) to[bend left = 25] (RC.north);
            \draw[->] (RA.north) to[bend left = 50] (RD.north);
            \draw[->] (RA.north) to[bend left = 80] (e1);
            \draw[->] (RB.south) to[bend right = 50] (RD.south);
            \draw[->] (RB.south) to[bend right = 80] (e1);
        \end{tikzpicture}
        \caption{HELOC causal diagram.}
        \label{fig:heloc-dag}
    \end{subfigure}
    \hfill
    \begin{subfigure}{0.54\textwidth}
        \centering
        \includegraphics[width=0.9\linewidth]{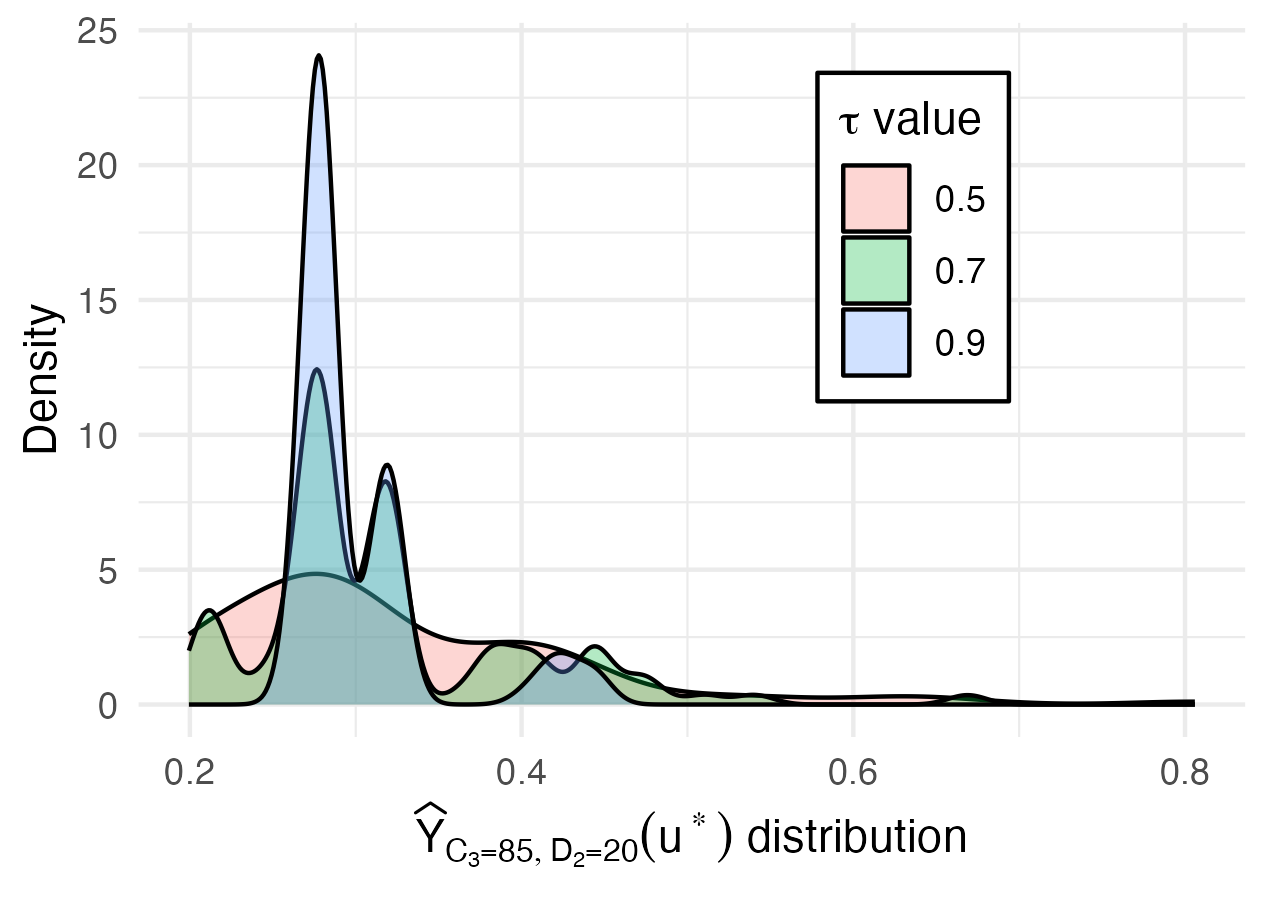}
        \caption{Spread of MC samples for $do(C_3 = 85, D_2 = 20)$.}
        \label{fig:tau-sensitivity}
    \end{subfigure}
    \caption{Causal diagram and spread of $\widehat{Y}(u^*)$ values for different values of $\tau \in \{0.7, 0.8, 0.9, 1\}$. Variables under recourse are marked in green.}
    \label{fig:alg-1}
\end{figure}

% \add{
% % TODO:
% \begin{enumerate}
%     \item Put examples into examples environment,
%     \item Add more detailed description of the HELOC dataset,
%     \item ...
% \end{enumerate}
% }
\subsection{Alg.~\ref{algo:frank-obs} -- A Sensitivity Approach.} We begin by applying Alg.~\ref{algo:frank-obs} to the HELOC dataset. The outcome variable $Y \in \{0, 1\}$ indicates whether the individual's risk for repaying a loan is considered bad or good. First, we fit a random forest \citep{breiman2001random} model to predict the outcome $Y$. We select the top 10 variables according to variable importance based on the Gini index.
We then construct the causal diagram over these 10 variables, by grouping them as follows:
\begin{enumerate}[label=(G\arabic*)]
    \item months since oldest trade ($A_1$), months in file ($A_2$),
    \item number of total trades ($B_1$), number of satisfactory trades ($B_2$),
    \item \% installment trades ($C_1$), \% trades with balance ($C_2$), \% trades never delinquent ($C_3$),
    \item months since last credit report inquiry ($D_1$), revolving balance to credit limit ratio ($D_2$),
    \item external estimate of lending risk ($E_1$).
\end{enumerate}
In particular, we assume that each group G$_i$ points to each downstream group G$_j$ for $j > i$. The causal diagram is shown in Fig.~\ref{fig:heloc-dag}, where each arrow from one cluster to another corresponds to multiple arrows in the fully expanded causal diagram (full diagram shown in Appendix~\ref{appendix:HELOC-diagram}). We then apply Alg.~\ref{algo:frank-obs} and learn the quantile functions $Q(v_i \mid \pa_i)$ using quantile regression forests \citep{meinshausen2006quantile}. We then pick individuals with percent trades never delinquent $C_3 < 70$, and revolving balance ratio $D_2 > 50$, who obtained a negative decision from $\widehat{Y}$. We pick the recourse action $do(C_3 = 85, D_2 = 20)$, and we generate $N = 100$ recourse MC samples for each individual, and with different values of $\tau$. For such individuals, we expect the recourse samples to have higher value of the predictor $\widehat{Y}$, with the spread increasing for $\tau$ values closer to $0$. A prototypical output of such a sensitivity analysis is shown in Fig.~\ref{fig:tau-sensitivity}. As expected, $\tau$ values closer to $0$ correspond to a larger spread, and from the MC samples we can also compute the probability of crossing the decision boundary, i.e., $P_{\tau}(\widehat{Y}_{R=r}(u^*) > \frac{1}{2})$, which in general depends on $\tau$.    

\subsection{Alg.~\ref{algo:frank-recourse} -- Copula Goodness-of-fit.} For applying Alg.~\ref{algo:frank-recourse}, access to recourse data samples is needed. To do so, we create a semi-synthetic HELOC dataset. To simplify the setting slightly, we further reduce the number of variables, by dropping variables $A_1, B_1, B_2,$ and $C_1$. The assumed causal diagram for the semi-synthetic (SeS) HELOC dataset is given in Fig.~\ref{fig:ses-heloc-dag}. Variables $C_2, C_3, D_2, E_1$ that are percentages are scaled into the $[0, 1]$ interval. Then, we fit the maximum likelihood estimator for the following SCM functional form:
\begin{alignat}{2}
    &V_i = A_1:  &&A_1 \gets N(\mu, \sigma^2) \label{eq:ses-heloc-1}\\
    &V_i \in \{C_2, C_3, D_2, E_1\}: \;\; &&V_i \gets \text{Beta}(\alpha(\pa_i), \beta(\pa_i)) \\
    &V_i = D_1:  &&D_1 \gets \text{Geom}(p(\pa_i)) \label{eq:ses-heloc-last}
\end{alignat}
In particular, $\mu, \sigma^2$ are fixed parameters, whereas functions $\alpha(\cdot), \beta(\cdot), p(\cdot)$ are linear functions of the respective arguments. For each variable, we then assume that the mechanisms $\mathcal{F}$ pre- and post-recourse remain the same. However, for the quantile coupling, we distinguish two versions. First, in the SeS-HELOC A we set that quantiles $q_i, q_i^*$ are coupled by Frank's copula with $\tau = \frac{2}{3}$. For the SeS-HELOC B, we set that $Q_i^* \mid Q_i = q_i \sim \text{Unif}[q_i, 1]$, i.e., the post-recourse quantile is strictly larger than the initial quantile. For this dataset, Frank's copula is not valid, and post-recourse stability is not satisfied. In Fig.~\ref{fig:p-values-distr} we show the empirical cumulative distribution function (ECDF) of the p-values for the hypothesis test for datasets A and B over $n_{\mathrm{rep}} = 100$ repetitions of $5 \cdot 10^3$ observational samples and $500$ recourse samples from individuals with a negative decision. For dataset A, where the null hypothesis of Frank's copula is true, the distribution is nearly uniform, confirming that the bootstrap procedure in Alg.~\ref{algo:frank-recourse} correctly accounts for quantile estimation error. For dataset B, the distribution is skewed heavily towards $0$, indicating that we can detect violations of the copula coupling from recourse data. 
% Additionally, in Appendix~\ref{appendix:tau-inference} we show that the true value of the $\tau$ parameter can also be inferred as described in Alg.~\ref{algo:frank-recourse}.

\begin{figure}
    \centering
    \begin{subfigure}{0.33\textwidth}
        \centering
        \begin{tikzpicture}[SCM,scale = 0.8]
        \pgfsetarrows{latex-latex};
            \newcommand{\xshift}{1.5}
            \newcommand{\yshift}{1.9}
            \node(a2) at (1 * \xshift, 0 * \yshift)[label=below:{$A_2$}, point];
            \node(c2) at (2 * \xshift, 0.5 * \yshift)[label=above:{$C_2$}, point];
            \node(c3) at (2 * \xshift, -0.5 * \yshift)[label=below:{$C_3$}, point];
            \node(d1) at (3 * \xshift, 0.5 * \yshift)[label=above:{$D_1$}, point];
            \node(d2) at (3 * \xshift, -0.5 * \yshift)[label=below:{$D_2$}, point];
            \node(e1) at (4 * \xshift, 0 * \yshift)[label=right:{$E_1$}, point];
            \begin{pgfonlayer}{background}
            \draw [fill=green!60, draw=none] ([shift={(-5pt,-14pt)}]c3.south west) rectangle ([shift={(5pt,-1.5pt)}]c3.south east);
            \draw [fill=green!60, draw=none] ([shift={(-5pt,-14pt)}]d2.south west) rectangle ([shift={(5pt,-1.5pt)}]d2.south east);
            \end{pgfonlayer}

            \node(empty) at (\xshift, -1 * \yshift) {};
            
            \path (a2) edge (c2);
            \path (a2) edge (c3);
            \path (a2) edge (d1);
            \path (a2) edge (d2);
            \path (a2) edge (e1);
            \path (c3) edge (d1);
            \path (c3) edge (d2);
            \path (c2) edge (d1);
            \path (c2) edge (d2);
            \path (d1) edge (e1);
            \path (d2) edge (e1);
            \path (c3) edge (e1);
            \path (c2) edge (e1);
        \end{tikzpicture}
        \caption{SeS HELOC causal diagram.}
        \label{fig:ses-heloc-dag}
    \end{subfigure}
    \hfill
    \begin{subfigure}{0.3\textwidth}
        \centering
        \includegraphics[width=\linewidth]{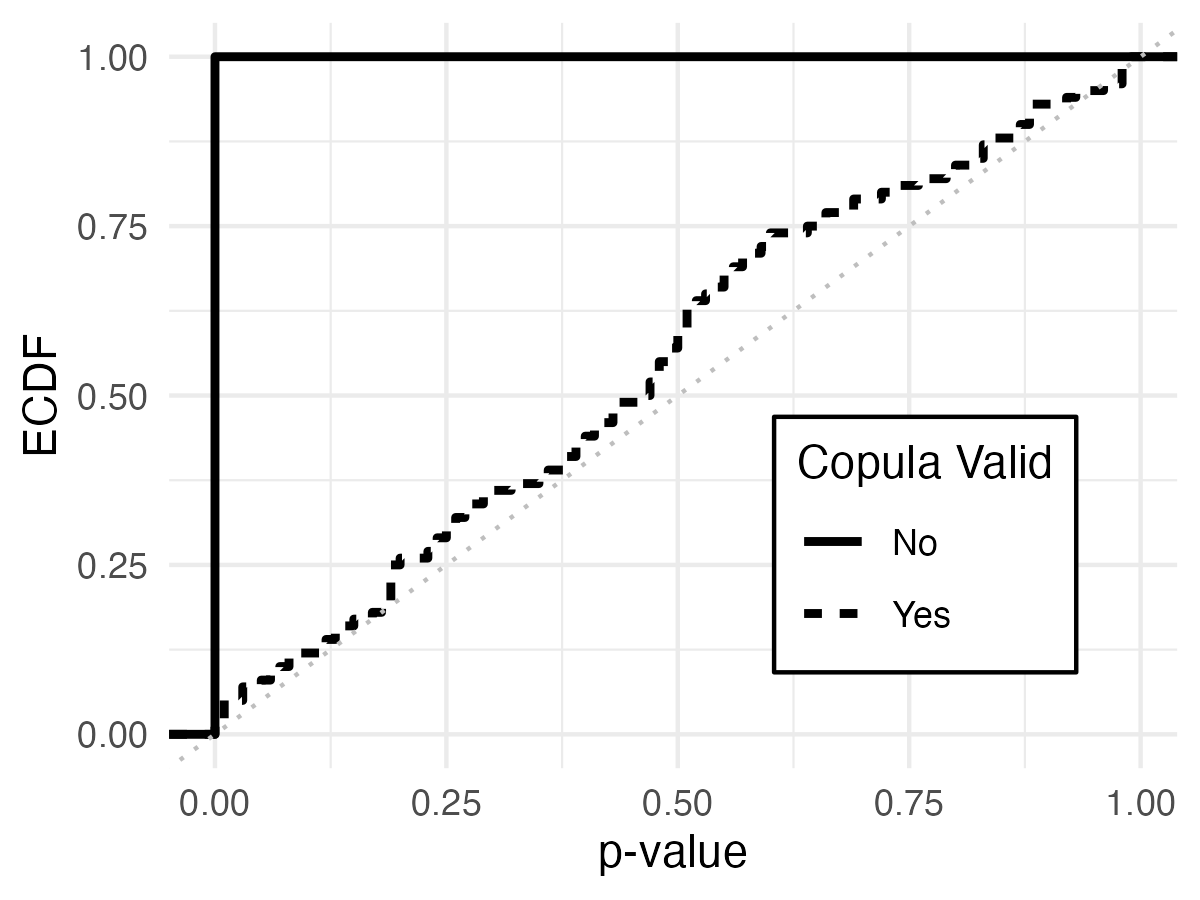}
        \vspace{-0.23in}
        \caption{ECDF of p-values.}
        \label{fig:p-values-distr}
    \end{subfigure}
    \hfill
    \begin{subfigure}{0.32\textwidth}
        \centering
        \includegraphics[width=\linewidth]{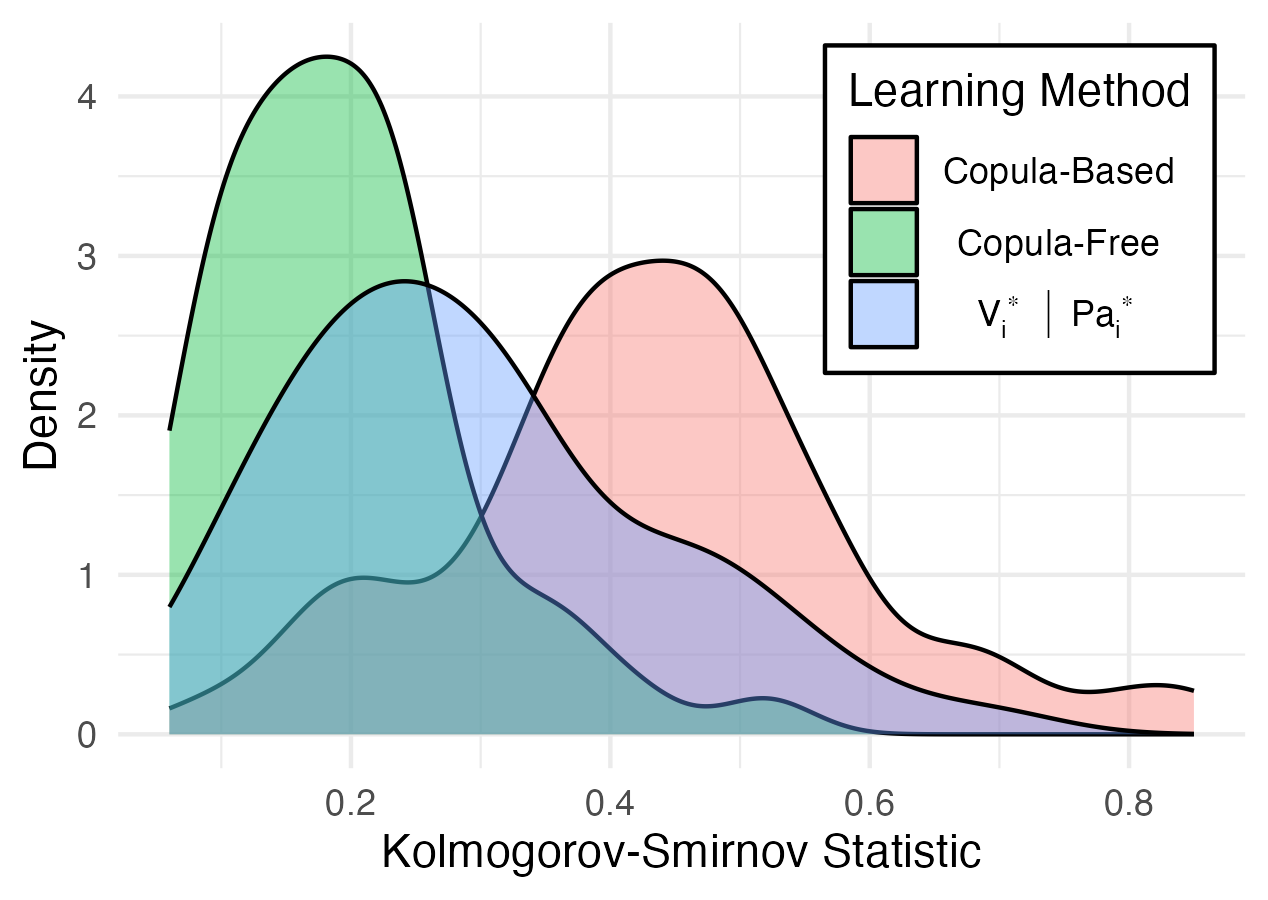}
        \vspace{-0.23in}
        \caption{Spread of KS statistics.}
        \label{fig:ks-statistics}
    \end{subfigure}
    \caption{(a) causal diagram for SeS-HELOC datasets; (b) spread of p-values for $H_0$ for cases A, B; (c) spread of Kolmogorov-Smirnov statistics for copula-free and copula-based learning for case B.}
    \label{fig:alg-3}
\end{figure}

\subsection{Alg.~\ref{algo:copula-free-learning} -- Copula-free Learning.} The final challenge we have is inferring effects of recourse actions when post-recourse stability does not hold, such as in the SeS-HELOC B dataset. In this case, we apply Alg.~\ref{algo:copula-free-learning} with $10^4$ pre-recourse samples, and assume that for all individuals with $\widehat Y = 0$ we also have post-recourse samples. For each individual in a separate test set of size $n_{\text{test}} = 50$, we compute 100 MC samples of $\widehat{Y}^{\mathrm{Alg.}\ref{algo:copula-free-learning}}_{R=r}(u^*)$ based on Alg.~\ref{algo:copula-free-learning}. As a comparison, we also compute 100 MC samples by assuming (i) Frank's copula holds true, and estimating the $\tau$ parameter, labeled $\widehat{Y}^F_{R=r}(u^*)$; (ii) by directly regressing $V_i^* \sim \Pa^*_i$, labeled $\widehat{Y}^{\text{exp}}_{R=r}(u^*)$. From the SCM itself, we compute 100 MC samples from the true underlying post-recourse distribution, labeled $\widehat{Y}_{R=r}(u^*)$. For each individual in the test set, we compute the Kolmogorov-Smirnov statistics $D$ of the MC samples $\widehat{Y}^{\mathrm{Alg.}\ref{algo:copula-free-learning}}_{R=r}(u^*)$, $\widehat{Y}^F_{R=r}(u^*)$, and $\widehat{Y}^\text{exp}_{R=r}(u^*)$ compared to $\widehat{Y}_{R=r}(u^*)$. The distributions of the statistics $D$ for the 50 individuals are shown in Fig.~\ref{fig:ks-statistics}. As the figure illustrates, the distance from the true distribution $\widehat{Y}_{R=r}(u^*)$ is smaller for $\widehat{Y}^{\mathrm{Alg.}\ref{algo:copula-free-learning}}_{R=r}(u^*)$ from Alg.~\ref{algo:copula-free-learning} than for $\widehat{Y}^\text{exp}_{R=r}(u^*)$, giving empirical verification of Thm.~\ref{thm:linear-recourse} in a non-parametric setting. Furthermore, $\widehat{Y}^\text{exp}_{R=r}(u^*)$ is still better than the copula-based estimator $\widehat{Y}^F_{R=r}(u^*)$, since the underlying conditions for copula-based learning from Def.~\ref{def:post-recourse-stability} are violated. 
%Nonetheless, Alg.~\ref{algo:copula-free-learning} can still be applied, provided sufficient amount of recourse data is available.

% \documentclass{standalone}
% \usepackage{pgfplots}
% \pgfplotsset{compat=1.17}

% \begin{document}

\section{Conclusion}
In this paper, we introduced a novel causal framework for combining observational and experimental data on the same individuals in the context of algorithmic recourse (Defs.~\ref{def:new-recourse}, \ref{def:recourse-data}, \ref{def:recourse-learning}). We discussed conditions that allow inference from observational data (Def.~\ref{def:post-recourse-stability} and Thm.~\ref{thm:prs-implies-msc}), and provided a copula-based sensitivity analysis for effects of recourse (Alg.~\ref{algo:frank-obs}). When recourse data is available, copula parameters can be inferred, and the copula model can be tested (Alg.~\ref{algo:frank-recourse}). Finally, when the copula model does not hold, a more general learning approach requiring more experimental recourse data is still feasible (Alg.~\ref{algo:copula-free-learning}). These claims were empirically validated on both real and semi-synthetic HELOC data.

\newpage
\bibliography{refs}

\appendix
\section{Conditional Likelihood of Frank's copula} \label{appendix:cond-mle}
In this appendix, we provide the expression for the conditiona likelihood of the bivariate Frank's copula. Frank's copula with parameter $\theta$ and variables $u, v$ has the following cumulative distribution function:
\begin{align}
    C_{\theta}(u,v) = -\frac{1}{\theta} \log \left[ 1 + \frac{(\exp{(-\theta u)} - 1)(\exp{(-\theta v)} - 1)}{\exp{(-\theta)} - 1} \right]. 
\end{align}
To compute the conditional likelihood $P(V = v \mid U = u)$, we need to find the joint density of $f_{\theta}(u, v)$ of $C_{\theta}(u,v)$. Note that
\begin{align}
    P(V = v \mid U = u) = \frac{P(V = v, U = u)}{P(U = u)} = P(V = v, U = u)
\end{align}
since $P(U = u) = 1$ is implied by the fact that the margins of a copula are Unif$[0, 1]$. The density of the copula $P(V = v, U = u)$ can obtained as
\begin{align} \label{eq:fuv-theta}
    f_{\theta}(u, v) = \frac{\partial^2}{\partial u \partial v} C_{\theta}(u, v) = \frac{-\theta\exp({-\theta(u+v)}) \cdot (\exp{(-\theta)} - 1)}{\left((\exp{(-\theta)} - 1) + (\exp(-\theta u) - 1) (\exp(-\theta v) - 1) \right)^2}.
\end{align}
The conditional log-likelihood given samples $\{u_i, v_i\}_{i=1:n}$, $\log L_{\theta}(\{v_i\}^n_{i=1} \mid \{u_i\}^n_{i=1})$ can thus be computed as
\begin{align}
     \sum_{i=1}^n \log \left [\frac{-\theta\exp({-\theta(u_i+v_i)}) \cdot (\exp{(-\theta)} - 1)}{\left((\exp{(-\theta)} - 1) + (\exp(-\theta u_i) - 1) (\exp(-\theta v_i) - 1) \right)^2} \right],
\end{align}
and conditional maximum likelihood estimation is performed as
\begin{align}
    \hat\theta = \argmax_{\theta} \log L_{\theta}(\{v_i\}^n_{i=1} \mid \{u_i\}^n_{i=1}). \label{eq:frank-cnd-mle}
\end{align}
We remark that the Kendall's $\tau$ parameter for Frank's copula is related to the $\theta$ parameter as:
\begin{align}
    \tau(\theta) = 1 + \frac{4}{\theta}[D_1(\theta) - 1]
\end{align}
where $D_1(\theta)$ is the Debye function $D_1(\theta) = \frac{1}{\theta} \int_{0}^\theta \frac{t}{e^t - 1} dt$.
\section{HELOC Causal Diagram} \label{appendix:HELOC-diagram}
The causal diagram of the HELOC dataset used in Sec.~\ref{sec:experiments} and the application of Alg.~\ref{algo:frank-obs} is given in Fig.~\ref{fig:heloc-full-dag}.
\begin{figure}
    \centering
    \begin{tikzpicture}[SCM,scale = 0.8]
        \pgfsetarrows{latex-latex};
            \newcommand{\xshift}{2}
            \newcommand{\yshift}{2}
            \node(a1) at (0 * \xshift, 0.5 * \yshift)[label=left:{$A_1$}, point];
            \node(a2) at (0 * \xshift, -0.5 * \yshift)[label=left:{$A_2$}, point];
            \node(b1) at (1 * \xshift, 0.5 * \yshift)[label=above:{$B_1$}, point];
            \node(b2) at (1 * \xshift, -0.5 * \yshift)[label=below:{$B_2$}, point];
            \node(c1) at (2 * \xshift, 1 * \yshift)[label=above:{$C_1$}, point];
            \node(c2) at (2 * \xshift, 0 * \yshift)[label=below:{$C_2$}, point];
            \node(c3) at (2 * \xshift, -1 * \yshift)[label=below:{$C_3$}, point];
            \node(d1) at (3 * \xshift, 0.5 * \yshift)[label=above:{$D_1$}, point];
            \node(d2) at (3 * \xshift, -0.5 * \yshift)[label=below:{$D_2$}, point];
            \node(e1) at (4 * \xshift, 0 * \yshift)[label=right:{$E_1$}, point];
            
            % G_i -> G_{i+1} edges
            \path (a1) edge (b1);
            \path (a1) edge (b2);
            \path (a2) edge (b1);
            \path (a2) edge (b2);
            \path (b1) edge (c1);
            \path (b1) edge (c2);
            \path (b1) edge (c3);
            \path (b2) edge (c1);
            \path (b2) edge (c2);
            \path (b2) edge (c3);
            \path (c1) edge (d1);
            \path (c2) edge (d1);
            \path (c3) edge (d1);
            \path (c1) edge (d2);
            \path (c2) edge (d2);
            \path (c3) edge (d2);
            \path (d1) edge (e1);
            \path (d2) edge (e1);

            % G_i -> G_{i+2} edges
            \path (a1) edge[bend left = 20] (c1);
            \path (a1) edge (c2);
            \path (a1) edge (c3);
            \path (a2) edge (c1);
            \path (a2) edge (c2);
            \path (a2) edge[bend left = -20] (c3);

            \path (b1) edge (d1);
            \path (b1) edge[bend left = 30] (d2);
            \path (b2) edge[bend left = -30] (d1);
            \path (b2) edge (d2);

            \path (c1) edge[bend left = 33] (e1);
            \path (c2) edge (e1);
            \path (c3) edge[bend left = -33] (e1);

            % G_i -> G_{i+3} edges
            \path (b1) edge (e1);
            \path (b2) edge (e1);

            \path (a1) edge[bend left = 22] (d1);
            \path (a1) edge[bend right = 10] (d2);
            \path (a2) edge[bend right = -10] (d1);
            \path (a2) edge[bend left = -22] (d2);

            % G_i -> G_{i+4} edges
            \path (a1) edge[bend left = 70] (e1);
            \path (a2) edge[bend left = -70] (e1);
        \end{tikzpicture}
    \caption{HELOC full causal diagram.}
    \label{fig:heloc-full-dag}
\end{figure}
\section{Thm.~\ref{thm:prs-implies-msc} Proof} \label{appendix:theorem-proofs}
\begin{proof}[Thm.~\ref{thm:prs-implies-msc} proof]
    To show that
    \begin{align}
        f_{V_i}(\pa_i, U_i) \overset{d}{=} f_{V_i}(\pa_i, U_i^*) \quad \forall\; i, \pa_i \text{ fixed},
    \end{align}
    we show that $U_i^*$ and $U_i$ follow the same distributions according to the theorem assumptions, i.e.,
    \begin{align}
        U_i^* \overset{d}{=} U_i. \label{eq:u-i-post-recourse-stab}
    \end{align}
    From Eq.~\ref{eq:u-i-post-recourse-stab}, the main claim follows, since $U_i^* \overset{d}{=} U_i \implies f(U_i^*) \overset{d}{=} f(U_i)$ for any function $f$. Now, we expand the distribution $P(U_i^*)$ as follows:
    \begin{align}
        P(U_i^* = u_i) &= P(U^{*(v)}_{i} = u^{(v)}_{i},  U^{*(f)}_{i} = u^{(f)}_{i}) \\
        &= P(U^{*(f)}_{i} = u^{(f)}_{i}) P(U^{*(v)}_{i} = u^{(v)}_{i} \mid  U^{*(f)}_{i} = u^{(f)}_{i}) \\
        &= P(U^{(f)}_{i} = u^{(f)}_{i}) P(U^{*(v)}_{i} = u^{(v)}_{i} \mid  U^{*(f)}_{i} = u^{(f)}_{i}) \quad (\text{Eq. }\ref{eq:u-1-invariant}) \\
        &= P(U^{(f)}_{i} = u^{(f)}_{i}) P(U^{(v)}_{i} = u^{(v)}_{i} \mid  U^{(f)}_{i} = u^{(f)}_{i}) \quad (\text{Eq. }\ref{eq:u-2-stable}) \\
        &= P(U^{(v)}_{i} = u^{(v)}_{i},  U^{(f)}_{i} = u^{(f)}_{i}) \\
        &= P(U_i = u_i).
    \end{align}
    Therefore, we have that $U_i^* \overset{d}{=} U_i$ and the claim follows.
\end{proof}
\section{Linear Recourse -- Thm.~\ref{thm:linear-recourse}} \label{appendix:linear-recourse}
In this section, we prove Thm.~\ref{thm:linear-recourse}. We first introduce the required notation and explain the theorem statement in more detail.

\subsection{Setting and Notation}\label{sec:setting}
We observe data from two linear models, referred to as \emph{pre-recourse} (subscript~$0$) and \emph{post-recourse}, sharing a common set of $n$ units. The data are split into a training set~(A) of size~$n_a$ and a test set~(B) of size~$n_b$, with $n = n_a + n_b$.

\paragraph{Pre-recourse model.}
The pre-recourse response $Y_0$ satisfies
\begin{equation}\label{eq:pre}
    Y_{0} = X_{0} \beta_0 + \eps_{0},
\end{equation}
where $X_{0} \in \mathbbm{R}^{n\times d}$ are covariates, $\beta_0 \in \mathbbm{R}^{d}$ is the coefficient vector, and $\eps_{0}$ is Gaussian noise.

\paragraph{Post-recourse model.}
The post-recourse response satisfies
\begin{equation}\label{eq:post}
    Y = X \beta + \eps,
\end{equation}
where $X \in \mathbbm{R}^{n \times d}$ are covariates, $\beta \in \mathbbm{R}^d$ is the coefficient vector, and $\eps$ is Gaussian noise.

\paragraph{Noise correlation.}
The post-recourse noise decomposes as
\begin{equation}\label{eq:noise-corr}
    \eps = \rho\,\eps_{0} + \sqrt{1-\rho^2}\,\beps,
\end{equation}
where $\rho \in (0,1)$ is the correlation between $\eps$ and $\eps_0$, and $\beps$ is an independent noise term.

\paragraph{Matrix notation.}
We write $X_0^a \in \mathbbm{R}^{n_a \times d}$, $X^a \in \mathbbm{R}^{n_a \times d}$ for the training covariate matrices, and similarly with superscript~$b$ for the test set. We write $\eps_0^a, \eps^a, \beps^a \in \mathbbm{R}^{n_a}$ for the corresponding noise vectors. Define the projection matrix
\begin{equation}
    P_0^a := X_0^a(X_0^{a\top}X_0^a)^{-1}X_0^{a\top}.
\end{equation}

\paragraph{Two-stage procedure.} \phantom{text} \vspace{0.1in} \\ 
\noindent \textbf{Stage~I.} Fit OLS on the pre-recourse training data:
    \begin{equation}\label{eq:stage1}
        \hbeta_0 = (X_0^{a\top} X_0^a)^{-1}X_0^{a\top} Y_0^a, \qquad \heps_0^a = Y_0^a - X_0^a \hbeta_0 = (I - P_0^a)\eps_0^a.
    \end{equation} \\
    \textbf{Stage~II.} Form the augmented design matrix $\bX^a = (X^a \;\; \heps_0^a) \in \mathbbm{R}^{n_a \times (d+1)}$ and fit OLS:
    \begin{equation}\label{eq:stage2}
        \hbeta_c = (\bX^{a\top}\bX^a)^{-1}\bX^{a\top}Y^a.
    \end{equation}

\paragraph{True augmented coefficient.}
Substituting~\eqref{eq:noise-corr} into~\eqref{eq:post} and using the decomposition $\eps_0 = \heps_0 + (\eps_0 - \heps_0)$, the post-recourse response can be written as
\begin{equation}\label{eq:augmented-model}
    Y = X \beta + \rho\,\heps_{0} + N,
\end{equation}
where the residual noise is
\begin{equation}\label{eq:N-def}
    N := \rho\,(\eps_{0} - \heps_{0}) + \sqrt{1-\rho^2}\,\beps.
\end{equation}
In vector form, $Y^a = \bX^a \beta_c + N^a$ with the true augmented coefficient
\begin{equation}\label{eq:betac}
    \beta_c := \begin{pmatrix}\beta \\ \rho\end{pmatrix} \in \mathbbm{R}^{d+1}.
\end{equation}

\paragraph{Prediction on test set.}
Let the training data $X_0^a, X^a, \eps_0^a, \beps^a$ be denoted by $\dt$.
For the test set, define $\heps_0^b = Y_0^b - X_0^b\hbeta_0$ and $\bX^b = (X^b \;\; \heps_0^b) \in \mathbbm{R}^{n_b \times (d+1)}$. The EMSPE of the two-stage estimator is
\begin{equation}
    R_{\mathrm{aug}} := \frac{1}{n_b}\,\ex\bigl[\|Y^b - \bX^b\hbeta_c\|^2 \mid \dt \bigr],
\end{equation}
and the baseline EMSPE using standard OLS on the post-recourse model alone is
\begin{equation}
    R_{\mathrm{std}} := \frac{1}{n_b}\,\ex\bigl[\|Y^b - X^b\hbeta\|^2 \mid \dt \bigr], \qquad \hbeta = (X^{a\top}X^a)^{-1}X^{a\top}Y^a.
\end{equation}
All expectations are conditional on $\dt$.

%% ============================================================
%% PART II: THEOREM AND PROOF
%% ============================================================

\subsection{Main Result}\label{sec:theorem}

\begin{theorem}[EMSPE reduction]\label{thm:linear-recourse-restate}
    Assuming $n_a \gg d$, we have
    \begin{equation}\label{eq:main-bound}
        R_{\mathrm{aug}} \;=\; (1 - \rho^2)\,\sigma^2 \;+\; \ordp{\sqrt{\frac{d}{n_a}}}.
    \end{equation}
    For $n_a$ sufficiently large, $R_{\mathrm{aug}} < R_{\mathrm{std}}$.
\end{theorem}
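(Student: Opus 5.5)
We prove the bound by decomposing the test error into an irreducible noise part and an estimation part, then controlling the latter using standard OLS concentration together with the Gaussian structure. Write the test response via the augmented representation~\eqref{eq:augmented-model},
\begin{equation*}
    Y^b = \bX^b\beta_c + N^b, \qquad N^b = \rho(\eps_0^b - \heps_0^b) + \sqrt{1-\rho^2}\,\beps^b,
\end{equation*}
so that $Y^b - \bX^b\hbeta_c = N^b - \bX^b(\hbeta_c - \beta_c)$. On the test set, $\eps_0^b - \heps_0^b = X_0^b(\hbeta_0 - \beta_0)$. This is a function of training data and $X_0^b$ only, and it has size $\ordp{\sqrt{d/n_a}}$ per coordinate, since $\hbeta_0 - \beta_0 = (X_0^{a\top}X_0^a)^{-1}X_0^{a\top}\eps_0^a$ has squared norm of order $\sigma^2 d/n_a$ by $X_0^{a\top}X_0^a/n_a \to \Sigma_0$. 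Expanding the square gives
\begin{equation*}
    \tfrac{1}{n_b}\ex\|N^b\|^2 \;-\; \tfrac{2}{n_b}\ex\bigl[N^{b\top}\bX^b(\hbeta_c-\beta_c)\bigr] \;+\; \tfrac{1}{n_b}\ex\|\bX^b(\hbeta_c-\beta_c)\|^2 .
\end{equation*}
In the first term, $\beps^b$ is independent of everything else and contributes exactly $(1-\rho^2)\sigma^2$. The $\rho X_0^b(\hbeta_0-\beta_0)$ piece contributes $\rho^2(\hbeta_0-\beta_0)^\top\Sigma_0(\hbeta_0-\beta_0) = \ordp{d/n_a}$. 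The cross term between them vanishes because $\beps^b$ has mean zero.

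The main step is bounding $\hbeta_c - \beta_c$ on the training data. Here $\hbeta_c - \beta_c = (\bX^{a\top}\bX^a)^{-1}\bX^{a\top}N^a$ with $N^a = \rho P_0^a\eps_0^a + \sqrt{1-\rho^2}\,\beps^a$, because $\eps_0^a - \heps_0^a = P_0^a\eps_0^a$. First we show $\bX^{a\top}\bX^a/n_a$ converges to the block matrix $\mathrm{diag}(\Sigma, \sigma^2)$, up to off-diagonal blocks $X^{a\top}\heps_0^a/n_a$. For these blocks, $X^a$ may be correlated with $X_0^a$, but $\eps_0^a$ is independent of covariates, so they are $\ordp{\sqrt{d/n_a}}$ entrywise. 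The diagonal entry satisfies $\|\heps_0^a\|^2/n_a = \sigma^2(1-d/n_a) + \ordp{n_a^{-1/2}}$. Hence the inverse is bounded in probability once $n_a \gg d$.

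Next we bound $\bX^{a\top}N^a$. The $\beps^a$ part is a standard OLS noise term of norm $\ordp{\sqrt{d n_a}}$. The $P_0^a\eps_0^a$ part is the \emph{main obstacle}: it is correlated with the regressor $\heps_0^a$. However, $\heps_0^{a\top}P_0^a\eps_0^a = \eps_0^{a\top}(I-P_0^a)P_0^a\eps_0^a = 0$ exactly, and $X^{a\top}P_0^a\eps_0^a$ has norm $\ordp{\sqrt{n_a}\,d}$ crudely, or better, $\|P_0^a\eps_0^a\|^2 = \ordp{d}$. Using Cauchy--Schwarz and $\|X^a\|_{op} = \ordp{\sqrt{n_a}}$ gives $\ordp{\sqrt{n_a d}}$. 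Together, $\|\hbeta_c-\beta_c\| = \ordp{\sqrt{d/n_a}}$.

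Finally, the third term is $(\hbeta_c-\beta_c)^\top \ex[\bar x^b \bar x^{b\top}\mid\dt](\hbeta_c-\beta_c)$. The conditional second moment of a test row is bounded: it equals $\Sigma$, $\sigma^2$ plus $\ordp{d/n_a}$ corrections, and a cross-covariance. So this term is $\ordp{d/n_a}$. The cross term involves $\ex[\beps^b{}^\top\bX^b]=0$, since $\heps_0^b$ depends only on $\eps_0^b$, $X_0^b$ and the training data. What remains is $\rho\,\ex[(X_0^b(\hbeta_0-\beta_0))^\top\bX^b](\hbeta_c-\beta_c)$, which by Cauchy--Schwarz is at most $\ordp{\sqrt{d/n_a}}\cdot\ordp{\sqrt{d/n_a}}$. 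It is in any case bounded by $\ordp{\sqrt{d/n_a}}$, which is the rate claimed. Collecting terms gives~\eqref{eq:main-bound}.

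For the comparison, the standard computation gives $R_{\mathrm{std}} = \sigma^2 + (\hbeta-\beta)^\top\Sigma(\hbeta-\beta) \ge \sigma^2$, since $\eps$ has variance $\sigma^2$. Since $\rho\neq0$ gives a gap $\rho^2\sigma^2>0$ while the error terms vanish in probability, $R_{\mathrm{aug}}<R_{\mathrm{std}}$ with probability tending to one as $n_a$ grows.
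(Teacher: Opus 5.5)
Your argument is correct. It uses the paper's three-way split into the estimation term $T_1$, the noise term $T_2$ (handled essentially as in the paper) and the cross term $T_3$, but it controls $T_1$ and $T_3$ differently, and the change in $T_3$ matters.

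\emph{Term $T_1$.} The paper factors $\|\hbeta_c-\beta_c\|^2_{\tSig}\le\mathcal{K}\cdot\mathcal{F}$ and proves the finer statement $\mathcal{K}=1+O_p(\sqrt{d/n_a})$ by perturbing the normalized inverse Gram matrix. You only need three facts: that inverse is bounded in operator norm, $\tSig$ is bounded in operator norm, and $\|\bX^{a\top}N^a\|=O_p(\sqrt{n_a d})$. This is more elementary and suffices for $T_1=O_p(d/n_a)$. The exact orthogonality $\heps_0^{a\top}P_0^a\eps_0^a=0$ is a nice observation, though $\|\bX^a\|_{op}\|P_0^a\eps_0^a\|$ already gives the rate.

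\emph{Term $T_3$.} The paper uses $T_3\le 2\sqrt{T_1T_2}$. Since $T_2=\Theta(1)$, this Cauchy--Schwarz step is the only source of the $\sqrt{d/n_a}$ rate. You instead note that the $\sqrt{1-\rho^2}\,\beps^b$ component of $N^b$ is mean zero and independent of $\bX^b$ given $\dt$, so it drops out of the cross term exactly. What remains is a product of two $O_p(\sqrt{d/n_a})$ factors. Your proof therefore actually yields $R_{\mathrm{aug}}=(1-\rho^2)\sigma^2+O_p(d/n_a)$. This implies the stated bound. It also shows that the paper's remark that the augmented excess loss ``vanishes more slowly'' than the standard one comes from its Cauchy--Schwarz step, not from the estimator.

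\emph{Comparison with $R_{\mathrm{std}}$.} Your exact identity $R_{\mathrm{std}}=\sigma^2+\|\hbeta-\beta\|_\Sigma^2\ge\sigma^2$ makes the final comparison cleaner than the paper's rate statement.

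\emph{One small tightening.} For the off-diagonal Gram block, you need the whole vector $X^{a\top}\heps_0^a/n_a$ to have Euclidean norm $O_p(\sqrt{d/n_a})$. Conditionally on the covariates it is $N(0,\sigma^2X^{a\top}(I-P_0^a)X^a/n_a^2)$, so its entries are $O_p(n_a^{-1/2})$ and the norm bound holds. An entrywise $O_p(\sqrt{d/n_a})$ bound, as you phrase it, would only give $O_p(d/\sqrt{n_a})$ for the block.
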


\begin{proof}[Thm.~\ref{thm:linear-recourse}]
Let $\dt := \{X_0^a, X^a, \eps_0^a, \beps^a \}$ denote the training data.
We start by expanding $R_{\mathrm{aug}}$. First, note that
    \begin{align}
        Y^b &= X^b\beta + \eps^b = X^b\beta + \rho \heps^b_0 + \rho (\eps^b_0 - \heps^b_0) + \sqrt{1-\rho^2} \beps^b \\
        &= \bX^b \beta_c + \underbrace{\rho (\eps^b_0 - \heps^b_0) + \sqrt{1-\rho^2} \beps^b}_{:= N^b} = \bX^b \beta_c + N^b.
    \end{align}
    Therefore, we have that
    \begin{align}
        \frac{1}{n_b} \ex [\| Y^b - \bX^b \hat \beta_c \|^2 \mid \dt] &= \frac{1}{n_b} \ex [ \| \bX^b (\beta_c - \hat\beta_c) + N^b \| ^ 2 \mid \dt] \\
        &= \underbrace{\frac{1}{n_b} \ex [ \| \bX^b (\beta_c - \hat\beta_c) \|^2 \mid \dt]}_{T_1} + \underbrace{\frac{1}{n_b} \ex [\| N^b \|^ 2 \mid \dt]}_{T_2} \\
        &\quad + \underbrace{\frac{2}{n_b} \ex [N^{b\top} \bX^b (\beta_c - \hat\beta_c) \mid \dt]}_{T_3}.
    \end{align}
    We bound the terms $T_1, T_2$ with high probability, specifically showing that 
    \begin{align}
        T_1 = \sigma^2 \ordp{\frac{d}{n_a}}
    \end{align}
    in Lem.~\ref{lemma:term1}, and showing that 
    \begin{align}
        T_2 = (1-\rho^2)\sigma^2 + \ordp{\frac{d}{n_a}}
    \end{align}
    in Lem.~\ref{lemma:term2}. After bounding $T_1, T_2$, for $T_3$ we have that
    \begin{align}
        T_3 &\overset{CS}{\leq} \frac{2}{n_b}  \sqrt{\ex [ \| N^b \|^2 \mid \dt] \cdot \ex [ \| \bX^b (\hbeta_c -\beta_c) \|^2 \mid \dt]} \\
        &= 2 \sqrt{ \frac{1}{n_b}\ex [ \| N^b \|^2 \mid \dt] \cdot \frac{1}{n_b}\ex [ \| \bX^b (\hbeta_c -\beta_c) \|^2 \mid \dt]} \\
        &= 2 \sqrt{T_1T_2} = \ordp{\sqrt{\frac{d}{n_a}}},
    \end{align}
    and the claim follows.
\end{proof}

\begin{lemma}[Term $T_2$] \label{lemma:term2}
    \begin{align}
        T_2 = (1-\rho^2)\sigma^2 + \ordp{\frac{d}{n_a}}.
    \end{align}
\end{lemma}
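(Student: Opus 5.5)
We need to compute $T_2 = \frac{1}{n_b}\ex[\|N^b\|^2 \mid \dt]$, where $N^b = \rho(\eps_0^b - \heps_0^b) + \sqrt{1-\rho^2}\,\beps^b$. The plan is to write $\eps_0^b - \heps_0^b$ explicitly in terms of the Stage~I estimation error and then compute the conditional second moment directly.

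First, by definition $\heps_0^b = Y_0^b - X_0^b\hbeta_0 = \eps_0^b - X_0^b(\hbeta_0 - \beta_0)$. Hence
\begin{align}
    \eps_0^b - \heps_0^b = X_0^b(\hbeta_0 - \beta_0), \qquad \hbeta_0 - \beta_0 = (X_0^{a\top}X_0^a)^{-1}X_0^{a\top}\eps_0^a,
\end{align}
and the second quantity is fixed given $\dt$. The test quantities $X_0^b$ and $\beps^b$ are independent of $\dt$ and of each other, and $\beps^b$ has mean zero. So the cross term vanishes after conditioning:
\begin{align}
    T_2 = (1-\rho^2)\frac{1}{n_b}\ex\|\beps^b\|^2 + \frac{\rho^2}{n_b}\ex\bigl[\|X_0^b(\hbeta_0-\beta_0)\|^2 \mid \dt\bigr] = (1-\rho^2)\sigma^2 + \rho^2(\hbeta_0-\beta_0)^\top\Sigma_0(\hbeta_0-\beta_0).
\end{align}
Here we use that the rows of $X_0^b$ are i.i.d.\ with second moment $\Sigma_0$.

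The remaining step, and the only real content, is to show that $(\hbeta_0-\beta_0)^\top\Sigma_0(\hbeta_0-\beta_0) = \ordp{\sigma^2 d/n_a}$.

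1. Conditional on $X_0^a$, the error satisfies $\hbeta_0 - \beta_0 \sim N\bigl(0, \sigma^2 (X_0^{a\top}X_0^a)^{-1}\bigr)$.
2. The quadratic form therefore has conditional mean $\sigma^2\,\mathrm{tr}\bigl(\Sigma_0 (X_0^{a\top}X_0^a)^{-1}\bigr)$.
3. Write $X_0^{a\top}X_0^a/n_a = \hat\Sigma_0$. Standard Gaussian covariance concentration gives $\|\Sigma_0^{-1/2}\hat\Sigma_0\Sigma_0^{-1/2} - I\|_{op} = \ordp{\sqrt{d/n_a}}$ when $n_a \gg d$. So the eigenvalues of $\Sigma_0^{1/2}\hat\Sigma_0^{-1}\Sigma_0^{1/2}$ lie in $[1/2, 2]$ with high probability, and the trace is at most $2d/n_a$. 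Alternatively, one can use the inverse-Wishart mean $\frac{d}{n_a-d-1}$.
4. Markov's inequality applied conditionally on $X_0^a$ converts the bound on the conditional mean into $\ordp{\sigma^2 d/n_a}$. The same conclusion follows from noting that the standardized form is $\sigma^2\chi^2_d/n_a$ up to the eigenvalue factor.

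Multiplying by $\rho^2 \le 1$ gives $T_2 = (1-\rho^2)\sigma^2 + \ordp{d/n_a}$.

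The main obstacle is this concentration step: the lower bound on the smallest eigenvalue of the sample covariance, which is needed to control $(X_0^{a\top}X_0^a)^{-1}$. I would cite a standard result for Gaussian designs (e.g.\ Vershynin's covariance estimation bound) rather than prove it. Everything else is bookkeeping with independence between the training and test sets.
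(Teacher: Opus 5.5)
Your proposal is correct. The reduction is the same as the paper's: $\eps_0^b-\heps_0^b = X_0^b(\hbeta_0-\beta_0)$, the cross term with $\beps^b$ vanishes by independence, and the $\beps^b$ term gives $(1-\rho^2)\sigma^2$. The only nontrivial piece left is $\rho^2(\hbeta_0-\beta_0)^\top\Sigma_0(\hbeta_0-\beta_0)$, and here your route differs from the paper's.

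The paper whitens with $\tilde X_0^a = X_0^a\Sigma_0^{-1/2}$. It then notes that $z = (\tilde X_0^{a\top}\tilde X_0^a)^{-1/2}\tilde X_0^{a\top}\eps_0^a \sim N(0,\sigma^2 I_d)$, independent of the Wishart matrix $\tilde X_0^{a\top}\tilde X_0^a$. This identifies the quadratic form exactly as a scaled Hotelling statistic, $\sigma^2 n_a^{-1}T^2(d,n_a) \sim \sigma^2\frac{d}{n_a-d+1}F_{d,n_a-d+1}$.

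You instead bound the conditional mean $\sigma^2\,\mathrm{tr}\bigl(\Sigma_0(X_0^{a\top}X_0^a)^{-1}\bigr)$. You use either a high-probability eigenvalue event from covariance concentration or the inverse-Wishart mean $d/(n_a-d-1)$, and then apply Markov's inequality.

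What each route buys:
\begin{itemize}
\item The paper's route gives the exact law, so sharp constants and tails. It relies on mean-zero Gaussian rows and on $z$ being independent of the Wishart matrix.
\item Your route gives only the rate. It needs just mean-zero noise with covariance $\sigma^2 I$ independent of $X_0^a$, plus a covariance-concentration bound, so it would carry over to sub-Gaussian designs.
\end{itemize}
Both yield $T_2 = (1-\rho^2)\sigma^2 + \ordp{d/n_a}$.
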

\begin{proof}[Lem.~\ref{lemma:term2}]
    Since $N^b = \rho (\eps^b_0 - \heps^b_0) + \sqrt{1-\rho^2} \beps^b$, we can write $\frac{1}{n_b} \ex [\| N^b \|^ 2 \mid \dt]$ as 
    \begin{align}
         \underbrace{\frac{\rho^2}{n_b}  \ex [\| \eps_0^b - \heps_0^b \|^ 2 \mid \dt]}_{T_{2(i)}} + \underbrace{\frac{(1-\rho^2)}{n_b}  \ex [\| \beps^b \|^ 2 \mid \dt]}_{T_{2(ii)}} + \underbrace{\frac{2\rho\sqrt{1-\rho^2}}{n_b} \ex [ (\eps_0^b - \heps_0^b)^\top \beps^b \mid \dt]}_{T_{2(iii)}}.
    \end{align}
    Note that $\beps^b$ is independent of $(\eps_0^b - \heps_0^b)$, so $T_{2(iii)}$ vanishes. $T_{2(ii)}$ is the expectation of a $\frac{(1-\rho^2)\sigma^2}{n_b} \chi_{n_b}^2$ distribution, equal to $(1-\rho^2) \sigma^2$. $T_{2(i)}$ satisfies
    \begin{align}
        \frac{\rho^2}{n_b}  \ex [\| \eps_0^b - \heps_0^b \|^ 2 \mid \dt] &= \frac{\rho^2}{n_b}  \ex [\| X_0^b (\beta_0 - \hbeta_0) \|^ 2 \mid \dt] \\ 
        &= \rho^2 (\beta_0 - \hbeta_0)^\top \Sigma_0 (\beta_0 - \hbeta_0) \\
        &= \rho^2 \eps_0^{a\top} X_0^a(X_0^{a\top} X_0^a)^{-1} \Sigma_0^{1/2} \Sigma_0^{1/2}  (X_0^{a\top} X_0^a)^{-1} X_0^{a\top} \eps_0^a \\
        &= \rho^2 ((\tilde X_0^{a\top} \tilde X_0^a)^{-1/2} \tilde X_0^{a\top} \eps_0^{a})^\top (\tilde X_0^{a\top} \tilde X_0^a)^{-1} (\tilde X_0^{a\top} \tilde X_0^a)^{-1/2} \tilde X_0^{a\top} \eps_0^{a},
    \end{align}
    where $\tilde X_0^a := X_0^a \Sigma_0^{-1/2}$. Now, note that $(\tilde X_0^{a\top} \tilde X_0^a)^{-1}$ follows an inverse-Wishart $IW(I_d, n_a)$ distribution, while $(\tilde X_0^{a\top} \tilde X_0^a)^{-1/2} \tilde X_0^{a\top} \eps_0^{a}$ is a Gaussian with covariance $\sigma^2 I_{d}$, so by recognizing Hotelling's $T^2$ distribution we have
    \begin{align}
        T_{2(i)} \sim \rho^2 \sigma^2 \frac{1}{n_a} T^2(d, n_a) \sim \rho^2 \sigma^2 \frac{d}{n_a - d +1} F_{d, n_a - d +1}
    \end{align}
    which is $\rho^2 \sigma^2 \ordp{\frac{d}{n_a}}$ as $n_a \to \infty$. The claim follows by putting together terms $T_{2(i)}$ and $T_{2(ii)}$ together.
\end{proof}

\begin{lemma}[Term $T_1$] \label{lemma:term1}
    Term $T_1$ satisfies
    \begin{align}
        T_1 = \ordp{\frac{d}{n_a}}.
    \end{align}
\end{lemma}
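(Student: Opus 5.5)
We need $T_1 = \frac{1}{n_b}\ex[\|\bX^b(\beta_c-\hbeta_c)\|^2\mid\dt] = \ordp{d/n_a}$. The plan has three pieces: bound the estimation error $\hbeta_c-\beta_c$, bound the conditional second moment of the test design $\bX^b$, and combine.

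\textbf{Step 1: reduce to an operator-norm bound.} Conditional on $\dt$, the estimate $\hbeta_c$ is fixed. The rows of $\bX^b$ are i.i.d.\ given $\dt$, since $\heps_0^b = \eps_0^b - X_0^b(\hbeta_0-\beta_0)$ depends on the training data only through $\hbeta_0$. Hence
\begin{align}
T_1 = (\hbeta_c-\beta_c)^\top \bar\Sigma\,(\hbeta_c-\beta_c), \qquad \bar\Sigma := \ex\bigl[\bar x^b \bar x^{b\top}\mid \dt\bigr],
\end{align}
where $\bar x^b = (x^b, \hat e^b_0)$ is a single test row. Its blocks are:
\begin{enumerate}
\item $\ex[x x^\top]=\Sigma$;
\item $\ex[\hat e_0^2\mid\dt]=\sigma^2+(\hbeta_0-\beta_0)^\top\Sigma_0(\hbeta_0-\beta_0)=\sigma^2+\ordp{d/n_a}$, by the computation already done in Lem.~\ref{lemma:term2};
\item $\ex[x\,\hat e_0\mid\dt] = -\ex[x x_0^\top](\hbeta_0-\beta_0)$, which is $\ordp{\sqrt{d/n_a}}$.
\end{enumerate}
Therefore $\|\bar\Sigma\|_{op}=\ordp{1}$, and it suffices to show $\|\hbeta_c-\beta_c\|^2=\ordp{d/n_a}$.

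\textbf{Step 2: decompose the estimation error.} From \eqref{eq:augmented-model}, $\hbeta_c-\beta_c=(\bX^{a\top}\bX^a)^{-1}\bX^{a\top}N^a$. Two facts are needed.

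\emph{Lower bound on the Gram matrix.} $\lambda_{\min}\bigl(\frac1{n_a}\bX^{a\top}\bX^a\bigr)$ must be bounded away from $0$ with high probability.
\begin{itemize}
\item $\frac1{n_a}X^{a\top}X^a\to\Sigma$ by standard Wishart concentration, with error $\ordp{\sqrt{d/n_a}}$.
\item $\frac1{n_a}\|\heps_0^a\|^2=\frac1{n_a}\eps_0^{a\top}(I-P_0^a)\eps_0^a\sim\sigma^2\chi^2_{n_a-d}/n_a\to\sigma^2$.
\item The cross term $\frac1{n_a}X^{a\top}(I-P_0^a)\eps_0^a$ is $\ordp{\sqrt{d/n_a}}$, because $\eps_0^a$ is independent of $(X_0^a,X^a)$.
\end{itemize}
So the Gram matrix converges to $\mathrm{diag}(\Sigma,\sigma^2)\succ0$.

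\emph{Bound on the score term.} $\|\bX^{a\top}N^a\|^2=\ordp{n_a d}$. This is the main obstacle, because $N^a=\rho P_0^a\eps_0^a+\sqrt{1-\rho^2}\,\beps^a$ is not independent of $\heps_0^a$. I would handle the two parts of $N^a$ separately.
\begin{itemize}
\item The $\beps^a$ part is independent of $\bX^a$, so its conditional second moment is $\sigma^2\operatorname{tr}(\bX^{a\top}\bX^a)=\ordp{n_a d}$.
\item For the $P_0^a\eps_0^a$ part, use $\heps_0^{a\top}P_0^a\eps_0^a=\eps_0^{a\top}(I-P_0^a)P_0^a\eps_0^a=0$. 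The last coordinate therefore vanishes exactly.
\item What remains is $X^{a\top}P_0^a\eps_0^a$, with norm at most $\|X^a\|_{op}\,\|P_0^a\eps_0^a\|$. Here $\|P_0^a\eps_0^a\|^2\sim\sigma^2\chi^2_d$, and $\|X^a\|_{op}=\ordp{\sqrt{n_a}}$ when $n_a\gg d$. This gives $\ordp{n_a d}$ as well.
\end{itemize}

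\textbf{Step 3: combine.} From Step 2,
\begin{align}
\|\hbeta_c-\beta_c\|^2\le\lambda_{\min}^{-2}\,n_a^{-2}\,\ordp{n_a d}=\ordp{d/n_a}.
\end{align}
Multiplying by $\|\bar\Sigma\|_{op}=\ordp{1}$ from Step 1 gives $T_1=\ordp{d/n_a}$. This is consistent with the factor $\sigma^2$ stated in the main proof.
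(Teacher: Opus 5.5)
Your proof is correct, but it uses a coarser and more elementary factorization than the paper's. Both arguments start from $T_1=\|\hbeta_c-\beta_c\|^2_{\tSig}$, with the same test-design second-moment matrix, and from $\hbeta_c-\beta_c=(\bX^{a\top}\bX^a)^{-1}\bX^{a\top}N^a$.

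The paper then works in the whitened geometry of the training design. It writes $T_1\le\mathcal{K}\cdot\mathcal{F}$ with $\mathcal{K}=\|\tSig^{1/2}\hat\Sigma^{-1}_{\bX^a}\tSig^{1/2}\|$ and $\mathcal{F}=\frac{1}{n_a}N^{a\top}H^aN^a$. It shows $\mathcal{K}=1+\ordp{\sqrt{d/n_a}}$ by a perturbation argument around a block-diagonal surrogate for $\tSig$. It then controls $\mathcal{F}$ as a Gaussian quadratic form in the hat matrix $H^a$, using that $P_0^a\eps_0^a$ is independent of $(I-P_0^a)\eps_0^a$.

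You instead bound $T_1\le\|\tSig\|\,\lambda_{\min}^{-2}\,\|\frac{1}{n_a}\bX^{a\top}N^a\|^2$. This only requires the smallest eigenvalue of the training Gram matrix to stay bounded away from zero. That uses the same three ingredients as the paper's bound on $\mathcal{K}$: Wishart concentration, the $\chi^2_{n_a-d}$ residual norm, and the smallness of $\frac{1}{n_a}X^{a\top}\heps_0^a$. You then bound the score vector directly. The exact identity $\heps_0^{a\top}P_0^a\eps_0^a=0$ removes the dependence between $N^a$ and the augmented design without any appeal to Gaussian independence.

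What your route buys is brevity: no matrix square roots, and no separate comparison of $\tSig$ with its block-diagonal part. The price is sharpness. Your bound is roughly $\kappa^2\sigma^2(d+1)/n_a$, where $\kappa$ is the condition number of $\mathrm{diag}(\Sigma,\sigma^2)$, and it changes if the design columns are rescaled. The paper's factorization is invariant to such reparametrization. Because $\mathcal{K}\to1$, it pins the leading term to the in-sample projection error $\mathcal{F}$, which is at most about $\sigma^2(d+1)/n_a$ regardless of conditioning. For the stated rate $\ordp{d/n_a}$, with covariance constants treated as fixed as the paper does, both arguments suffice.
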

\begin{proof}
    Note that
    \begin{align}
        T_1 &= (\hbeta_c - \beta_c)^\top  \ex[ \frac{1}{n_b} \bX^{b\top} \bX^b \mid \dt] (\hbeta_c - \beta_c) \\
        &= \|\hbeta_c - \beta_c\|_{\tSig}^2
    \end{align}
    where $\|x\|_\Sigma := \sqrt{x^\top \Sigma x}$, and the matrix $\tSig$ is the covariance matrix of $\bX^b = \begin{pmatrix}
        X^b & \heps_0^b
    \end{pmatrix}$,
    given by
    \begin{align}
        \begin{pmatrix}
            \Sigma & \Sigma_{X, X_0} (\beta_0 - \hbeta_0) \\
            (\beta_0 - \hbeta_0)^\top \Sigma_{X_0, X} & \sigma^2 + \| \beta_0 - \hbeta_0\|^2_{\Sigma_0}
        \end{pmatrix}.
    \end{align}
    Since $\hbeta_c - \beta_c = \frac{1}{n_a} \hat\Sigma^{-1}_{\bX^a} \bX^{a\top} N^a$, we have that
    \begin{align}
        \|\hbeta_c - \beta_c\|_{\tSig}^2 &= \frac{1}{n_a^2} N^{a\top} \bX^a \hat\Sigma^{-1}_{\bX^a} \tSig \hat\Sigma^{-1}_{\bX^a} \bX^{a\top} N^a \\
        &= \| \frac{1}{n_a} \tSig^{1/2} \hat\Sigma^{-1/2}_{\bX^a} \hat\Sigma^{-1/2}_{\bX^a} \bX^{a\top} N^a \|^2 \\
        &\leq \| \tSig^{1/2} \hat\Sigma^{-1/2}_{\bX^a} \|^2 \| \frac{1}{n_a}\hat\Sigma^{-1/2}_{\bX^a} \bX^{a\top} N^a \|^2 \\
        &= \underbrace{\| \tSig^{1/2}\hat\Sigma^{-1}_{\bX^a} \tSig^{1/2} \|}_{\mathcal{K}}  \underbrace{\| \frac{1}{n_a} \hat\Sigma^{-1/2}_{\bX^a} \bX^{a\top} N^a \|^2}_{\mathcal{F}},
    \end{align}
    where the last line uses the fact that $\|A^\top \|^2 = \| A^\top A \|$, applied to $A =  \hat\Sigma^{-1/2}_{\bX^a} \tSig^{1/2}$. In Lem.~\ref{lem:cov-mis} we show that $\mathcal{K} = 1 + \ordp{\sqrt{\frac{d}{n_a}}}$, while in Lem.~\ref{lem:fixed-design} we show that $\mathcal{F} = \ordp{\frac{d}{n_a}}$, which together imply that $T_1 = \ordp{\frac{d}{n_a}}$.
\end{proof}

\begin{lemma}[Covariance Mismatch] \label{lem:cov-mis}
    \begin{align}
        \mathcal{K} = 1 + \ordp{\sqrt{\frac{d}{n_a}}}.
    \end{align}
\end{lemma}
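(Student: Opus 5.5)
The plan is to turn $\mathcal{K}$ into a statement about how close the Gram matrix $\hat\Sigma_{\bX^a} = \frac{1}{n_a}\bX^{a\top}\bX^a$ is to $\tSig$, after whitening. Set $M := \tSig^{-1/2}\hat\Sigma_{\bX^a}\tSig^{-1/2}$. Then $\tSig^{1/2}\hat\Sigma^{-1}_{\bX^a}\tSig^{1/2} = M^{-1}$, so $\mathcal{K} = 1/\lambda_{\min}(M)$. It therefore suffices to show
\begin{align}
\|M - I\| = \ordp{\sqrt{d/n_a}}.
\end{align}
This gives $\lambda_{\min}(M) \geq 1 - \ordp{\sqrt{d/n_a}}$, and hence $\mathcal{K} = 1 + \ordp{\sqrt{d/n_a}}$ once $n_a \gg d$. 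Rather than whitening by the random $\tSig$ directly, I would compare both matrices to the deterministic reference $\Sigma_\infty := \mathrm{diag}(\Sigma, \sigma^2)$. This is legitimate because $\Sigma \succ 0$ and $\sigma^2 > 0$, so $\Sigma_\infty$ has eigenvalues bounded away from $0$. The two bounds to prove are
\begin{align}
\|\Sigma_\infty^{-1/2}(\tSig - \Sigma_\infty)\Sigma_\infty^{-1/2}\| = \ordp{\sqrt{d/n_a}}, \qquad \|\Sigma_\infty^{-1/2}(\hat\Sigma_{\bX^a} - \Sigma_\infty)\Sigma_\infty^{-1/2}\| = \ordp{\sqrt{d/n_a}}.
\end{align}
Given these, a standard perturbation step transfers the bound from $\Sigma_\infty$-whitening to $\tSig$-whitening: the first bound makes $\tSig^{-1/2}\Sigma_\infty^{1/2}$ have norm $1+\ordp{\sqrt{d/n_a}}$. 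That yields $\|M-I\| = \ordp{\sqrt{d/n_a}}$.

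The bound for $\tSig$ uses its explicit block form from the proof of Lem.~\ref{lemma:term1}.
\begin{itemize}
\item The off-diagonal block is $\Sigma_{X,X_0}(\beta_0-\hbeta_0)$. Its norm is at most a constant times $\|\beta_0-\hbeta_0\|_{\Sigma_0}$.
\item The corner entry differs from $\sigma^2$ by $\|\beta_0-\hbeta_0\|^2_{\Sigma_0}$.
\item The proof of Lem.~\ref{lemma:term2} already shows $\|\beta_0-\hbeta_0\|^2_{\Sigma_0} = \ordp{d/n_a}$, via Hotelling's $T^2$.
\end{itemize}
So the off-diagonal block is $\ordp{\sqrt{d/n_a}}$ and the corner is $\sigma^2 + \ordp{d/n_a}$, which is the first bound.

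For $\hat\Sigma_{\bX^a}$ I would bound its three blocks separately.
\begin{itemize}
\item The top-left block is $\frac{1}{n_a}X^{a\top}X^a$, a scaled Wishart matrix. Standard Gaussian covariance concentration (e.g.\ Davidson--Szarek / Vershynin) gives $\|\Sigma^{-1/2}\frac{1}{n_a}X^{a\top}X^a\Sigma^{-1/2} - I_d\| = \ordp{\sqrt{d/n_a}}$.
\item The corner is $\frac{1}{n_a}\heps_0^{a\top}\heps_0^a = \frac{1}{n_a}\eps_0^{a\top}(I-P_0^a)\eps_0^a$, which is distributed as $\frac{\sigma^2}{n_a}\chi^2_{n_a-d}$. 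It therefore equals $\sigma^2(1 + \ordp{d/n_a + n_a^{-1/2}})$.
\item The off-diagonal block is $\frac{1}{n_a}X^{a\top}(I-P_0^a)\eps_0^a$. Conditionally on $(X_0^a, X^a)$, it is Gaussian with covariance $\frac{\sigma^2}{n_a^2}X^{a\top}(I-P_0^a)X^a \preceq \frac{\sigma^2}{n_a}\cdot\frac{1}{n_a}X^{a\top}X^a$. By the top-left bound, $\frac{1}{n_a}X^{a\top}X^a = \ordp{1}$ in operator norm. Its squared norm therefore has conditional expectation $\ordp{d/n_a}$, and Markov's inequality gives an $\ordp{\sqrt{d/n_a}}$ bound on the block.
\end{itemize}
Assembling the blocks by bounding the operator norm of a $2\times 2$ block matrix through its blocks gives the second bound.

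The main obstacle is the dependence structure inside $\bX^a$. The residual column $\heps_0^a$ is not an i.i.d.\ sample from the test distribution. Its entries are correlated through $P_0^a$, and $X^a$ is correlated with $X_0^a$, which defines that projection. So off-the-shelf i.i.d.\ covariance concentration cannot be applied to $\bX^a$ as a whole. The route I would take to handle this is the one sketched above: condition on the covariates $(X_0^a, X^a)$, so that everything involving $\eps_0^a$ becomes Gaussian with an explicit covariance. Only the top-left block then needs genuine matrix concentration, and it involves the i.i.d.\ rows of $X^a$ alone. I would also check that the $d/n_a$ shrinkage in the training corner (from $\chi^2_{n_a-d}$) and the $d/n_a$ inflation in the test corner are both of lower order than $\sqrt{d/n_a}$, so neither affects the stated rate.
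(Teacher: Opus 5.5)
Your proposal is correct and follows essentially the paper's route: you compare both $\tSig$ and $\hat\Sigma_{\bX^a}$ to a block-diagonal reference, control the off-diagonal part of $\tSig$ through $\|\beta_0-\hbeta_0\|_{\Sigma_0}=\ordp{\sqrt{d/n_a}}$ from Lem.~\ref{lemma:term2}, and bound the three whitened blocks of $\hat\Sigma_{\bX^a}$ using Wishart concentration, the $\chi^2_{n_a-d}$ law of $\|\heps_0^a\|^2$, and a conditional Gaussian argument for the cross term. The differences are minor: you use the deterministic reference $\mathrm{diag}(\Sigma,\sigma^2)$ instead of the paper's data-dependent $\mathrm{diag}(\Sigma,\tau^2)$, you handle the cross block by conditioning on $(X_0^a,X^a)$ and using Gaussianity of $\eps_0^a$ (the paper conditions on $(X_0^a,\eps_0^a)$ and uses the Gaussian law of $X^a$ given $X_0^a$), and your $\|M-I\|$ formulation gives the two-sided statement, whereas the paper only bounds $\mathcal{K}$ from above.
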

\begin{proof}
    We define the matrix $\bSig$ as
    \begin{align}
        \bSig = \begin{pmatrix}
            \Sigma & 0 \\
            0 & \sigma^2 + \| \beta_0 - \hbeta_0\|^2_{\Sigma_0}
        \end{pmatrix},
    \end{align}
    and denote $\tau^2 := \sigma^2 + \| \beta_0 - \hbeta_0\|^2_{\Sigma_0}$, and $E := \tSig - \bSig$. Note that we have
    \begin{align}
        \mathcal{K} &= \| {\tSig}^{1/2}\hSigma_{\bX^a}^{-1}{\tSig}^{1/2} \| = \| \bigl({\tSig}^{1/2}{\bSig}^{-1/2}\bigr)\bigl({\bSig}^{1/2}\hSigma_{\bX}^{-1}{\bSig}^{1/2}\bigr)\bigl({\bSig}^{-1/2}{\tSig}^{1/2}\bigr) \| \\
        &\leq \| {\tSig}^{1/2}{\bSig}^{-1/2} \| ^ 2 \underbrace{\| {\bSig}^{1/2}\hSigma_{\bX}^{-1}{\bSig}^{1/2} \| }_{\mathcal{K}_0}.
    \end{align} 
    Further, we have that
    \begin{align}
        \| {\tSig}^{1/2}{\bSig}^{-1/2} \| ^ 2 &= \| {\bSig}^{-1/2} {\tSig} {\bSig}^{-1/2} \| \\
        &= \| I + {\bSig}^{-1/2} E {\bSig}^{-1/2} \| \\
        &\leq 1 + \| {\bSig}^{-1/2} E {\bSig}^{-1/2}\| 
    \end{align}
    and the matrix ${\bSig}^{-1/2} E {\bSig}^{-1/2}$ has only off-diagonal non-zero entries 
    $\frac{1}{\tau}\Sigma^{-1/2}\Sigma_{X,X_0}(\beta_0 - \hbeta_0)$, for which we can write
    \begin{align}
        \frac{\|\Sigma^{-1/2}\Sigma_{X,X_0}(\beta_0 - \hbeta_0)\|}{\tau} &=  \frac{\|\Sigma^{-1/2}\Sigma_{X,X_0} \Sigma_0^{-1/2} \Sigma_0^{1/2} (\beta_0 - \hbeta_0)\|}{\tau} \\
        &\leq \frac{\|\Sigma^{-1/2}\Sigma_{X,X_0} \Sigma_0^{-1/2}\|}{\tau} \cdot\|\beta_0 - \hbeta_0\|_{\Sigma_0} \\
        &\leq \frac{\|\Sigma^{-1/2}\Sigma_{X,X_0} \Sigma_0^{-1/2}\|}{\sigma} \cdot \|\beta_0 - \hbeta_0\|_{\Sigma_0}  = \ordp{\sqrt{\frac{d}{n_a}}}
    \end{align}
    since $\frac{\|\Sigma^{-1/2}\Sigma_{X,X_0} \Sigma_0^{-1/2}\|}{\sigma}$ is a constant and $\|\beta_0 - \hbeta_0\|_{\Sigma_0}$ is $\ordp{\sqrt{\frac{d}{n_a}}}$ as shown in the proof of Lem.~\ref{lemma:term2} (term $T_{2(i)}$). Therefore, it only remains to show that $\mathcal{K}_0 = 1 + \ordp{\sqrt{\frac{d}{n_a}}}$, from which follows that that $\mathcal{K} = 1+\ordp{\sqrt{\frac{d}{n_a}}}$. 

    For bounding $\mathcal{K}_0$, let $\hat S := \frac{1}{n_a} X^{a\top} \heps_0^a, \hat\tau^2 = \frac{1}{n_a} \| \heps_0^a \|^2$. Using this notation, we can write
    \begin{align}
        {\bSig}^{1/2}\hSigma_{\bX}^{-1}{\bSig}^{1/2} = (I + \Delta)^{-1}, \quad \Delta := \begin{pmatrix}
            \Sigma^{-1/2} (\hSigma_{X^a} - \Sigma) \Sigma^{-1/2} & \Sigma^{-1/2} \hat S / \tau \\
            \hat S^\top \Sigma^{-1/2} / \tau & (\hat \tau^2 - \tau^2) / \tau^2
        \end{pmatrix}.
    \end{align}
    If $\| \Delta \| < 1,$ then 
    \begin{align}
        \mathcal{K}_0 = \| (I+\Delta)^{-1}\| \leq \frac{1}{ \lambda_{min} (I + \Delta)} \leq (1 - \| \Delta\|)^{-1},
    \end{align}
    so we want to show that $\| \Delta \|$ is $\ordp{\sqrt{\frac{d}{n_a}}}$, which implies $\mathcal{K}_0 = \ordp{\sqrt{\frac{d}{n_a}}}$. Write $\Delta = \bigl(\begin{smallmatrix}A & B \\ B^\top & D\end{smallmatrix}\bigr)$, and by using the fact that $\| \Delta \| \leq \max (\| A \| + \| B \|, \|B \| + |D|)$, it is enough to bound the norm of each of the submatrices.

    \emph{Block $(1, 1)$ -- matrix $A$.} For the matrix $A = \Sigma^{-1/2} (\hSigma_{X^a} - \Sigma) \Sigma^{-1/2}$, we have that
    \begin{align}
        \| \Sigma^{-1/2} (\hSigma_{X^a} - \Sigma) \Sigma^{-1/2} \| \leq \|  \Sigma^{-1}\| \cdot \| \hSigma_{X^a} - \Sigma \|  = \ordp{\sqrt{\frac{d}{n_a}}}
    \end{align}
    since $\| \hSigma_{X^a} - \Sigma \| = \ordp{\sqrt{\frac{d}{n_a}}}$ using a Wishart concentration inequality \citep{vershynin2020high}.

    \emph{Block $(2, 2)$ -- scalar $D$.} Note that $\hat\tau^2 = \frac{1}{n_a} \eps_0^{a\top} (I-P_0^a) \eps_0^a$, which conditional on $X_0^a$ follows a $\frac{\sigma^2}{n_a} \chi^2_{n_a-d}$ distribution, and this equals $\sigma^2 + \ordp{\frac{1}{\sqrt{n_a}}}$ since $\frac{\chi^2_k}{k} = 1 + \ordp{\frac{1}{\sqrt{k}}}$ \citep{laurent2000adaptive}. From before, we obtained that $\tau^2 = \sigma^2 + \| \hbeta_0-\beta_0\|^2_{\Sigma_0} = \sigma^2 + \ordp{{\frac{d}{n_a}}}$. Putting everything together
    \begin{align}
        \frac{|\hat\tau^2 - \tau^2|}{\tau^2} \leq \frac{1}{\sigma^2} |\hat\tau^2 - \tau^2| = \ordp{\sqrt{\frac{1}{n_a}}}.
    \end{align}

    \emph{Off-diagonal block $(1, 2)$ -- vector $B$.} Finally, we need to bound $\| \Sigma^{-1/2} \hat S / \tau \|$. Let $\Sigma_{X, X_0} := Cov(X_i^a, X_{0,i}^a)$. Note that
    \begin{align}
        X_i^a \mid X_{0,i}^a \sim N( \Sigma_{X, X_0} \Sigma_0^{-1} X_{0, i}, \Sigma_{X\mid X_0}),
    \end{align}
    where $\Sigma_{X\mid X_0} = \Sigma - \Sigma_{X,X_0}\Sigma_0^{-1}\Sigma_{X_0,X}$. Therefore, we have that
    \begin{align}
        \frac{1}{n_a} X^{a\top} \heps_0^a = \frac{1}{n_a} \sum_{i=1}^{n_a} \heps_{0,i}^a X_i^{a},
    \end{align}
    showing that conditional of $X_0^a, \eps_0^a$, $\hat S$ is a Gaussian, with a mean
    \begin{align}
        \ex [\hat S \mid X_0^a, \eps_0^a] = \frac{1}{n_a} \Sigma_{X,X_0}\Sigma_0^{-1} X_0^{a\top} (I-P_{0}) \eps_0^a = 0,
    \end{align}
    since $I-P_{0}$ projects onto the orthogonal complement of the column space of $X_0^a$. The covariance of $\hat S$ (conditional on $X_0^a, \eps_0^a$) is 
    \begin{align}
        \var\left(\frac{1}{n_a} \sum_{i=1}^{n_a} \heps_{0,i}^a X_i^{a} \mid X_0^a, \eps_0^a\right) &= \frac{1}{n_a^2} \sum_{i=1}^{n_a} {(\heps_{0,i}^a)}^2 \var\left(X_i^{a} \mid X_{0,i}^a\right) \\
        &= \Sigma_{X\mid X_0} \frac{1}{n_a^2} \| \heps_0^a \|^2 = \frac{\hat\tau^2}{n_a} \Sigma_{X\mid X_0}.
    \end{align}
    Putting together, $\tilde S := \frac{1}{\tau}\Sigma^{-1/2} \hat S \mid X_0 \sim N(0, \frac{\hat\tau^2}{\tau^2 n_a} \Sigma^{-1/2} \Sigma_{X\mid X_0}\Sigma^{-1/2})$. Therefore, 
    \begin{align} \label{eq:tilde-S}
        \| \tilde S \|^2 \overset{d}{=} \frac{\hat\tau^2}{\tau^2 n_a} \sum_{i=1}^d \lambda_i Z_i^2,
    \end{align}
    where $\lambda_i$ are the eigenvalues of $\Sigma^{-1/2} \Sigma_{X\mid X_0}\Sigma^{-1/2}$ and $Z_i$ are i.i.d. Gaussians. Note that $\Sigma_{X\mid X_0} = \Sigma - \Sigma_{X,X_0}\Sigma_0^{-1}\Sigma_{X_0,X} \preccurlyeq \Sigma$, which implies $ \Sigma^{-1/2} \Sigma_{X\mid X_0}\Sigma^{-1/2} \preccurlyeq I_d$, meaning that all eigenvalues $\lambda_i$ in Eq.~\ref{eq:tilde-S} are smaller than $1$, so that $\| \tilde S \|^2$ is stochastically smaller than $\frac{\hat\tau^2}{\tau^2 n_a} \chi^2_d$. Since $\frac{\hat\tau^2}{\tau} = 1 + \ordp{\sqrt{\frac{1}{n_a}}}$ and $\chi^2_d = d + \ordp{\sqrt{d}}$, we have that $\| \tilde S\|^2 = \ordp{\frac{d}{n_a}}$, meaning that $\| S \| = \ordp{\sqrt{\frac{d}{n_a}}}$, completing the proof.
\end{proof}

\begin{lemma}[Fixed Design Loss] \label{lem:fixed-design}
    The fixed design loss, denoted by $\mathcal{F}$, satisfies
    \begin{align}
        \| \frac{1}{n_a} \hat\Sigma^{-1/2}_{\bX^a} \bX^{a\top} N^a \|^2 = \ordp{?}
    \end{align}
\end{lemma}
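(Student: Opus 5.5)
The plan is to recognize $\mathcal{F}$ as a normalized squared projection of the noise vector $N^a$. Then I bound each piece of $N^a$ separately using its independence structure with respect to the augmented design $\bX^a$. The target is $\mathcal{F} = \ordp{\frac{d}{n_a}}$, which is what the proof of Lem.~\ref{lemma:term1} requires.

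\emph{Step 1 (projection form).} With $\hat\Sigma_{\bX^a} = \frac{1}{n_a}\bX^{a\top}\bX^a$, we have
\begin{align}
    \mathcal{F} = \frac{1}{n_a^2} N^{a\top}\bX^a \hat\Sigma^{-1}_{\bX^a}\bX^{a\top}N^a = \frac{1}{n_a}\, N^{a\top} P_{\bX}^a N^a = \frac{1}{n_a}\|P_{\bX}^a N^a\|^2,
\end{align}
where $P_{\bX}^a := \bX^a(\bX^{a\top}\bX^a)^{-1}\bX^{a\top}$ is the orthogonal projection onto the $(d+1)$-dimensional column space of $\bX^a$. This needs $\bX^a$ to have full column rank almost surely. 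That holds for $n_a > d+1$, because $X^a$ is Gaussian and $\heps_0^a = (I-P_0^a)\eps_0^a$ is a nondegenerate Gaussian vector in an $(n_a-d)$-dimensional subspace, independent of $X^a$ given $X_0^a$.

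\emph{Step 2 (split the noise).} From Eq.~\eqref{eq:N-def} and Eq.~\eqref{eq:stage1}, $\eps_0^a - \heps_0^a = P_0^a\eps_0^a$, so $N^a = \rho P_0^a \eps_0^a + \sqrt{1-\rho^2}\,\beps^a$. Using $\|a+b\|^2 \le 2\|a\|^2 + 2\|b\|^2$ and the fact that projections are contractions, we get
\begin{align}
    \|P_{\bX}^a N^a\|^2 \le 2\rho^2\|P_0^a\eps_0^a\|^2 + 2(1-\rho^2)\|P_{\bX}^a\beps^a\|^2 .
\end{align}
This split avoids the main obstacle. The first noise component is \emph{not} independent of $\bX^a$, since $\heps_0^a$ is built from $\eps_0^a$, so a direct chi-square argument on $P_{\bX}^a N^a$ fails. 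Dropping $P_{\bX}^a$ on that component sidesteps the dependence entirely. If a sharper constant were wanted, one could exploit $P_0^a\eps_0^a \perp \heps_0^a$, but this is unnecessary for the rate.

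\emph{Step 3 (chi-square bounds).} Conditional on $X_0^a$, $\|P_0^a\eps_0^a\|^2 \sim \sigma^2\chi^2_d$, because $P_0^a$ has rank $d$ and $\eps_0^a \sim N(0,\sigma^2 I)$ is independent of $X_0^a$. For the second term, $\beps^a$ is independent of $(X^a, X_0^a, \eps_0^a)$, and hence of $\bX^a$. So conditional on $\bX^a$, $\|P_{\bX}^a\beps^a\|^2 \sim \sigma^2\chi^2_{d+1}$. Combining,
\begin{align}
    \mathcal{F} \le \frac{2\sigma^2}{n_a}\left(\rho^2\chi^2_d + (1-\rho^2)\chi^2_{d+1}\right).
\end{align}
Since $\chi^2_k = k + \ordp{\sqrt{k}}$ \citep{laurent2000adaptive}, this gives $\mathcal{F} = \sigma^2\,\ordp{\frac{d}{n_a}}$.

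Together with Lem.~\ref{lem:cov-mis}, this yields $T_1 = \ordp{\frac{d}{n_a}}$ as claimed in Lem.~\ref{lemma:term1}.
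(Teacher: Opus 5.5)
Your proof is correct and reaches the intended rate $\mathcal{F}=\ordp{\frac{d}{n_a}}$. It treats the two delicate pieces more simply than the paper does.

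The paper keeps the exact expansion $\mathcal{F}=\mathcal{F}_1+\mathcal{F}_2+\mathcal{F}_3$ of $\frac{1}{n_a}N^{a\top}H^aN^a$, where $H^a$ is your $P_{\bX}^a$. For $\mathcal{F}_1=\frac{\rho^2}{n_a}\|H^aP_0^a\eps_0^a\|^2$, it conditions on $(X^a,X_0^a,\heps_0^a)$. It then uses that $P_0^a\eps_0^a$ is independent of $\heps_0^a=(I-P_0^a)\eps_0^a$. This lets $H^a$ be treated as fixed, so $\mathcal{F}_1$ becomes a weighted chi-square with weights in $[0,1]$. The cross term $\mathcal{F}_3$ is controlled separately by a conditional Gaussian tail bound on $w^\top\beps^a$.

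You absorb the cross term with $\|a+b\|^2\le 2\|a\|^2+2\|b\|^2$. You then drop $H^a$ on the $P_0^a\eps_0^a$ component, because projections are contractions. This removes the conditional-independence step altogether. That step is exactly where the dependence of $H^a$ on $\eps_0^a$ matters, and the paper's stated covariance for $H^aP_0^a\eps_0^a$ there should read $\sigma^2H^aP_0^aH^a$ rather than $\sigma^2P_0^aH^a$.

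Your route costs a factor of $2$ and the exact distribution of $\mathcal{F}_2$. Neither matters for the $\ordp{\cdot}$ claim. The paper's route is the one to refine for sharper constants, for instance to show the cross term is of lower order $\ordp{\frac{\sqrt d}{n_a}}$.

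On the full-rank step, your dimension argument is immediate once $n_a>2d$. In that case $\mathrm{range}(I-P_0^a)$ has dimension $n_a-d>d$ and cannot lie inside the column space of $X^a$. That already covers the $n_a\gg d$ regime of the theorem.
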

\begin{proof}
    Note that
    \begin{align}
        \| \frac{1}{n_a} \hat\Sigma^{-1/2}_{\bX^a} \bX^{a\top} N^a \|^2  = \frac{1}{n_a} N^{a\top} \bX^a (\bX^{a\top}\bX^a)^{-1} \bX^{a\top} N^a = \frac{1}{n_a} N^{a\top} H_a N^a,  
    \end{align}
    where $H_a$ is the projection onto the column space of $\bX^a$. By definition $N^a = \rho P^a_0 \eps_0^a + \sqrt{1-\rho^2} \beps^a$, meaning that
    \begin{align}
        \mathcal{F} = \underbrace{\frac{\rho^2}{n_a} \eps_0^{a\top} P^a_0 H^a P^a_0 \eps_0^a}_{\mathcal{F}_1} + \underbrace{\frac{1-\rho^2}{n_a} \beps^{a\top} H^a \beps^a}_{\mathcal{F}_2} + \underbrace{\frac{\rho\sqrt{1-\rho^2}}{n_a}  \eps_0^{a\top} P^a_0 H^a \beps^a}_{\mathcal{F}_3}.
    \end{align}
    Conditional on $X^a, X_0^a, \eps_0^a$ (meaning $\bX^a$ fixed), note that $\mathcal{F}_2$ follows a $\frac{1-\rho^2}{n_a} \chi^2_{d+1}$ distribution, meaning it is $\ordp{\frac{d}{n_a}}$.
    For term $\mathcal{F}_1$, we note that conditional on $X^a, X_0^a,\heps_0^a$, both $H^a, P_0^a$ are fixed. Since $\heps_0^a = (I-P_0^a)\eps_0^a$ is independent of $P_0^a \eps_0^a$, the conditioning on $\heps_0^a$ does not affect $P_0^a \eps_0^a$, which follows a distribution $N(0, \sigma^2 P_0^a)$, meaning that $H^a P_0^a \eps_0^a$ follows a $N(0, \sigma^2 P_0^a H^a)$ distribution. Therefore, we conclude that 
    \begin{align}
        \mathcal{F}_1 \overset{d}{=} \frac{\rho^2\sigma^2}{n_a} \sum_{i=1}^{d+1} \lambda_i Z_i^2,
    \end{align}
    where $\lambda_i \in [0, 1]$ are the top $d+1$ eigenvalues of $P_0^a H^a$, and $Z_i$ are independent. Therefore, $\mathcal{F}_1$ is $\ordp{\frac{d}{n_a}}$. Finally, for the term $\mathcal{F}_3$, we condition on $X^a, X_0^a,\heps_0^a$, meaning that $H^a, P_0^a$ are fixed. Denote by $w := \frac{1}{n_a}H^a P_0^a \eps_0^a$. Conditional on $w$, we have that $w^\top \beps^a \sim N(0, \sigma^2 \| w \| ^2)$, so 
    \begin{align} \label{eq:f3-tail}
        P ( | w^\top \beps^a| > \sigma \|w\| \sqrt{2\log{\frac{1}{\delta}}} ) \leq 2\delta.
    \end{align}
    As we argued before, $\| H^a P_0^a \eps_0^a \|^2$ is stochastically dominated by a $\chi^2_{d+1}$, so that $\| w\| $ is $\ordp{\frac{\sqrt{d}}{n_a}}$, and by using this fact with Eq.~\ref{eq:f3-tail} and applying a union bound, we have that $\mathcal{F}_3$ is $\ordp{\frac{\sqrt{d}}{n_a}}$. Putting everything together, we obtain that $\mathcal{F}$ is $\ordp{\frac{d}{n_a}}$.
\end{proof}

\end{document}